%%%%%%%%%%%%%%%%%%%%%%% file template.tex %%%%%%%%%%%%%%%%%%%%%%%%%
%
% This is a general template file for the LaTeX package SVJour3
% for Springer journals.          Springer Heidelberg 2010/09/16
%
% Copy it to a new file with a new name and use it as the basis
% for your article. Delete % signs as needed.
%
% This template includes a few options for different layouts and
% content for various journals. Please consult a previous issue of
% your journal as needed.
%
%%%%%%%%%%%%%%%%%%%%%%%%%%%%%%%%%%%%%%%%%%%%%%%%%%%%%%%%%%%%%%%%%%%
%
% First comes an example EPS file -- just ignore it and
% proceed on the \documentclass line
% your LaTeX will extract the file if required
%\begin{filecontents*}{example.eps}
%!PS-Adobe-3.0 EPSF-3.0
%%BoundingBox: 19 19 221 221
%%CreationDate: Mon Sep 29 1997
%%Creator: programmed by hand (JK)
%%EndComments
%gsave
%newpath
%  20 20 moveto
%  20 220 lineto
%  220 220 lineto
%  220 20 lineto
%closepath
%2 setlinewidth
%gsave
%  .4 setgray fill
%grestore
%stroke
%grestore
%\end{filecontents*}
%
\RequirePackage{fix-cm}
\documentclass[smallextended]{svjour3}       % onecolumn (second format)
\smartqed  % flush right qed marks, e.g. at end of proof
\usepackage{amsmath}
\usepackage{graphicx,psfrag,epsf}
\usepackage{enumerate}

\usepackage{amssymb}
\usepackage{mathtools}

\usepackage{enumitem}
\usepackage{enumerate}
\usepackage{amssymb}
\usepackage{amsmath, amsthm}
\usepackage{caption}
\usepackage{subcaption}
\usepackage[backend=bibtex,style=alphabetic, citestyle=authoryear]{biblatex}
\usepackage{stackengine, graphicx, xcolor}
\usepackage{tikz}
\usepackage{pgfplots}
\usepackage{helvet,url, hyperref}
\addbibresource{example.bib}

\newtheorem{assumption}{Assumption}

\theoremstyle{definition}
\theoremstyle{example}
\theoremstyle{remark}

\DeclarePairedDelimiter\ceil{\lceil}{\rceil}
\DeclarePairedDelimiter\floor{\lfloor}{\rfloor}

%
% \usepackage{mathptmx}      % use Times fonts if available on your TeX system
%
% insert here the call for the packages your document requires
%\usepackage{latexsym}
% etc.
%
% please place your own definitions here and don't use \def but
% \newcommand{}{}
%
% Insert the name of "your journal" with
% \journalname{myjournal}
%
\begin{document}

\title{Minimum discrepancy principle strategy for choosing $k$ in $k$-NN regression%\thanks{Grants or other notes
%about the article that should go on the front page should be
%placed here. General acknowledgments should be placed at the end of the article.}
}
%\subtitle{Do you have a subtitle?\\ If so, write it here}

%\titlerunning{Short form of title}        % if too long for running head

\author{Yaroslav Averyanov         \and
        Alain Celisse %etc.
}

%\authorrunning{Short form of author list} % if too long for running head

\institute{Corresponding author:\\ Y. Averyanov \at
              Inria MODAL project-team, Lille,
France \\
              %Tel.: +123-45-678910\\
              %Fax: +123-45-678910\\
              \email{yaroslavmipt@gmail.com}           %  \\
%             \emph{Present address:} of F. Author  %  if needed
           \and
           A. Celisse \at
              Paris 1 - Panthéon Sorbonne University, Paris, France;\\
              Inria MODAL project-team, Lille, France
}

\date{Received: date / Accepted: date}
% The correct dates will be entered by the editor

\maketitle

\begin{abstract}
We present a novel data-driven strategy to choose the hyperparameter $k$ in the $k$-NN regression estimator without using any hold-out data. We treat the problem of choosing the hyperparameter as an iterative procedure (over $k$) and propose using an easily implemented in practice strategy based on the idea of \textit{early stopping} and the \textit{minimum discrepancy principle}. This model selection strategy is proven to be minimax-optimal over some smoothness function classes, for instance, the Lipschitz functions class on a bounded domain. The novel method often improves statistical performance on artificial and real-world data sets in comparison to other model selection strategies, such as the Hold-out method, $5$--fold cross-validation, and AIC criterion. The novelty of the strategy comes from reducing the computational time of the model selection procedure while preserving the statistical (minimax) optimality of the resulting estimator. More precisely, given a sample of size $n$, if one should choose $k$ among $\left\{ 1, \ldots, n \right\}$, and $\left\{ f^1, \ldots, f^n \right\}$ are the estimators of the regression function, the minimum discrepancy principle requires the calculation of a fraction of the estimators, while this is not the case for the generalized cross-validation, Akaike's AIC criteria, or Lepskii principle. 
%Code for the simulations is provided at
%\href{https://github.com/YaroslavAveryanov/Minimum-discrepancy-principle-for-choosing-k}{https://github.com/YaroslavAveryanov/MDP-for-choosing-k}. 

\keywords{Nonparametric regression \and $k$-NN estimator \and early stopping \and minimum discrepancy principle \and minimax estimator}
% \PACS{PACS code1 \and PACS code2 \and more}
% \subclass{MSC code1 \and MSC code2 \and more}
\end{abstract}

\section{Introduction}
\label{sec:intro}
%Nonparametric regression estimation is a fundamental problem in statistics and machine learning. The $k$-NN regression estimator (\cite{gyorfi2006distribution, biau2015lectures}) is a very simple and popular choice in practice. For this estimator, the central issue is choosing properly the number of neighbors $k$.
%
The theoretical performance of the $k$-NN regression estimator has been widely studied since the 1970s (\cite{devroye1978uniform, collomb1979estimation, devroye1981almost, bhattacharya1987weak, biau2010rates, kpotufe2011k, biau2015lectures, zhao2021minimax}). For example, in (\cite[Chapter 12]{biau2015lectures}) the uniform consistency of the $k$-NN estimator is proved under the condition that $k(n) / n \to 0$ as $n \to \infty$, where $n$ is the sample size. However, as it was shown in \cite{gyorfi2006distribution}, the nearest neighbor estimator ($k = 1$) is proved to be consistent only in the noiseless case. 

Recently, researchers started to be interested in choosing $k$ from the data (\cite{gyorfi2006distribution, arlot2009data, kpotufe2011k, azadkia2019optimal}). Apparently, the most common and simplest strategy to choose $k$ is to assume some smoothness assumption on the regression function (e.g., the Lipschitz condition \cite{gyorfi2006distribution}) and to find $k$ that makes an upper bound on the bias and the variance of the $k$-NN regression estimator equal. This method has a clear lack: one needs to know the smoothness of the regression function (e.g., the Lipschitz constant). \cite{2009arXiv0909.1884A} gave a \textit{data-driven strategy} for choosing a hyperparameter for different linear estimators (e.g., the $k$-NN estimator) based on the idea of minimal penalty, introduced previously in \cite{birge2007minimal}. The main inconvenience of this strategy is that one needs to compute all the estimators $\mathbf{F}_n = \{ f^k, \ k = 1, \ldots, n \}$ of the regression function in order to choose the optimal one among them by comparing them via a special criterion that involves the empirical error (least-squares loss). To list other (similar) strategies, one can think about the Akaike's AIC (\cite{akaike1998information}), Mallows's $C_p$ (\cite{mallows2000some}) criteria or generalized cross-validation (\cite{li1987asymptotic, hastie2009elements}), where one has to compute the empirical risk error plus a penalty term for any $k = 1, \ldots, n$, or Lepskii principle (\cite{lepski1992problems, lepskii1992asymptotically, birge2001alternative}), where the statistician should make pairwise comparisons between all estimators $\mathbf{F}_n$. Often it is computationally expensive and restricts the use in practice. This gives rise to the problem of choosing the hyperparameter "in real-time", meaning that the practitioner should compute iteratively $f^k \in \mathbf{F}_n$. Eventually, this iterative process has to be stopped. This problem can be solved by applying the \textit{early stopping rule} (\cite{engl1996regularization, zhang2005boosting, Yao2007, raskutti2014early, blanchard2018optimal, wei2017early}).

The first early stopping rule that could be potentially data-driven was proposed by (\cite{blanchard2018optimal, blanchard2018early, celisse2021analyzing}) for spectral filter iterative algorithms (see, e.g., \cite{bauer2007regularization, gerfo2008spectral} for examples of such algorithms). The idea behind the construction of this early stopping rule is the so-called \textit{minimum discrepancy principle} (MDP) that is based on finding a first iteration for which a learning algorithm starts to fit the noise. The key quantity for the analysis of the minimum discrepancy principle is the \textit{empirical risk error}, which is monitored throughout the whole learning process. The process is stopped if the empirical risk starts to fit the noise. It is important to emphasize that the MDP strategy does not involve any hold-out data, thus useful in the settings where there is little data to do training and prediction afterwards.

%\vspace{0.2cm}

\textbf{Contribution.} In the present paper, we propose applying the minimum discrepancy principle stopping rule for the $k$-NN regression estimator in order to select $k$. We prove a non-asymptotic upper bound on the performance of the minimum dicrepancy principle stopping rule measured in the empirical $L_2(\mathbb{P}_n)$ and population $L_2(\mathbb{P}_X)$ norms. Those bounds imply that, under a quite mild assumption on the regression function, the minimum discrepancy principle stopping rule provides a statistically (minimax) optimal functional estimator, in particular, over the class of Lipschitz functions on a bounded domain. In Section 5, we show that our method often improves the performances of classical selection procedures, such as 5-fold, Hold-out and generalized cross-validation, on artificial and real data. Besides that, the proposed strategy lowers the computational time of the selection procedure compared to some well-known model selection strategies, such as the generalized cross-validation, Akaike's AIC or Lepskii principle, since the latter strategies need to calculate all $k$-NN estimators, $k = 1, \dots, n$, of the regression function. We emphasize that the proposed strategy for choosing $k$ is \textit{easy-to-implement} in practice since it involves only monitoring the empirical risk, does not use any hold-out data and does not require any parameters to tune, which is not true for (\cite{arlot2009data}), Lepskii principle (\cite{lepski1992problems, lepskii1992asymptotically}) or (\cite{zhao2019minimax, zhao2021minimax}). In the latter work, in order to calculate the optimal $k $, three design parameters $(A, K, q)$ have to be carefully chosen by the user.   

%\vspace{0.2cm}

\textbf{Outline of the paper.} The organization of the paper is as follows. Section \ref{sec:model} describes the statistical model, its main assumption and introduces the notation that will be used along the paper. In Section \ref{sec:estimator_stopping_rule}, we introduce the $k$-NN estimator and explain how to compute the minimum discrepancy early stopping rule. Section \ref{sec:optimality_result} provides the main theoretical result that shows that the proposed rule is statistically optimal for some classes of functions (e.g., the well-known class of Lipschitz functions on a bounded domain). In Section \ref{sec:simulations}, one can find simulation results for the proposed stopping rule. To be precise, we compare this rule to the generalized cross-validation estimator, Akaike's AIC criterion, $V$--fold and Hold-out cross-validation stopping rules (\cite{arlot2010survey}) tested on some artificial and real-world data sets. Section \ref{sec:conc} concludes the paper. All the technical proofs are in the supplementary material.

\section{Statistical model, main assumption, and notation} \label{sec:model}

In the nonparametric regression setting, one works with a sample $(x_1, y_1), \ldots, \\ (x_n, y_n) \in \mathcal{X}^n \times \mathbb{R}^n$ that satisfies the statistical model
\begin{equation} \label{nonparametric_regression_model}
    y_i = f^*(x_i) + \varepsilon_i, \ \ i = 1,\ldots, n,
\end{equation}
where $f^*:\mathcal{X} \mapsto \mathbb{R}$, $\mathcal{X} \subseteq \mathbb{R}^d$, is a measurable function on some set $\mathcal{X}$, and $\{ \varepsilon_i \}_{i=1}^n$ are i.i.d. Gaussian noise variables $\mathcal{N}(0, \sigma^2)$ where the parameter $\sigma^2 > 0$ is fixed and unknown. 
%On the other hand, from the practical perspective in Sections \ref{artificial_data} and \ref{real_data}, we estimate $\sigma^2$ and provide experimental results on artificial and real-world data sets. 
%One should point out here that the assumption of known variance $\sigma^2$ is quite typical in the (theoretical) model selection literature with nonparametric regression (see, e.g., \cite{li1986asymptotic, li1987asymptotic, yang2005can, cao2006oracle}). 
%In addition to that, we assume that $\{ x_i \in \mathcal{X} \}_{i=1}^n$ are \textit{fixed} covariates (corresponds to the so-called fixed design setting), thus we observe noise only in the responses $\{ y_i \}_{i=1}^n$.  
The goal of the present paper is to estimate optimally the regression function $f^*$ (see \cite{tsybakov2008introduction, arlot2009data, wainwright2019high}). %The fixed design setting is common in literature . We emphasize here that no assumption is made on the set $\mathcal{X}$, this is why we decided to fix the covariates $x_1, \ldots, x_n$ instead of taking them random.

In the context of the \textit{fixed design} setting, when the covariates $x_1, \ldots, x_n$ are fixed, the performance of an estimator $\widehat{f}$ of $f^*$ is measured in terms of the so-called \textit{empirical norm} defined as 
\begin{equation}
    \lVert \widehat{f} - f^* \rVert_n^2 \coloneqq \frac{1}{n}\sum_{i=1}^n \left[ \widehat{f}(x_i) - f^*(x_i) \right]^2,  
\end{equation}
where $\lVert h \rVert_n \coloneqq \sqrt{1/n\sum_{i=1}^n h(x_i)^2}$ for any bounded on $\mathcal{X}$ function $h$. We denote the empirical norm as $L_2(\mathbb{P}_n)$. For each bounded over $\mathcal{X}$ functions $h_1, h_2$, $\langle h_1, h_2 \rangle_n$ denotes the related inner product defined as $\langle h_1, h_2 \rangle_n \coloneqq 1/n \sum_{i=1}^n h_1(x_i)h_2(x_i)$.
Further, $\mathbb{P}_{\varepsilon}$ and $\mathbb{E}_{\varepsilon}$ denote the probability and expectation with respect to $\{ \varepsilon_i \}_{i=1}^n$.

By contrast, when both the covariates $\{ x_i \}_{i=1}^n$ and the noise $\{ \varepsilon_i \}_{i=1}^n$ are random, we consider the so-called \textit{random design} setting. In this setting, the performance of an estimator $\widehat{f}$ of $f^*$ is measured in terms of the $L_2(\mathbb{P}_X)$ norm
\begin{equation}
    \lVert \widehat{f} - f^* \rVert_2^2 \coloneqq \mathbb{E}_{X} \left[ \left( \widehat{f}(X) - f^* (X) \right)^2 \right],
\end{equation}
where $\mathbb{P}_X$ denotes the probability distribution over $\{ x_i \}_{i=1}^n$. In what follows, $\mathbb{P}$ and $\mathbb{E}$ state for the probability and expectation with respect to the couples $\{ \left( x_i, y_i \right) \}_{i=1}^n$.

\textbf{Notation.}
Throughout the paper, $\lVert \cdot \rVert$ and $\langle \cdot, \cdot \rangle$ are the usual Euclidean norm and related inner product. $\lVert M \rVert_2$ and $\lVert M \rVert_{F}$ signify the spectral and Frobenius norms of the matrix $M \in \mathbb{R}^{n \times n}$, respectively. We denote the trace of the matrix $M$ by $\textnormal{tr}(M)$. In addition to that, $\mathbb{I}\left\{ \mathcal{E} \right\}$ is equal to $1$ if the probabilistic event $\mathcal{E}$ holds true, otherwise it is equal to $0$. For $a \geq 0$, we denote by $\floor*{a}$ the largest natural number that is smaller than or equal to $a$. We denote by $\ceil*{a}$ the smallest natural number that is greater than or equal to $a$. Along the paper, $I_n$ is the identity matrix of size $n \times n$.

We make the following assumption on the regression function $f^*$ introduced earlier in Eq. (\ref{nonparametric_regression_model}).
\begin{assumption}[Boundness of the r.f.] \label{assumption_boundness}
$f^*$ is bounded on $\mathcal{X}$, meaning that there exists a constant $\mathcal{M} > 0$ such that
\begin{equation}
    \left| f^*(x) \right| \leq \mathcal{M} \qquad \textnormal{ for all }x \in \mathcal{X}.
\end{equation}
\end{assumption}

Assumption \ref{assumption_boundness} is quite standard in the nonparametric regression literature (\cite{gyorfi2006distribution, zhao2021minimax}). In particular, Assumption \ref{assumption_boundness} holds when the set $\mathcal{X}$ is bounded, and the regression function $f^*$ is $L$-Lipschitz with some positive constant $L$ (cf. \cite{gyorfi2006distribution}).

%\vspace{0.2cm}

Along the paper, we use the notation $c, c_1, C, \widetilde{c}, \widetilde{C}, \ldots$ to show that those positive numeric constants depend on $\sigma^2$ and $\mathcal{M}$ only, otherwise the dependence is said explicitly. The values of all the constants may change from line to line or even in the same line.

\section{$k$-NN estimator and minimum discrepancy stopping rule} \label{sec:estimator_stopping_rule}

\subsection{$k$-NN regression estimator}

Let us transform the initial nonparametric regression model (\ref{nonparametric_regression_model}) into its vector form 
\begin{equation} \label{nonparametric_regression_vector}
    Y \coloneqq \left[y_1, \ldots, y_n\right]^\top = F^* + \varepsilon \in \mathbb{R}^n, 
\end{equation}
where the vectors $F^* \coloneqq [f^*(x_1), \ldots, f^*(x_n)]^\top$ and $\varepsilon \coloneqq [\varepsilon_1, \ldots, \varepsilon_n]^\top$.

Define a $k$-nearest neighbor estimator $f^k$ of $f^*$ from (\ref{nonparametric_regression_model}) at the point $x_i, \ i = 1, \ldots, n,$ as
\begin{equation} \label{estimator}
    f^k(x_i) \coloneqq F^k_i = \frac{1}{k} \sum_{j \in \mathcal{N}_k(x_i)} y_j, \qquad k = 1, \ldots, n,
\end{equation}
where $\mathcal{N}_k(x_i)$ denotes the indices of the $k$ nearest neighbors of $x_i$ among $\{1, \ldots, n \}$ in the usual Euclidean norm in $\mathbb{R}^d$, where ties are broken at random. In words, in Eq. (\ref{estimator}) one weights by $1/k$ the response $y_j$ if $x_j$ is a $k$ nearest neighbor of $x_i$ measured in the Euclidean norm. Note that other adaptive metrics (instead of the Euclidean one) have been also considered in the literature (\cite[Chap. 14]{hastie2009elements}).

One can notice that the $k$-NN regression estimator (\ref{estimator}) belongs to the class of (local) linear estimators (\cite{2009arXiv0909.1884A, hastie2009elements}), meaning that $F^k \in \mathbb{R}^n$ estimates the vector $F^*$ as it follows.
\begin{equation} \label{main_estimator}
    F^k \coloneqq \left[ f^k(x_1), \ldots, f^k(x_n) \right]^\top = A_k Y,
\end{equation}
where $A_k \in \mathbb{R}^{n \times n}$ is the matrix described below.
\begin{equation} \label{a_k_matrix}
    \begin{cases}
    \forall 1 \leq i, j \leq n, \ \left( A_k \right)_{ij} \in \{0, 1/k\} \textnormal{ with } k \in \{1, \ldots, n \},\\
    \forall 1 \leq i \leq n, \ \left( A_k \right)_{ii} = 1/k \textnormal{ and } \sum_{j=1}^n \left( A_k \right)_{ij} = 1.
    \end{cases}    
\end{equation}
\vspace{0.05cm}
Saying differently, $(A_k)_{ij} = 1/k$ if $x_j$ is a $k$ nearest neighbor of $x_i$, otherwise $(A_k)_{ij} = 0, \ i, j \in \{1, \ldots, n\}$.

Define the mean-squared error (the risk error) of the estimator $f^k$ in the empirical norm as
\begin{equation} \label{mean-squared-error}
    \textnormal{MSE}(k) \coloneqq \mathbb{E}_{\varepsilon} \lVert f^k - f^* \rVert_n^2 = \frac{1}{n} \mathbb{E}_{\varepsilon} \sum_{i=1}^n \Big( \frac{1}{k}\sum_{j \in \mathcal{N}_k(i)}y_j - f^*(x_i) \Big)^2.
\end{equation}
Further, we will introduce the (squared) bias and variance of the functional estimator $f^k$ (see, e.g., \cite[Eq. (7)]{2009arXiv0909.1884A}),
\begin{equation}
    \textnormal{MSE}(k) = B^2(k) + V(k),
\end{equation}
where 
\begin{equation*}
    B^2(k) = \lVert (I_n - A_k)F^* \rVert_n^2, \qquad V(k) = \frac{\sigma^2}{n} \textnormal{tr}\left( A_k^\top A_k \right) = \frac{\sigma^2}{k}.
\end{equation*}
%
%Moreover, we are able to simplify a bit the expression for the variance $V(k)$, which shows the lemma below. 
%\begin{lemma}[Proposition 1 in \cite{2009arXiv0909.1884A}] \label{variance_lemma}
%For any $k \in \{1, \ldots, n \}$,
%\begin{equation*}
%    V(k) = \frac{\sigma^2}{n}\textnormal{tr}(A_k) = \frac{\sigma^2}{k}.
%\end{equation*}
%\end{lemma}

%\begin{proof}[Proof of Lemma \ref{variance_lemma}]
%Notice that 
%\begin{equation}
%    \textnormal{tr}\left( A_k^\top A_k \right) = \textnormal{tr}\left( A_k A_k^\top \right) = \sum_{i=1}^n \sum_{j=1}^n \left( A_k \right)_{ij}^2 = \frac{n}{k}.
%\end{equation}
%\end{proof}
%
Thus, the variance term $\sigma^2/k$ is a decreasing function of $k$. Note that $B^2(1) = 0, \ V(1) = \sigma^2$, and $B^2(n) = (1 - 1/n)^2\lVert f^* \rVert_n^2, \ V(n) = \sigma^2/n$. Importantly, the bias term $B^2(k)$ can have \textit{arbitrary} behavior on the interval $[1, n]$. 

\vspace{0.05cm}

Ideally, we would like to minimize the mean-squared error (\ref{mean-squared-error}) as a function of $k$. However, since the bias term is not known (it contains the unknown regression function), one should introduce other quantities that will be related to the bias. In our case, this quantity will be the \textit{empirical risk} at $k$:
\begin{equation} \label{empirical_risk}
    R_k \coloneqq \lVert (I_n - A_k) Y \rVert_n^2.
\end{equation}

$R_k$ measures how well the estimator $f^k$ fits $Y$. Remark that $R_1 = 0$ (corresponds to the "overfitting" regime) and $R_n = (1 - 1/n)^2 \frac{1}{n} \sum_{i=1}^n y_i^2$ (corresponds to the "underfitting" regime), but there is no information about the monotonicity of $R_k$ on the interval $[1, n]$. 

Furthermore, some information about the bias is contained in the expectation (over the noise $\{ \varepsilon_i \}_{i=1}^n$) of the empirical risk. To be precise, for any $k \in \{ 1, \ldots, n\}$,
\begin{align} \label{exp_empirical_risk}
    \begin{split}
    \mathbb{E}_{\varepsilon}R_k &= \sigma^2 + B^2(k) - \frac{\sigma^2(2 \textnormal{tr}(A_k) - \textnormal{tr}(A_k^\top A_k))}{n}\\
    %&= \sigma^2 + B^2(k) - \frac{\sigma^2}{n}\textnormal{tr}(A_k)\\
    &= \sigma^2 + B^2(k) - V(k).
    \end{split}
\end{align}

%Let us illustrate all the mentioned quantities in one example in Fig \ref{fig:bvr}. We take the regression function equal to $f^*(x) = \lVert x - 0.5 \rVert / \sqrt{3} - 0.5$ and the noise variance $\sigma^2 = 0.01$. We take $n = 50$, the covariates $x_i \in [0, 1]^3$, and plot the bias term $B^2(k)$, the variance term $V(k)$, risk error $\textnormal{MSE}(k)$, empirical risk $R_k$, and its expectation $\mathbb{E}_{\varepsilon}R_k$ versus the number of neighbours $k$.
%We start with the maximum number of neighbours $k_{\textnormal{max}} = n/2$ and decrease it until $k = 1$. By doing that, one is able to decrease successively the complexity of the model measured by its "degree of freedom" \cite{arlot2009data} $\textnormal{tr}(A_k) = n/k$.

Note that among all defined quantities, only the variance term $V(k)$ can be proved monotonic (without an additional assumption on the smoothness of $f^*$). Importantly, Fig \ref{fig:bvr} indicates that choosing $k = 6$ will provide the user with the global optimum of the risk (the mean-squared error) curve. Thus, for instance, it would be meaningless (according to the risk curve) to compute all the estimators $f^k$ (\ref{estimator}) for $k = 1, \ldots, 6$.

\begin{figure} 
\centering
\begin{tikzpicture}[scale=0.85]
\tikzstyle{every node}=[font=\normalsize]
\begin{axis}[
    title={},
    xlabel={Number of neighbors},
    ylabel={Value},
    xmin=1, xmax=12,
    ymin=0, ymax=0.02,
    xtick={1,3,5,7, 10},
    ytick={0.002, 0.006, 0.01, 0.02},
    legend pos=north west,
    ymajorgrids=true,
    xmajorgrids=true,
    grid style=dashed,
    legend style={nodes={scale=0.6, transform shape}}, 
]

\addplot[
    color=blue,
    mark=square,
    ]
    coordinates {
    (1,0)(2,6.29080327e-03)(3,8.47616891e-03)(4,1.08716416e-02)(5,1.12560012e-02)(6,1.14496210e-02)(7,1.37171039e-02)(8,1.57263962e-02)(9,1.83168233e-02)(10,1.78368289e-02)(11,1.94323757e-02)(12,1.98591401e-02)
    };
    \addlegendentry{Empirical risk}
    %\caption{(a)}
\addplot[
    color=red,
    mark=+,
    ]
    coordinates {
    (1,0)(2,1.48163079e-03)(3,1.91217442e-03)(4,1.90325965e-03)(5,2.31901908e-03)(6,2.42449703e-03)(7,3.02981110e-03)(8,3.27183370e-03)(9,4.00873281e-03)(10,4.53268739e-03)(11,5.57342952e-03)(12,6.21032204e-03)
    };
    \addlegendentry{Bias}
\addplot[
    color=violet,
    mark=star,
    ]
    coordinates {
    (1,0.01)(2,0.005)(3,0.00333333)(4,0.0025)(5,0.002)(6,0.00166667)(7,0.00142857)(8,0.00125)(9,0.00111111)(10,0.001)(11,0.00090909)(12,0.00083333)
    };
    \addlegendentry{Variance}
\addplot[
    color=orange,
    mark=x,
    ]
    coordinates {
    (1,0.01)(2,0.00648163)(3,0.00524551)(4,0.00440326)(5,0.00431902)(6,0.00409116)(7,0.00445838)(8,0.00452183)(9,0.00511984)(10,0.00553269)(11,0.00648252)(12,0.00704366)
    };
    \addlegendentry{Risk}
\addplot[
    color=black,
    mark=*,
    ]
    coordinates {
    (1,0)(2,6.48163079e-03)(3,8.57884109e-03)(4,9.40325965e-03)(5,1.03190191e-02)(6,1.07578304e-02)(7,1.16012397e-02)(8,1.20218337e-02)(9,1.28976217e-02)(10,1.35326874e-02)(11,1.46643386e-02)(12,1.53769887e-02)
    };
    \addlegendentry{Exp. empirical risk}
\end{axis}
\end{tikzpicture}
\caption{Sq. bias, variance, risk and (expected) empirical risk behaviour.}
\label{fig:bvr}
\end{figure}
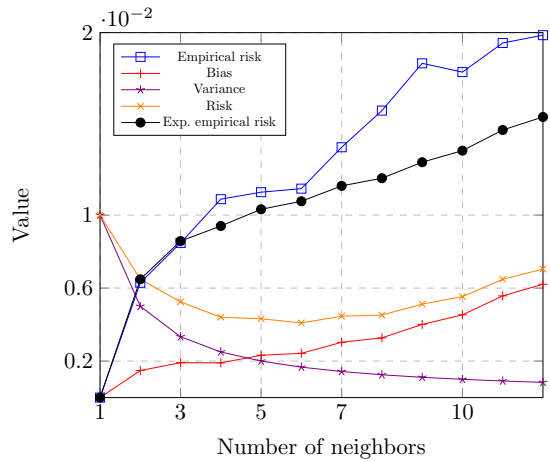

%\begin{figure}[!htb]
%\centering
%  \includegraphics[width=0.75\linewidth]{knn_bias_var_risk.pdf}
%  \caption{Sq. bias, variance, risk and (expected) empirical risk behaviour.}
%  \label{fig:bvr}
%\end{figure}

Our main concern is to design a data-driven strategy to choose $\widehat{k} \in \{1, \ldots, n \}$, which can be seen as a mapping from the data $\{ (x_i, y_i) \}_{i=1}^n$ to a positive integer so that the $L_2(\mathbb{P}_n)$ error $\lVert f^{\widehat{k}} - f^* \rVert_n^2$ (or the population $L_2(\mathbb{P}_X)$ error) is as small as possible. More precisely, the goal is to define a data-driven $\widehat{k}$ such that it satisfies the following non-asymptotic upper bound ("oracle-type inequality" \cite{wainwright2019high}):
\begin{equation} \label{or_type_ineq}
     \lVert f^{\widehat{k}} - f^* \rVert_n^2 \leq C_n \mathbb{E}_{\varepsilon} \lVert f^{k_{\textnormal{opt}}} - f^* \rVert_n^2 + r_n
\end{equation}
with high (exponential) probability over $\{ \varepsilon_i \}_{i=1}^n$, where $f^{k_{\textnormal{opt}}}$ is a minimax-optimal estimator of the regression function $f^* \in \mathcal{F}$, $\mathcal{F}$ is some a priori chosen function space. The leading constant $C_n$ should be bounded, and the remainder term $r_n$ is negligible (smaller) with respect to the "optimal risk error" $\mathbb{E}_{\varepsilon} \lVert f^{k_{\textnormal{opt}}} - f^* \rVert_n^2$. 

\subsection{Related work} \label{related_work}
The idea of choosing the hyperparameter $k$ from the data has been already considered in the literature. For example, the classical procedures such as generalized cross-validation (\cite{craven1978smoothing, li1987asymptotic, cao2006oracle}), penalized estimators (\cite{li1987asymptotic, mallows2000some, arlot2009data, arlot2009data_}), Lepskii principle (\cite{lepski1992problems, birge2001alternative}), and different cross-validation methods (\cite{arlot2010survey}) are popular choices for linear estimators. Let us consider them in more detail.

\textbf{Generalized CV (\cite{li1987asymptotic, cao2006oracle, hastie2009elements}).} This model selection method has been widely studied in the case of (kernel) ridge regression (\cite{craven1978smoothing}) and smoothing splines (\cite{cao2006oracle}). In particular, \cite{cao2006oracle} proved a non-asymptotic oracle inequality for the generalized CV estimator when the variance $\sigma^2$ is known. However, in a more general case, GCV estimates $\sigma^2$ implicitly, which is an advantage of the method. In addition to that, GCV for $k$-NN regression is proved by \cite{li1987asymptotic} to be an asymptotically optimal model selection criterion (i.e., $\lVert f^{k_{\textnormal{GCV}}} - f^* \rVert_n^2 \ / \ \underset{k}{\inf}\lVert f^{k} - f^* \rVert_n^2 \to 1$ in probability when $n \to + \infty$) under the assumption $\lVert A_k \rVert_2 \leq c, \ k = 1, \ldots, n,$ for some constant $c$. It is worth mentioning that generalized cross-validation provides an approximation to the so-called leave-one-out cross-validation (\cite{arlot2010survey, celisse2018theoretical}), which is an exhaustive model selection procedure. In the case of GCV, if the nearest neighbors' matrices are already precomputed, its computational time is $\mathcal{O}\left( n^3 \right)$ elementary operations. The GCV strategy will be later considered in our simulations (see Section \ref{sec:simulations}).

%Note that generalized cross-validation was proved \cite{li1987asymptotic} asymptotically optimal for the $k$-NN regression estimator (\ref{estimator}) under the assumption $\lVert A_k \rVert_2 \leq C$ for some positive constant $C$.   
%This estimator selection method is
%based on pairwise comparisons between estimators
\textbf{Lepskii principle (\cite{lepski1992problems, lepskii1992asymptotically, lepskii1993asymptotically})}. Lepskii designed a strategy for choosing a data-dependent parameter which is based on pairwise comparisons between estimators. This method was first developed for the Gaussian white noise model (\cite{lepski1992problems, birge2001alternative}) and afterwards extended to the kernel bandwidth selection (\cite{lepski1997optimal}) and other frameworks (see \cite{goldenshluger2013general} for a modification of the method). While enjoying optimal theoretical properties, the method suffers from tuning issues and high computation cost due to pairwise comparison of all (kernel) estimators. In particular, Lepskii's method involves a parameter that needs to be chosen, and in practice this method is very sensitive to this choice (e.g., parameter $D_3$ and noise level $\sigma^2$ in \cite{lepski1997optimal}). The computation cost of Lepskii principle is particularly high in the multivariate setting (see \cite{bertin2016adaptive}). To the best of our knowledge, there is no work done on applying the Lepskii principle to the $k$-NN regression estimator, specifically.

\textbf{Penalized estimators} date back to the work on AIC (\cite{akaike1998information}) or  Mallow's $C_p$ (\cite{mallows2000some}) criteria, where a penalty proportional to the dimension of the model is added to the quadratic loss (i.e., \textit{the empirical risk error} in our notation (\ref{empirical_risk})) when the noise level $\sigma^2$ is known. As it was for the GCV strategy, the asymptotic computational time of AIC and Mallow's $C_p$ is $\mathcal{O}\left( n^3 \right)$. After that, a new approach was developed by \cite{birge2007minimal}, where the authors introduced the so-called "slope heuristics" for projection matrices. This notion was based on the introduction of the penalty $\textnormal{pen}(k) = K \textnormal{tr}(A_k)$, where $\textnormal{tr}(A_k)$ is the dimension of the model, and $K$ is a constant that can depend on $\sigma^2$, in particular. It appeared that there exists a constant $K_{\textnormal{min}}$ such that $2 K_{\min} \textnormal{tr}(A_k)$ yields an asymptotically optimal model selection procedure. It gives rise to some strategies for the estimation of constant $K_{\textnormal{min}}$ from the data, as it was done, for instance, by \cite{2009arXiv0909.1884A} for some linear estimators when $\sigma^2$ is unknown.

\textbf{Cross-validation methods (\cite{arlot2010survey}).} These model selection methods are the most used in practice. Compared to generalized cross-validation, for instance, $V$--fold cross-validation method (\cite{geisser1975predictive, arlot2010survey}) incurs a large computational cost (with $V$, which is not too small). To be precise, the $V$--fold cross-validation requires the model selection procedure to be performed $V$ times for each value of $k \in \{1, \ldots, n\}$. Another alternative could be the Hold-out method (\cite{wegkamp2003model, arlot2010survey}), which consists in randomly splitting the data into two parts for each value $k \in \{1, \ldots, n\}$:  one is dedicated for training the estimator (\ref{estimator}) and the other one is dedicated for testing (see, e.g., Section \ref{sec:simulations} for more details in a simulated example).

\subsection{Bias-variance trade-off and minimum discrepancy principle rule}

We are at the point to define our first "reference rule". Based on the nonparametric statistics literature (\cite{wasserman2006all, tsybakov2008introduction}), the bias-variance trade-off (with a noise-level estimation correction) usually provides an optimal estimator in the minimax sense:
\begin{equation} \label{k_star}
    k^* = \inf \left\{ k \in \{ 1, \ldots, n \} \mid B^2(k) \geq V(k) + 2\mathbb{E}_{\varepsilon}R_2 - \sigma^2  \right\}.
\end{equation}

The bias-variance trade-off stopping rule $k^*$ does not always exist due to arbitrary behavior of the bias term $B^2(k)$. Thus, if no such $k^*$ exists, set $k^* = n$. Notice that the stopping rule $k^*$ is \textit{not computable} in practice, since it depends on the unknown bias $B^2(k)$ and noise level $\sigma^2$. Nevertheless, we can create a data-driven version of $k^*$ by means of the empirical risk $R_k$ from Eq. (\ref{empirical_risk}). 

Eq. (\ref{exp_empirical_risk}) gives us that the event $\{ B^2(k) \geq V(k) + 2\mathbb{E}_{\varepsilon}R_2 - \sigma^2 \}$ is equivalent to the event $\{ \mathbb{E}_{\varepsilon}R_k \geq 2\mathbb{E}_{\varepsilon}R_2 \}$, so we conclude that $k^* = \inf \{ k \in \{ 1, \ldots, n \} \mid \mathbb{E}_{\varepsilon}R_k \geq 2\mathbb{E}_{\varepsilon}R_2 \}$. This gives rise to an estimator of $k^*$ that we denote as $k^{\tau}$. This stopping rule is called \textit{the minimum discrepancy principle} stopping rule and is defined as
\begin{equation} \label{k_tau}
    k^{\tau} = \sup \left\{ k \in \{ 1, \ldots, n \} \mid R_k \leq 2R_2 \right\}. 
\end{equation}

\textbf{Remarks.} Note that in Eq. (\ref{k_tau}), we introduced a supremum instead of the infimum from Eq. (\ref{k_star}). That was done on purpose since there could be several points of the bias-variance trade-off, and the bias (and the empirical risk) could behave badly in the area "in-between". To calculate $k^{\tau}$, the user should, first, compute the empirical risk $R_k$ at $k = n$ (thus, the matrix $A_n$ of $n$ nearest neighbors). Then one needs to decrease $k$ until the event $\{ R_k \leq 2R_2 \}$ holds. In Eq. (\ref{k_tau}), $2R_2$ serves as an estimator of the noise level $\sigma^2$ since $\mathbb{E}_{\varepsilon}\left(2R_2 - \sigma^2 \right) = 2 B^2(2)$ that should be small. It is worth mentioning that it is not necessary to compute explicitly all the matrices $A_k, \ k = n, \ n-1, \ldots$, since, for instance, the matrix $A_{n-1}$ could be easily derived from the matrix $A_n$ (assuming that one has already arranged the neighbors and removed the $n^{\text{th}}$ neighbors from the matrix $A_n$), i.e., 
\begin{equation} \label{iteration_for_matrix}
    [A_{n-1}]_{ij} = \frac{n}{n-1}[A_n]_{ij}, \ \forall i, j \in \{1, \ldots, n\}.
\end{equation}
It is one of the main computational advantages of the proposed rule (\ref{k_tau}). For more details on the efficient computation of the nearest neighbors' matrices, see, e.g., \cite{bentley1975multidimensional, omohundro1989five}. In addition to all of that, we emphasize that the definition (\ref{k_tau}) of $k^{\tau}$ does not require the knowledge of the constant $\mathcal{M}$ from Assumption \ref{assumption_boundness}, and $k^{\tau}$ does not require computing the empirical risk $R_k$ for all values $k = 1, \ldots, n$, while it is the case, for instance, for generalized cross-validation or Mallow's $C_p$ (see Section \ref{sec:simulations}). Moreover, we should point out that the stopping rule (\ref{k_tau}) doesn't depend on the noise level $\sigma^2$ as for the AIC or Mallow's $C_p$ criteria (\cite{akaike1974new, mallows2000some, hastie2009elements}). Unlike Lepskii principle, (\cite{arlot2009data}) or (\cite{zhao2021minimax}), the MDP rule does not have parameters to tune and involves computing only a fraction of the estimators. Regarding the asymptotic computational time of $k^{\tau}$, if the nearest neighbors' matrices are already computed, it is of the order $\mathcal{O}\left( n^2 \left( n - k^{\tau} \right) \right)$, while it is of the order $\mathcal{O}\left( n^3 \right)$ for the AIC/Mallow's $C_p$ criteria or GCV.

\vspace{0.15cm}

There is a large amount of literature (\cite{engl1996regularization, bauer2007regularization, blanchard2018optimal, blanchard2018early, celisse2021analyzing}) on the minimum discrepancy principle for spectral filter algorithms such as gradient descent, ridge regularization, and spectral cut-off regression, and providing an exhaustive review on this strategy is out of the scope of the paper (e.g., \cite{blanchard2018optimal, celisse2021analyzing} provide a thorough review). We should emphasize that the minimum discrepancy principle determines the first time a learning algorithm starts to fit noise, which is measured by $2 R_2$ in the present context. 

\vspace{0.15cm}
%Moreover, one is able to notice that, if the empirical risk is close to its expectation, $k^{\tau}$ should produce an optimal estimator in some sense. The main question that should be asked is "In which setting is it possible to quantify this gap between $R_k$ and $\mathbb{E}_{\varepsilon}R_k$ that will not be statistically large?". This question is the main technical obstacle of the present paper. 
In what follows, we show that for a quite large class of regression functions, $k^{\tau}$ is optimal in the sense of Ineq. (\ref{or_type_ineq}).

\section{Theoretical optimality result} \label{sec:optimality_result} 
Let us start to describe the main theoretical result of the present paper. The following theorem applies to the estimator defined in Eq. (\ref{main_estimator}). 
\begin{theorem} (Upper bound on the empirical norm). \label{main_th}
Under Assumption \ref{assumption_boundness}, for arbitrary $v_1 \geq 0$ and $v_2 \geq \frac{4 \mathcal{M}^2}{\sigma^2}$,
    \begin{equation} \label{main_ineq}
        \lVert f^{k^{\tau}} - f^* \rVert_n^2 \leq 8 V(k^*) + C_1 \left( \frac{v_1\sigma^2}{n} + \sqrt{\frac{v_1 \sigma^4}{n}} + \sqrt{\frac{v_1\sigma^2}{n}} \right) + C_2 \sqrt{\frac{\log n}{n}} + 2 ( 2\mathbb{E}_{\varepsilon}R_2 - \sigma^2 )
\end{equation}
    with probability at least $1 - 18 \exp \left(- v_1 \right) - 5\exp \left( - \min \left( 1, \frac{\sigma^2}{128 \mathcal{M}^2} \right) n v_2^2 \left( 1 - \frac{1}{n^{1.5}2^{n/2}} \right)^2 \right) - \frac{5 (v_2 + v_2^2)}{\sqrt{n \left( n/2 - 1 \right)}}$, where constant $C_1 > 0$ can depend on $\mathcal{M}, \sigma^2$ and constant $C_2 > 0$ can depend on $\sigma^2$.
    
Moreover, if $k^*$ from Eq. (\ref{k_star}) exists, then for arbitrary $v_1 \geq 0$, 
\begin{equation} \label{corollary_from_the_main}
    \lVert f^{k^{\tau}} - f^* \rVert_n^2 \leq \underbrace{4 \ \textnormal{MSE}(k^*)}_{\textnormal{Main term}} + \underbrace{C_1 \left( \frac{v_1\sigma^2}{n} + \sqrt{\frac{v_1 \sigma^2 }{n}} + \sqrt{\frac{v_1 \sigma^4}{n}} \right) + C_2 \sqrt{\frac{\log n}{n}} + 2 (\sigma^2 - 2\mathbb{E}_{\varepsilon}R_2) }_{\textnormal{Remainder term}}
\end{equation}
with the same probability, where constants $C_1, C_2$ are from Ineq. (\ref{main_ineq}).
\end{theorem}

\begin{proof}
The full proof is deferred to the supplementary material. Let us provide a sketch of the proof here.

The main ingredients of the proof are two deviation inequalities (cf. Corollary \ref{corollary_for_variance} and Lemma \ref{deviation_bias_lemma} in the supplementary material): for any $t_1 \geq 0$ and $t_2 \geq \frac{4 \mathcal{M}^2}{\sigma^2}$,
\begin{equation}
    \mathbb{P}_{\varepsilon} \left( V(k^{\tau}) > 2 V(k^*) + t_1 \right) \leq 4 \exp \left( -c n \min \left( \frac{t_1^2}{\sigma^2}, \frac{t_1^2}{\sigma^4}, \frac{t_1}{\sigma^2} \right) \right),
\end{equation}
and 
\begin{equation} \label{bias_k_tau_variance_k_star}
    B^2(k^{\tau}) < 2 V(k^*) + 2\mathbb{E}_{\varepsilon}R_2 - \sigma^2 + 2t_1,
\end{equation}
where Ineq. (\ref{bias_k_tau_variance_k_star}) holds with probability at least $1 - 12 \exp \left( -c n \min \left( \frac{t_1^2}{\sigma^2}, \frac{t_1^2}{\sigma^4}, \frac{t_1}{\sigma^2} \right) \right) - 5 \exp \left( -\min \left( 1, \frac{\sigma^2}{128 \mathcal{M}^2}  \right) n t_2^2 \left( 1 - \frac{1}{n^{1.5}2^{n/2}} \right)^2 \right) - \frac{5 (t_2 + t_2^2)}{\sqrt{n \left( n/2 - 1 \right)}}$.

After that, one can split the $L_2(\mathbb{P}_n)$-error at $k^{\tau}$ into two parts:
\begin{equation*}
    \lVert f^{k^{\tau}} - f^* \rVert_n^2 \leq 2 B^2(k^{\tau}) + 2 \lVert A_{k^{\tau}}\varepsilon \rVert_n^2.   
\end{equation*}

It is sufficient to derive high probability control of $\underset{k}{\sup} \left| \lVert A_{k}\varepsilon \rVert_n^2 - V(k)   \right|$ for $k = 1, \ldots, n$ (see Appendix \ref{var_control} in the supplementary material). That was the reason why the term $\mathcal{O}\left(\sqrt{\frac{\log n}{n}}\right)$ appeared in Eq. (\ref{main_ineq}).

Finally, one can apply $V(k^*) \leq \frac{1}{2} \left( \textnormal{MSE}(k^*) + \sigma^2 - 2\mathbb{E}_{\varepsilon}R_2 \right)$, if $k^*$ exists, and set $v_1 = c n \min \left( \frac{t_1}{\sigma^2}, \frac{t_1^2}{\sigma^4}, \frac{t_1^2}{\sigma^2} \right)$. The claim follows.

\end{proof}

In order to gain some intuition of the claim of Theorem \ref{main_th}, let us make some comments.

First of all, Ineq. (\ref{corollary_from_the_main}) is non-asymptotic, meaning that it holds for any sample size $n \geq 3$. Second, Ineq. (\ref{corollary_from_the_main}) holds "with high probability", which is a stronger result than in expectation since (\cite{li1987asymptotic}) there are model selection procedures that are asymptotically optimal (when $n \to +\infty$) in expectation but not with high probability. 

Third, the main term in Ineq. (\ref{corollary_from_the_main}) is the risk error at the bias-variance trade-off times $4$. Ideally, one should rather introduce $\underset{k=1, \ldots, n}{\inf} \left[ \mathbb{E}_{\varepsilon} \lVert f^k - f^* \rVert_n^2 \right]$ and compare $\lVert f^{k^{\tau}} - f^* \rVert_n^2$ to it. However, to the best of our knowledge, a smoothness assumption is needed to connect the bias-variance trade-off risk and the oracle (minimum) risk. That was the reason to keep the main term as it was stated. Fourth, the right hand side term of Ineq. (\ref{corollary_from_the_main}) is of the order $\mathcal{O}\left(\sqrt{\frac{\log n}{n}}\right) + ( 2\mathbb{E}_{\varepsilon}R_2 - \sigma^2 )$. Notice that the rate $\mathcal{O}\left( \sqrt{\frac{\log n}{n}} \right)$ for the remainder term was achieved in (\cite{azadkia2019optimal}) but in terms of the expectation over the noise. The term $2\mathbb{E}_{\varepsilon}R_2 - \sigma^2 $ corresponds to the noise level estimation. 

%In order to that, we recall the result for a minimax optimal rate from \cite{juditsky2008learning} adapted to the nonparametric regression setting (Eq. (\ref{nonparametric_regression_model})).

A natural question would be to understand if the rate $\mathcal{O}\left(\sqrt{\frac{\log n}{n}}\right) + 2 \mathbb{E}_{\varepsilon} R_2 - \sigma^2$ in Ineq. (\ref{main_ineq}) is sufficiently fast. In order to do that, one should precise the function space $\mathcal{F}$ where $f^*$ lies in. In what follows, we will mention one famous example (among others) of a such function space $\mathcal{F}$.
\begin{example}
Consider the class of functions 
\begin{equation} \label{lipschitz}
    \mathcal{F}_{\textnormal{Lip}}(L) \coloneqq \left\{ f: [0, 1]^d \mapsto \mathbb{R} \mid f(0) = 0, \ f \textnormal{ is } L-\textnormal{Lipschitz} \right\},
\end{equation}
where $f$ is $L$-Lipschitz means that $|f(x) - f(x^\prime)| \leq L \lVert x - x^{\prime} \rVert$ for all $x, x^{\prime} \in [0, 1]^d$. In this case (see, e.g., \cite[Theorem 3.2]{gyorfi2006distribution} with $p = 1$), 
\begin{equation} \label{minimax_lipschitz}
    \underset{\widehat{f}}{\inf}\underset{f^* \in \mathcal{F}_{\textnormal{Lip}}(L)}{\sup}\mathbb{E} \lVert \widehat{f} - f^* \rVert_2^2 \geq c_l n^{-\frac{2}{2 + d}},
\end{equation}
for some positive constant $c_l$ that depends on $d, L$, and $\sigma^2$, for any measurable of the input data $\widehat{f}$.
\end{example}
Therefore, for the class of $L$-Lipshitz functions, the rate $\mathcal{O}(\sqrt{\log n / n})$ is faster than the minimax-optimal rate $\mathcal{O}\left(n^{-\frac{2}{2+d}}\right)$ for any $d > 2$. To prove the next result for the optimality of $k^\tau$, one needs to transfer Ineq. (\ref{main_ineq}), derived in the empirical $L_2\left(\mathbb{P}_n\right)$ norm, to the population $L_2\left(\mathbb{P}_X\right)$ norm via the Hoeffding inequality (cf. Lemma \ref{hoeffding_concentration} in the supplementary material). 
%The theorem below will be useful to achieve this result. 
%
%
%As for the main term $8 V(k^*)$ in Ineq. (\ref{main_ineq}), it should be of a minimax-optimal order since the common strategy for obtaining optimal rates for the $k$-NN regression estimator is two-fold. First, one should derive a uniform (over $k$) upper bound on the bias term (knowing the smoothness of the regression function), which is a non-decreasing function of $k$. Following that, this upper bound is made equal to the variance term, which results in the optimal $k^{\textnormal{b/v}}$. Following this argument, one can conclude that $k^{\textnormal{b/v}} \leq k^*$, which implies $V(k^*) \leq V(k^{\textnormal{b/v}})$. We summarize our findings in the theorem  and corollary below.
%
%
%
%\begin{theorem}[Theorem 6.2 in \cite{gyorfi2006distribution}] \label{theorem_gyorfi}
%Under the Lipschitz condition (\ref{lipschitz}) on the regression function $f^*$, for any $k \in \{ 1, \ldots, n \}$, the population risk is bounded as 
%\begin{equation} \label{upper_bound_bias_ineq}
%    \mathbb{E} \lVert f^k - f^* \rVert_2^2 \leq (cL)^2 \left( \frac{k}{n} \right)^{2/d} + \frac{\sigma^2}{k},
%\end{equation}
%for a constant $c > 0$ that depends on $d$.  
%\end{theorem}
Following this argument, we summarize our findings in the corollary below.

\begin{corollary}(Upper bound on risk in the population norm). \label{main_corollary}
    Under the $L$-Lipschitz condition (\ref{lipschitz}) on the regression function $f^*$, early stopping rule $k^{\tau}$ from Eq. (\ref{k_tau}) satisfies
    \begin{equation}
        \mathbb{E} \lVert f^{k^{\tau}} - f^* \rVert_2^2 \leq c_{u,1} n^{-\frac{2}{2+d}} + c_{u,2}\exp\left( - c \min \left( n^{\frac{d-2}{d+2}}, n^{\frac{2}{d+2}}, n \left( 1 - \frac{1}{n^{1.5}2^{n/2}} \right)^2 \right) \right),
    \end{equation}
    where positive constants $c_{u,1}, c_{u, 2}, c$ can depend on $d, \sigma^2$, and $L$; $d > 2$ and $n \geq 3$.
\end{corollary}
The proof is deferred to the supplementary material (see Appendix \ref{section_proof_corollary}). 

\vspace{0.1cm}

Therefore, the function estimator $f^{k^{\tau}}$ achieves (up to a constant and for $n$ large enough) the minimax bound in the population norm presented in Eq. (\ref{minimax_lipschitz}), thus non-improvable in general for the class of Lipschitz functions on a bounded domain. 
\section{Empirical comparison to other model selection rules} \label{sec:simulations}

The present section aims at comparing the practical behavior of our stopping rule $k^{\tau}$ from Eq. (\ref{k_tau}) with other existing and the most used-in-practice model selection rules. We split the section into three parts: Subsection \ref{description_early_stopping_rules} defines the competitive stopping rules. Subsection \ref{artificial_data} presents experiments on some artificial data sets, while Subsection \ref{real_data} presents experiments on some real data sets. 
\subsection{Description of the model selection rules to compare} \label{description_early_stopping_rules}
In what follows, we will briefly describe five competitive model selection rules. 
\subsection*{Akaike's AIC criterion}
The Akaike's information criterion (\cite{akaike1974new, hastie2009elements}) estimates the risk error by means of a log-likelihood loss function. In the case of $k$-NN regression with Gaussian noise, the maximum likelihood and least-squares are essentially the same things. This gives the AIC criterion as
\begin{equation*}
    R_{\textnormal{AIC}}(f^k) = \frac{1}{n \widehat{\sigma}^2} \left( \lVert Y - A_k Y \rVert^2 + 2 \textnormal{tr}(A_k)\widehat{\sigma}^2 \right), \qquad k = 1, \ldots, n,
\end{equation*}
where $\widehat{\sigma}^2$ is an estimator of $\sigma^2$ obtained from a low-bias model (i.e. with $k = 2$). Using this criterion,
we adjust the training error by a factor proportional to the "degree of freedom". Then, the AIC choice for the optimal $k$ is
\begin{equation} \label{k_aic}
    k_{\textnormal{AIC}} \coloneqq \underset{k = 2, \ldots, n}{\textnormal{argmin}} \left\{ R_{\textnormal{AIC}}(f^k) \right\} - 1.
\end{equation}
Notice that in the mentioned case, AIC criterion is equivalent to Mallow's $C_p$ criterion (\cite{mallows2000some}). AIC criterion has been widely criticized in the literature, especially for the constant $2$ in the definition and/or its asymptotic nature. That is why some authors proposed corrections for the criterion (\cite{schwarz1978estimating, yang1999model}). Nevertheless, AIC, $C_p$, and other related criteria have been proved to satisfy non-asymptotic oracle-type inequalities (see \cite{birge2007minimal} and references therein). Notice that the computational time of the AIC criterion is $\mathcal{O}\left( n^3 \right)$.
\subsection*{Generalized cross-validation.}
The generalized (GCV) cross-validation strategy (\cite{craven1978smoothing, arlot2010survey}) was introduced in least-squares regression as a rotation-invariant version of the leave-one-out cross-validation procedure. The GCV estimator of the risk error of the linear estimator $A_k Y, \ k = 1, \ldots, n,$ is defined via
%The leave-p-out ($LpO$) cross-validation procedure \cite{arlot2010survey, celisse2018theoretical} is the most classical exhaustive cross-validation procedure. It is described as follows. $LpO$ successively considers all possible splits of the data $\mathcal{D}_n = \{ x_i, y_i \}_{i=1}^n$ into a training set of cardinality $n-p$ and a test set of cardinality $p$. The final $LpO$ estimator of the risk is the average (over all possible splits) of the mean-squared error estimated on each test set, that is, 
\begin{equation*}
    R_{\textnormal{GCV}}(f^k) = \frac{n^{-1}\lVert Y - A_k Y \rVert^2}{(1 - n^{-1}\textnormal{tr}(A_k))^2},    
\end{equation*}
The final model selection rule is
\begin{equation} \label{k_1_out}
    k_{\text{GCV}} \coloneqq \underset{k = 2, \ldots, n}{\textnormal{argmin}} \left\{ R_{\textnormal{GCV}}(f^k) \right\} - 1.
\end{equation}
GCV should be close to the AIC model selection procedure when the sample size $n$ is large. The asymptotic optimality of GCV, meaning that $\lVert f^{k_{\textnormal{GCV}}} - f^* \rVert_n^2 \ / \ \underset{k}{\inf}\lVert f^{k} - f^* \rVert_n^2 \to 1$ in probability, has been proved for the $k$-NN estimator in (\cite{li1987asymptotic}) under mild assumptions. As for the AIC criterion, the computational time of generalized cross-validation is $\mathcal{O}\left( n^3 \right)$ elementary operations.
\subsection*{Hold-out cross-validation stopping rule.}
The Hold-out cross-validation strategy (\cite{geisser1975predictive, arlot2010survey}) is described as follows. The data $\{ x_i, y_i \}_{i=1}^n$ are randomly split into two parts of equal size: the training sample $S_{\textnormal{train}} = \{ x_{\textnormal{train}}, y_{\textnormal{train}} \}$ and the test sample $S_{\textnormal{test}} = \{ x_{\textnormal{test}}, y_{\textnormal{test}} \}$ so that the training and test samples represent a half of the whole data set $\approx n/2$. For each $k = 1, \ldots, n$, one trains the $k$-NN estimator (\ref{estimator}) and evaluates its performance by $R_{\text{HO}}(f^k) = \frac{1}{n}\sum_{i \in S_{\textnormal{test}}}(f^k(x_i)- y_i)^2$, where $f^k(x_i)$ denotes the output of the algorithm trained for $k$ and evaluated at the point $x_i \in x_{\textnormal{test}}$. Then, the Hold-out CV stopping rule is defined via
\begin{equation} \label{k_ho}
    k_{\text{HO}} \coloneqq \underset{k = 2, \ldots, n}{\textnormal{argmin}} \left\{ R_{\text{HO}}(f^k) \right\} - 1. 
\end{equation}
The main inconvenience of this model selection rule is the fact that a part of the data is lost, which increases the risk error. Besides that, the Hold-out strategy is not stable (\cite{arlot2010survey}), which often requires some aggregation of it. The (asymptotic) computational time of the Hold-out strategy is $\mathcal{O}\left( n^3 \right)$ elementary operations.
\subsection*{$V$--fold cross-validation}
$V$--fold cross-validation is certainly the most used cross-validation procedure: the data $\{ (x_i, y_i) \}_{i=1}^n$ are randomly split into $V=5$ equal sized blocks, and at each round (among the $V$ ones), $V-1$ blocks are devoted to training $S_{\text{train}} = (x_{\text{train}}, y_{\text{train}})$, and the remaining one is used for the evaluation $S_{\text{test}} = (x_{\text{test}}, y_{\text{test}})$. The risk error of the $k$-NN estimator is estimated by $R_{\text{VFCV}}(f^k) = \frac{1}{V}\sum_{j=1}^{V} \frac{1}{n/V}\sum_{i \in S_{\text{test}}(j)}\left( f^k(x_i) - y_i \right)^2$, where $f^k(x_i)$ denotes the output of the algorithm trained for $k$ and evaluated at the point $x_i \in S_{\text{test}}(j)$, thus 
\begin{equation} \label{k_vfcv}
    k_{\text{VFCV}} \coloneqq \underset{k = 2, \ldots, n}{\textnormal{argmin}} \left\{ R_{\text{VFCV}}(f^k) \right\} - 1.
\end{equation}
$V$--fold cross-validation is a more computationally tractable solution than other splitting-based model selection methods, such as the leave-one-out (\cite{azadkia2019optimal}) or leave-$p$-out (\cite{hastie2009elements, arlot2010survey}). Usually, the optimal $V$ is equal to $5$ or $10$ due to the fact that the statistical error does not increase a lot for larger values of $V$ whereas averaging over more than $10$ folds becomes infeasible. To the best of our knowledge, there are no theoretical results for the $V$--fold cross validation model selection strategy with the $k$-NN regression estimator.

\subsection*{Theoretical bias-variance trade-off stopping rule}

The fourth stopping rule is the one introduced in Eq. (\ref{k_star}). This stopping rule is the classical bias-variance trade-off stopping rule (with a noise-level estimation correction) that provides minimax-optimal rates (see the monographs \cite{wasserman2006all, tsybakov2008introduction}):
\begin{equation}
    k^* = \inf \{ k \in \{ 1, \ldots, n \} \mid B^2(k) \geq V(k) +  2\mathbb{E}_{\varepsilon}R_2 - \sigma^2 \}. 
\end{equation}
The stopping rule $k^*$ is introduced for comparison purposes only because it \textit{cannot be computed} in practice. One can say that this stopping rule is minimax-optimal if $f^*$ belongs, for instance, to the class of Lipschitz functions on a bounded domain (\ref{lipschitz}). Therefore, it could serve as a (lower bound) reference in the present simulated experiments with artificial data.

%\subsection*{Oracle stopping rule} The "oracle" stopping rule is defined as \begin{equation} \label{k_or}
%    k_{\textnormal{or}} \coloneqq \underset{k = 1, \ldots, n}{\textnormal{argmin}} \left\{ \mathbb{E}_{\varepsilon} \lVert f^k - f^* \rVert_n^2 \right\}. 
%\end{equation}
%Note that this stopping rule is \textit{not computable} from the data, since one has to know the regression function $f^*$ to compute it. Moreover, we do not have access to the whole curve of the risk error. Nevertheless, it serves as a convenient lower bound on the risk error for the simulations with artificial data. 

\subsection{Artificial data} \label{artificial_data}

First, the goal is to perform some simulated experiments (a comparison of mentioned stopping rules) on artificial data.

\vspace{0.2cm}

\textbf{Description of the simulation design}

The data in this case is generated according to the regression model $y_j = f^*(x_j) + \varepsilon_j$, where $\varepsilon_j \overset{\text{i.i.d.}}{\sim} \mathcal{N}(0, \sigma^2)$ (Gaussian),  $j = 1, \ldots, n$. We choose the uniform covariates $x_j \overset{\text{i.i.d.}}{\sim} \mathbb{U}[0, 1]^3$, $j = 1, \ldots, n,$ and $\sigma = 0.15$. Consider two regression functions with different smoothness: a "smooth" $f_1^*(x) = 1.5 \cdot \left[ \lVert x - 0.5 \rVert / \sqrt{3} - 0.5 \right]$ and a "sinus" $f_2^*(x) = 1.5 \cdot \sin (\lVert x \rVert / \sqrt{3})$ for any $x \in [0, 1]^3$. Notice that both functions belong to the class of Lipschitz functions (\ref{lipschitz}) on $[0, 1]^3$. The sample size $n$ varies from $50$ to $250$.

The $k$-NN algorithm (\ref{estimator}) is trained first for $k = \floor*{\sqrt{n}}$, then we decrease the value of $k$ until $k = 1$ such that at each step of the iteration procedure we increase the variance of the $k$-NN estimator $V(k) = \sigma^2/k$ (cf. Fig. \ref{fig:bvr}). In other words, the model becomes more complex successively due to the increase of its "degree of freedom" measured by $\textnormal{tr}(A_k) = n/k$. If the condition in Eq. (\ref{k_tau}) is satisfied, the learning process is stopped, and it outputs the stopping rule $k^{\tau}$. 

The performance of the stopping rules is measured in terms of the empirical $L_2(\mathbb{P}_n)$ norm $\lVert f^k - f^* \rVert_n^2$ averaged over $N = 1000$ repetitions (over the noise $\{ \varepsilon_j \}_{j=1}^n$).  

%For our simulations, we use a consistent (low-bias) estimator of $\sigma^2$ described in Section \ref{real_data} (see Eq. (\ref{var_est})).

\begin{figure}
\begin{subfigure}[b]{0.45\textwidth}
\begin{tikzpicture}[scale=0.7]
\tikzstyle{every node}=[font=\normalsize]
\begin{axis}[
    title={$k$-NN regressor, $\sigma = 0.15$, smooth r.f.},
    xlabel={Sample size},
    ylabel={Average loss},
    xmin=35, xmax=265,
    ymin=0.003, ymax=0.0125,
    ytick={0.005,0.007,0.009},
    xtick={50, 100, 150, 200, 250},
    legend pos=north east,
    ymajorgrids=true,
    xmajorgrids=true,
    grid style=dashed,
]
%\addplot[
%    color=yellow,
%    mark=halfcircle*,
%    ]
%    coordinates {
%    (50,0.00927158)(80,0.00716058)(100,0.0060054)(160,0.00518234)(200,0.00512319)(250,0.00420769)
%    };
%    \addlegendentry{$k_{\textnormal{or}}$}
\addplot+[
    color=green,
    mark=star,
    ][error bars/.cd,y dir=both, y explicit]
    coordinates {
    (50,0.00760317)+-(0, 0.00188864)(80,0.00742744)+-(0, 0.00154863)(100,0.00580828)+-(0,0.00145157)(160,0.00517477)+-(0, 0.00100247)(200,0.00501453)+-(0, 0.00084924)(250,0.00442324)+-(0, 0.00070415)
    };
    \addlegendentry{$k^{\tau}$}
    %\caption{(b)}
\addplot+[
    color=red,
    mark=+,
    ][error bars/.cd,y dir=both, y explicit]
    coordinates {
    (50, 0.00910639)+-(0, 0.00323673)(80, 0.00796068)+-(0, 0.00199004)(100, 0.00733271)+-(0, 0.00238249)(160, 0.0055938)+-(0, 0.00133856)(200, 0.00563115)+-(0, 0.00150105)(250,0.0049576)+-(0, 0.0012489)
    };
    \addlegendentry{$k_{\textnormal{HO}}$}
\addplot+[
    color=black,
    mark=x,
    ][error bars/.cd,y dir=both, y explicit]
    coordinates {
    (50, 0.00761403)+-(0, 0.00203475)(80, 0.0074194)+-(0, 0.00163831)(100, 0.00568851)+-(0, 0.001327)(160, 0.00512856)+-(0, 0.00097094)(200, 0.00491604)+-(0, 0.00085104)(250,0.00433407)+-(0, 0.00070607)
    };
    \addlegendentry{$k_{\textnormal{GCV}}$}
\addplot+[
    color=blue,
    mark=star,
    ][error bars/.cd,y dir=both, y explicit]
    coordinates {
    (50, 0.0076303)+-(0, 0.00169646)(80, 0.00716394)+-(0, 0.00142898)(100, 0.00568271)+-(0, 0.00119752)(160, 0.00510824)+-(0, 0.00089136)(200, 0.00475091)+-(0, 0.0007806)(250,0.00418769)+-(0, 0.00065183)
    };
    \addlegendentry{$k^{*}$}
\end{axis}
\end{tikzpicture}
\caption{}
\label{smooth}
\end{subfigure}
\qquad
\begin{subfigure}[b]{0.45\textwidth}
\begin{tikzpicture}[scale=0.7]
\tikzstyle{every node}=[font=\normalsize]
\begin{axis}[
    title={$k$-NN regressor, $\sigma = 0.15$, sinus r.f.},
    xlabel={Sample size},
    ylabel={Average loss},
    xmin=35, xmax=265,
    ymin=0.0018, ymax=0.0125,
    xtick={50,100,150,200,250},
    ytick={0.003,0.005, 0.008},
    legend pos=north east,
    ymajorgrids=true,
    xmajorgrids=true,
    grid style=dashed,
]
%\addplot[
%    color=yellow,
%    mark=halfcircle*,
%    ]
%    coordinates {
%    (50,0.00836691)(80,0.00477626)(100,0.00494944)(160,0.00389686)(200,0.00344039)(250,0.00275263)
%    };
%    \addlegendentry{$k_{\textnormal{or}}$}
\addplot+[
    color=green,
    mark=star,
    ][error bars/.cd,y dir=both, y explicit]
    coordinates {
    (50,0.00809793)+-(0, 0.00217486)(80,0.00632696)+-(0, 0.00149826)(100,0.00547978)+-(0, 0.00151349)(160,0.00387815)+-(0, 0.00093082)(200,0.00379687)+-(0, 0.00083139)(250,0.00293352)+-(0, 0.0007208)
    };
    \addlegendentry{$k^{\tau}$}
    %\caption{(a)}
\addplot+[
    color=red,
    mark=+,
    ][error bars/.cd,y dir=both, y explicit]
    coordinates {
    (50, 0.00905597)+-(0, 0.00307869)(80, 0.00833658)+-(0, 0.00281476)(100, 0.00665805)+-(0, 0.00238971)(160, 0.00458288)+-(0, 0.00170099)(200, 0.0044963)+-(0, 0.00151056)(250,0.00327392)+-(0, 0.00104712)
    };
    \addlegendentry{$k_{\textnormal{HO}}$}
\addplot+[
    color=black,
    mark=x,
    ][error bars/.cd,y dir=both, y explicit]
    coordinates {
    (50, 0.00814592)+-(0, 0.00219079)(80, 0.00643671)+-(0, 0.00157249)(100, 0.0054034)+-(0, 0.00140922)(160, 0.00387545)+-(0, 0.00086278)(200, 0.00377195)+-(0, 0.00082525)(250,0.00294051)+-(0, 0.00062037)
    };
    \addlegendentry{$k_{\textnormal{GCV}}$}
\addplot+[
    color=blue,
    mark=star,
    ][error bars/.cd,y dir=both, y explicit]
    coordinates {
    (50, 0.00782676)+-(0, 0.00212944)(80, 0.00623797)+-(0, 0.00141762)(100, 0.00522841)+-(0, 0.00127438)(160, 0.00371328)+-(0, 0.00078728)(200, 0.00360601)+-(0, 0.0007269)(250,0.00281642)+-(0, 0.00054082)
    };
    \addlegendentry{$k^{*}$}
\end{axis}
\end{tikzpicture}
\caption{}
\label{sinus}
\end{subfigure}
\caption{$k$-NN estimator (\ref{estimator}) with two noised regression functions: smooth $f_1^*(x) = 1.5 \cdot \left[ \lVert x - 0.5 \rVert / \sqrt{3} - 0.5 \right]$ for panel (a) and "sinus" $f_2^*(x) = 1.5 \cdot \textnormal{sin}(\lVert x \rVert / \sqrt{3})$ for panel (b), with uniform covariates $x_j \overset{\textnormal{i.i.d.}}{\sim} \mathbb{U}[0, 1]^3$. Each curve corresponds to the $L_2(\mathbb{P}_n)$ squared norm error for the stopping rules (\ref{k_tau}), (\ref{k_star}), (\ref{k_ho}), (\ref{k_1_out}), averaged over $1000$ independent trials, versus the sample size $n = \{50, 80, 100, 160, 200, 250 \}$.}
 \label{fig:comp_knn_art}
\end{figure}

\textbf{Results of the simulation experiments.}

Figure \ref{fig:comp_knn_art} displays the resulting (averaged over $1000$ repetitions) $L_2(\mathbb{P}_n)$-error of $k^{\tau}$ (\ref{k_tau}), $k^*$ (\ref{k_star}), $k_{\text{HO}}$ (\ref{k_ho}), and $k_{\text{GCV}}$ (\ref{k_1_out}) versus the sample size $n$. In particular, Figure \ref{smooth} shows the results for the "smooth" regression function, whereas Figure \ref{sinus} provides the results for the "sinus" regression function. 

First, from all the graphs, all curves do not increase as the sample size $n$ grows. One can notice that in both graphs, the prediction error of the Hold-out strategy is the worst among the model selection criteria.
%and $k_{\text{GCV}}$ has been proved to be an asymptotically optimal model selection strategy \cite{li1987asymptotic}. 

In more detail, Figure \ref{smooth} indicates that the best performance is achieved by $k^*$ (non-computable in practice bias-variance trade-off).
Besides that, the minimum discrepancy principle rule $k^{\tau}$ is uniformly better than $k_{\textnormal{HO}}$ and has the same performance as the one of $k_{\textnormal{GCV}}$. Moreover, the gap between $k^{\tau}$ and $k^*$ is getting smaller for the sample sizes $n \geq 160$. This behavior supports the theoretical part of the present paper because $k^{\tau}$ should serve as an estimator of $k^*$. Since $k^*$ is the well-known bias-variance trade-off, the minimum discrepancy principle stopping rule seems to be a meaningful model selection method.

In Figure \ref{sinus}, the best performance is achieved again by $k^*$ -- a non-computable in practice stopping rule. As for the data-driven model selection methods, the stopping rules $k^{\tau}$ and $k_{\text{GCV}}$ (an asymptotically optimal model selection strategy) perform almost equivalently.

\subsection{Real data} \label{real_data}

Here, we tested the performance (prediction error and runtime) of the early stopping rule $k^{\tau}$ (\ref{k_tau}) for choosing the hyperparameter in the $k$-NN estimator on four different data sets mostly taken from the UCI repository (\cite{Dua:2019}).

\textbf{Data sets description}

%The wine quality data set (Wine Quality) contains $11$-dimensional input points corresponding to the physico-chemistry of wine samples, the output points are the wine quality. 

The housing data set "Boston Housing Prices" concerns the task of predicting housing values in areas of Boston (USA), the input points are $13$-dimensional. 

The "Diabetes" data set consists of 10 columns that measure different patient's characteristics (age, sex, body mass index, etc), the output is a quantitative measure of disease progression one year after the baseline. 

The "Power Plants" data set contains 9568 data points collected from a Combined Cycle Power Plant over 6 years (2006-2011), when the plant was set to work with the full load. 

"California Houses Prices" data set (\cite{pace1997sparse}) contains information from the 1990's California census. The input variables are "total bedrooms", "total rooms", etc. The output variable is the median house value for households within a block (measured in US Dollars). 

Notice that for "California Houses Prices" and "Power Plants" data sets we take the first 3000 data points in order to speed up the calculations. 

\vspace{0.2cm}

\textbf{Description of the simulation design}

Assume that we are given one of the data sets described above. Let us rescale each variable of this data set $\widetilde{x} \in \mathbb{R}^n$ such that all the components $\widetilde{x}_i, \ i = 1, \ldots, n$, belong to $[0, 1]$:
\begin{equation*}
    \widetilde{x}_i = \frac{\widetilde{x}_i - \min(\widetilde{x})}{\max(\widetilde{x}) - \min(\widetilde{x})}, \ i = 1, \ldots, n,
\end{equation*}
where $\min(\widetilde{x})$ and $\max(\widetilde{x})$ denote the minimum and the maximum component of the vector $\widetilde{x}$.

Following that, we split the data set into two parts: one is denoted $S_{\textnormal{train}} = \{ x_{\textnormal{train}}, y_{\textnormal{train}} \}$ (70 \% of the whole data) and is made for training and model selection (model selection rules $k^{\tau}$, $k_{\text{GCV}}$, $k_{\text{5FCV}}$, and $k_{\text{AIC}}$), the other one (30 \% of the whole data) is denoted $S_{\textnormal{test}} = \{ x_{\textnormal{test}}, y_{\textnormal{test}} \}$ and is made for making prediction on it. We denote $n_{\textnormal{train}}$ and $n_{\textnormal{test}}$ as the sample sizes of $S_{\textnormal{train}}$ and $S_{\textnormal{test}}$, respectively. Then, our experiments' design is divided into four parts.

\vspace{0.15cm}

At the beginning, we create a grid of sub-sample size for each data set:
\begin{equation} \label{subsamples}
    n_s \in \Big\{\floor*{n_{\textnormal{train}}/5}, \floor*{n_{\textnormal{train}}/4}, \floor*{n_{\textnormal{train}}/3}, \\ \floor*{n_{\textnormal{train}}/2}, n_{\textnormal{train}}\Big\},
\end{equation}
and a grid of the maximum number of neighbors $k_{\textnormal{max}} = 3 \floor*{\log(n_s)}$, where $n_{\textnormal{train}} = \ceil*{0.7n}$ and $n$ is the sample size of the whole data. 

Further, for each data set and sub-sample size from Eq. (\ref{subsamples}), we estimate the noise variance $\sigma^2$ from the regression model (\ref{nonparametric_regression_model}) for the AIC criterion. In our simulated experiments, we take the estimator from \cite[Eq. (5.86)]{wasserman2006all}, which is a consistent estimator of $\sigma^2$ under an assumption that $f^*$ is "sufficiently smooth".
\begin{equation} \label{var_est}
    \widehat{\sigma}^2 \coloneqq \frac{\lVert (I_{n_s} - A_k)y_{s} \rVert^2}{n_s (1 - 1/k)} \ \ \ \textnormal{ with } \ \ \ k = 2, 
\end{equation}
where $y_s$ corresponds to the vector of responses from the chosen sub-samples.
After that, we compute our stopping rule $k^{\tau}$ and other model selection strategies from Section \ref{description_early_stopping_rules}. To do that, for each data set and each integer $n_s$ from Eq. (\ref{subsamples}), we randomly sample $n_s$ data points from $S_{\textnormal{train}}$, compute the $k$-NN estimator (\ref{estimator}) and the empirical risk (\ref{empirical_risk}) for $k_{\textnormal{max}}$, and at each step of the iteration process we reduce the value of $k$ by one. Notice that one does not have to calculate the neighborhood matrix $A_k$ for each $k \in \{1, \ldots, k_{\textnormal{max}}\}$, since it is sufficient to do only for $k_{\textnormal{max}}$ (cf. Eq. (\ref{iteration_for_matrix})). This process is repeated until the empirical risk crosses the threshold $2R_2$. Fig. \ref{fig:process_learning} provides two illustrations of the minimum discrepancy strategy $k^{\tau}$ for two data sets: "Diabetes" and "Boston Housing Prices".

Following that, the AIC criterion (\ref{k_aic}), $5$--fold cross-validation (\ref{k_vfcv}), and the generalized cross-validation $k_{\textnormal{GCV}}$ are calculated: we start by defining a grid of values for $k: \{ 1, 2, \ldots, k_{\textnormal{max}} \}$, and one should compute $k_{\textnormal{AIC}}$, $k_{\textnormal{GCV}}$, and $k_{\textnormal{5FCV}}$ from Eq. (\ref{k_aic}), Eq. (\ref{k_1_out}), and Eq. (\ref{k_vfcv}) over the mentioned grid.

In the final part, given $k^{\tau}$, $k_{\text{AIC}}$, $k_{\text{5FCV}}$, and $k_{\text{GCV}}$, the goal is to make a prediction on the test data set $x_{\textnormal{test}}$. This can be done as follows. Assume that $x_0 \in x_{\textnormal{test}}$, then the prediction of the $k$-NN estimator on this point can be defined as
\begin{equation*}
    f^k(x_0) = a_k(x_0)^\top y_{s},
\end{equation*}
where $a_k(x_0) = [a_k(x_0, x_1), \ldots, a_k(x_0, x_{n_s})]^\top$ and $x_{s} = [x_1^{\top}, \ldots, x_{n_{s}}^\top]^{\top}$, with $a_k(x_0, x_i) = 1/k$ if $x_i, \ i \in \{1, \ldots, n_{\textnormal{train}} \}$, is a nearest neighbor of $x_0$, otherwise $0$. Further, one can choose $k$ to be equal $k^{\tau}$, $k_{\text{AIC}}$, $k_{\text{5FCV}}$ or $k_{\text{GCV}}$ that are already computed. Combining all the steps together, one is able to calculate the least-squares prediction error $\lVert f^k - y_{\textnormal{test}} \rVert$.

For each sub-sample size $n_s$ from Eq. (\ref{subsamples}) and data set, the procedure has to be performed $25$ times (via new sub-samples from the data set). 

\vspace{0.15cm}

\textbf{Results of the simulation experiments.}

Figures \ref{small_datasets_results} and \ref{large_datasets_results} display the averaged (over 25 repetitions) runtime (in seconds) and the prediction error of the model selection rules $k^{\tau}$ (\ref{k_tau}), $k_{\textnormal{AIC}}$ (\ref{k_aic}), 5-fold cross-validation (\ref{k_vfcv}), and generalized cross-validation (\ref{k_1_out}) for "Boston Housing Prices", "Diabetes" (in Figure \ref{small_datasets_results}), and "California Houses Prices", "Power Plants" data sets (in Figure \ref{large_datasets_results}).

Figures \ref{runtime_boston}, \ref{runtime_diabetes} indicate that the minimum discrepancy principle rule $k^{\tau}$ has the smallest runtime among the model selection criteria. At the same time, Figure \ref{prediction_error_boston} shows that the prediction error of $k^{\tau}$ has better performance than that of $k_{\textnormal{AIC}}$, $k_{\textnormal{5FCV}}$, and $k_{\textnormal{GCV}}$. Figure \ref{pred_error_diabetes} indicates that the performance of the minimum discrepancy stopping rule $k^{\tau}$ is better than that of $k_{\textnormal{AIC}}$, $k_{\textnormal{5FCV}}$, and similar to that of $k_{\textnormal{GCV}}$.
Let us turn to the results for the "California Houses Prices" and "Power Plants" data sets. Figures \ref{runtime_california}, \ref{runtime_power} show the runtime of the stopping rules: one can conclude that the computational time of the minimum discrepancy rule $k^{\tau}$ is less than the computational time of the generalized cross-validation, AIC criterion, and $5$ fold cross-validation. Figures \ref{prediction_error_california}, \ref{prediction_error_power} display the prediction performance of the model selection rules: for the "California Houses Prices" data set, the prediction performance of $k^{\tau}$ is comparable to the performance of $k_{\textnormal{GCV}}$ and is uniformly better than the $5$FCV rule $k_{\textnormal{5FCV}}$ and $k_{\textnormal{AIC}}$; for the "Power Plants" data set, the prediction error of the minimum discrepancy principle is similar to that of $k_{\textnormal{AIC}}, k_{\textnormal{GCV}}$, and $k_{\textnormal{5FCV}}$ for the sub-sample sizes $n_s \leq 1000$, and is little better than generalzed cross-validation and 5FCV for $n_s = 2100$. 
%The computational time of the Hold-out strategy $k_{\textnormal{HO}}$ is better than, for instance, the computational time of $k^{\tau}$ and $k_{\textnormal{GCV}}$ due to the fact that the maximum number of neighbors $k_{\textnormal{max}} = 3 \floor*{\log(n_s)}$ is relatively small compared to the sub-sample size $n_s$ (hence, to the size of Hold-out split $n_s/2$), which was not true for the smaller data sets, "Boston Housing Prices" and "Diabetes".

The overall conclusion from the simulation experiments is that the prediction error of the MDP stopping rule $k^{\tau}$ is often better than for standard model selection strategies, such as the AIC criterion or $5$--fold cross-validation, while its computational time is lower.   

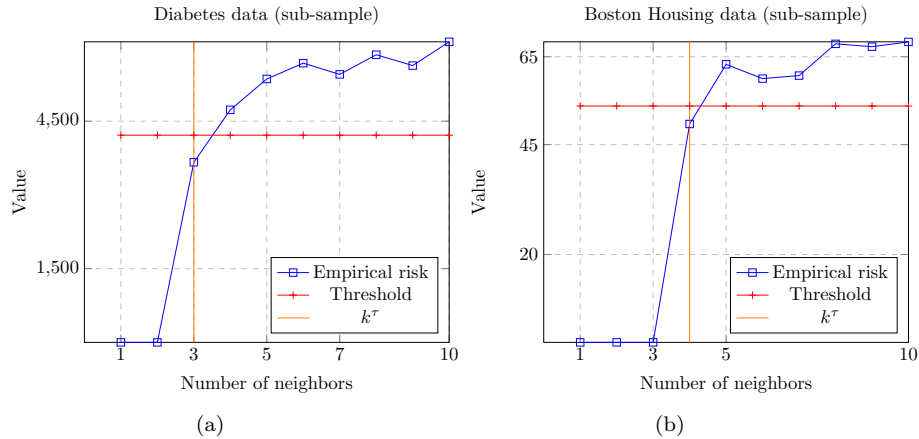
\begin{figure}
\begin{subfigure}[b]{0.45\textwidth}
\begin{tikzpicture}[scale=0.7]
\tikzstyle{every node}=[font=\normalsize]
\begin{axis}[
    title={Diabetes data (sub-sample)},
    xlabel={Number of neighbors},
    ylabel={Value},
    xmin=0, xmax=10,
    ymin=0, ymax=6114.66446281,
    xtick={1,3,5,7, 10},
    ytick={1500, 4500},
    legend pos=south east,
    ymajorgrids=true,
    xmajorgrids=true,
    grid style=dashed,
]

\addplot[
    color=blue,
    mark=square,
    ]
    coordinates {
    (1,0)(2,0)(3,3664.18333333)(4,4731.1125)(5,5358.508)(6,5680.28472222)(7,5452.15408163)(8,5851.84765625)(9,5631.3308642)(10,6114.66446281)
    };
    \addlegendentry{Empirical risk}
    %\caption{(a)}
\addplot[
    color=red,
    mark=+,
    ]
    coordinates {
    (1,4215.1)(2,4215.1)(3,4215.1)(4,4215.1)(5,4215.1)(6,4215.1)(7,4215.1)(8,4215.1)(9,4215.1)(10,4215.1)
    };
    \addlegendentry{Threshold}
\addplot +[mark=none, color=orange] coordinates {(3, 0)(3, 6114.66446281)};
\addlegendentry{$k^{\tau}$}
\end{axis}
\end{tikzpicture}
\caption{}
\label{a}
\end{subfigure}
\qquad
\begin{subfigure}[b]{0.45\textwidth}
\begin{tikzpicture}[scale=0.7]
\tikzstyle{every node}=[font=\normalsize]
\begin{axis}[
    title={Boston Housing data (sub-sample)},
    xlabel={Number of neighbors},
    ylabel={Value},
    xmin=0, xmax=10,
    ymin=0, ymax=68.4,
    xtick={1,3,5,10},
    ytick={20, 45, 65},
    legend pos=south east,
    ymajorgrids=true,
    xmajorgrids=true,
    grid style=dashed,
]

\addplot[
    color=blue,
    mark=square,
    ]
    coordinates {
    (1,0)(2,0)(3,0)(4,49.69747283)(5,63.26987826)(6,60.0285628)(7,60.71023957)(8,67.93981658)(9,67.28213634)(10,68.37776957)
    };
    \addlegendentry{Empirical risk}
    %\caption{(b)}
\addplot[
    color=red,
    mark=+,
    ]
    coordinates {
    (1, 53.79369565)(2, 53.79369565)(3, 53.79369565)(4, 53.79369565)(5, 53.79369565)(6,53.79369565)(7,53.79369565)(8,53.79369565)(9,53.79369565)(10,53.79369565)
    };
    \addlegendentry{Threshold}
\addplot +[mark=none, color=orange] coordinates {(4, 0)(4, 6114.66446281)};
\addlegendentry{$k^{\tau}$}
\end{axis}
\end{tikzpicture}
\caption{}
\label{b}
\end{subfigure}
\caption{Stopping the learning process based on the rule (\ref{k_tau}) applied to two data sets: a) "Diabetes" and b) "Boston Housing Prices". "Threshold" horizontal line corresponds to the estimated variance equal to $2R_2$.}
\label{fig:process_learning}
\end{figure}

\begin{figure}
\begin{subfigure}[b]{0.45\textwidth}
\begin{tikzpicture}[scale=0.65]
\tikzstyle{every node}=[font=\normalsize]
\begin{axis}[
    title={Boston H.P., runtime},
    xlabel={Sub-sample size},
    ylabel={Average time (sec)},
    xmin=55, xmax=370,
    ymin=0.0002, ymax=0.02,
    xtick={70,100,140,200,300,400},
    ytick={0.01,0.1},
    legend pos=north west,
    ymajorgrids=true,
    xmajorgrids=true,
    grid style=dashed,
]

\addplot+[
    color=green,
    mark=square,
    ][error bars/.cd,y dir=both, y explicit]
    coordinates {
    (70,0.00088333)+-(0, 2.62786401e-04)(88,0.00105345)+-(0, 2.07151095e-05)(118,0.00159296)+-(0, 2.44510842e-05)(177,0.00368924)+-(0, 5.98135386e-04)(354,0.01041318)+-(0, 1.57326189e-04)
    };
    \addlegendentry{$k^{\tau}$}
    %\caption{(a)}
\addplot+[
    color=red,
    mark=+,
    ][error bars/.cd,y dir=both, y explicit]
    coordinates {
    (70, 0.00090446)+-(0, 2.04332719e-04)
    (88, 0.00108907)+-(0, 8.86841428e-06)
    (118, 0.001619)+-(0, 1.10579102e-05)
    (177, 0.00349167)+-(0, 8.03711066e-05)
    (354, 0.01184145)+-(0, 1.07254010e-03)
    };
    \addlegendentry{$k_{\textnormal{AIC}}$}
\addplot+[
    color=black,
    mark=x,
    ][error bars/.cd,y dir=both, y explicit]
    coordinates {
    (70, 0.00090341)+-(0, 2.56110337e-04)
    (88, 0.00108125)+-(0, 1.01114036e-05)
    (118, 0.00161082)+-(0, 1.21529918e-05)
    (177, 0.00354301)+-(0, 1.6871201e-04)
    (354, 0.01137727)+-(0, 1.32377524e-03)
    };
    \addlegendentry{$k_{\textnormal{GCV}}$}
\addplot+[
    color=blue,
    mark=*,
    ][error bars/.cd,y dir=both, y explicit]
    coordinates {
    (70, 0.00183)+-(0, 3.66798504e-04)
    (88, 0.00211412)+-(0, 1.90538661e-05)
    (118, 0.00295113)+-(0, 2.21499888e-05)
    (177, 0.00586444)+-(0, 5.12052634e-05)
    (354, 0.01828056)+-(0, 1.29027136e-03)
    };
    \addlegendentry{$k_{\textnormal{5FCV}}$}

\end{axis}
\end{tikzpicture}
\caption{}
\label{runtime_boston}
\end{subfigure}
\qquad
\begin{subfigure}[b]{0.45\textwidth}
\begin{tikzpicture}[scale=0.65]
\tikzstyle{every node}=[font=\normalsize]
\begin{axis}[
    title={Boston H.P., prediction error},
    xlabel={Sub-sample size},
    ylabel={Average loss},
    xmin=55, xmax=370,
    ymin=50, ymax=85,
    xtick={70,100,140,200,300,400},
    ytick={60, 70, 80},
    legend pos=north east,
    ymajorgrids=true,
    xmajorgrids=true,
    grid style=dashed,
]

\addplot+[
    color=green,
    mark=square,
    ][error bars/.cd,y dir=both, y explicit]
    coordinates {
    (70,75.20453421)+-(0, 6.21484738e+00)(88,73.5148997)+-(0, 3.07344191e+00)(118,70.06091264)+-(0, 4.37484465e+00)(177,62.65990106)+-(0, 4.68084508e+00)(354,51.64092369)+-(0, 7.10542736e-15)
    };
    \addlegendentry{$k^{\tau}$}
    %\caption{(b)}
\addplot+[
    color=red,
    mark=+,
    ][error bars/.cd,y dir=both, y explicit]
    coordinates {
    (70, 75.37342967)+-(0, 5.81410471)(88, 75.07509489)+-(0, 4.22851383)(118, 70.5055555)+-(0, 4.77139767)(177, 62.95261192)+-(0, 5.4704744)(354, 52.57261645)+-(0, 0)
    };
    \addlegendentry{$k_{\textnormal{AIC}}$}
\addplot+[
    color=black,
    mark=x,
    ][error bars/.cd,y dir=both, y explicit]
    coordinates {
    (70, 75.13191115)+-(0, 5.65246733e+00)(88, 72.99322637)+-(0, 3.56012765e+00)(118, 70.84480346)+-(0, 4.38034112e+00)(177, 63.34763725)+-(0, 5.51154089e+00)(354, 53.55843123)+-(0, 0)
    };
    \addlegendentry{$k_{\textnormal{GCV}}$}
\addplot+[
    color=blue,
    mark=*,
    ][error bars/.cd,y dir=both, y explicit]
    coordinates {
    (70, 75.19499164)+-(0, 7.11206602)(88, 73.52951529)+-(0, 6.68918027)(118, 70.9866001)+-(0, 4.71953746)(177, 64.23005828)+-(0, 4.11275903)(354, 52.27447477)+-(0, 0.43461245)
    };
    \addlegendentry{$k_{\textnormal{5FCV}}$}
\end{axis}
\end{tikzpicture}
\caption{}
\label{prediction_error_boston}
\end{subfigure}
\\
\begin{subfigure}[b]{0.45\textwidth}
\begin{tikzpicture}[scale=0.65]
\tikzstyle{every node}=[font=\normalsize]
\begin{axis}[
    title={Diabetes, runtime},
    xlabel={Sub-sample size},
    ylabel={Average time (sec)},
    xmin=50, xmax=320,
    ymin=0.0001, ymax=0.01,
    xtick={61,100,140,200,300},
    ytick={0.01,0.05},
    legend pos=north west,
    ymajorgrids=true,
    xmajorgrids=true,
    grid style=dashed,
]

\addplot+[
    color=green,
    mark=square,
    ][error bars/.cd,y dir=both, y explicit]
    coordinates {
    (61,0.00066955)+-(0, 2.41996690e-04)(77,0.00086998)+-(0, 3.36777956e-05)(103,0.00119806)+-(0, 2.87908164e-05)(154,0.0026118)+-(0, 5.72568209e-04)(309,0.00823844)+-(0, 6.51476927e-04)
    };
    \addlegendentry{$k^{\tau}$}
    %\caption{(a)}
\addplot+[
    color=red,
    mark=+,
    ][error bars/.cd,y dir=both, y explicit]
    coordinates {
    (61, 0.00075775)+-(0, 1.38725235e-04)
    (77, 0.00091011)+-(0, 1.09472341e-05)
    (103, 0.00131685)+-(0, 1.51961531e-05)
    (154, 0.00291117)+-(0, 2.84405746e-04)
    (309, 0.00897405)+-(0, 6.02445653e-04)
    };
    \addlegendentry{$k_{\textnormal{AIC}}$}
\addplot+[
    color=black,
    mark=x,
    ][error bars/.cd,y dir=both, y explicit]
    coordinates {
    (61, 0.00075202)+-(0, 1.54020749e-04)
    (77, 0.0009038)+-(0, 1.58490802e-05)
    (103, 0.00130192)+-(0, 1.21148323e-05)
    (154, 0.00288856)+-(0, 3.39537075e-04)
    (309, 0.00875706)+-(0, 4.37810795e-04)
    };
    \addlegendentry{$k_{\textnormal{GCV}}$}
\addplot+[
    color=blue,
    mark=*,
    ][error bars/.cd,y dir=both, y explicit]
    coordinates {
    (61, 0.00157907)+-(0, 2.88243091e-04)
    (77, 0.00179781)+-(0, 1.48375420e-05)
    (103, 0.00243689)+-(0, 2.19849835e-05)
    (154, 0.00477407)+-(0, 2.71445501e-04)
    (309, 0.01394191)+-(0, 5.62921184e-04)
    };
    \addlegendentry{$k_{\textnormal{5FCV}}$}
\end{axis}
\end{tikzpicture}
\caption{}
\label{runtime_diabetes}
\end{subfigure}
\qquad
\begin{subfigure}[b]{0.45\textwidth}
\begin{tikzpicture}[scale=0.65]
\tikzstyle{every node}=[font=\normalsize]
\begin{axis}[
    title={Diabetes, prediction error},
    xlabel={Sub-sample size},
    ylabel={Average loss},
    xmin=50, xmax=320,
    ymin=600, ymax=770,
    xtick={61,100,140,200,300},
    ytick={650, 700, 725},
    legend pos=north east,
    ymajorgrids=true,
    xmajorgrids=true,
    grid style=dashed,
]

\addplot+[
    color=green,
    mark=square,
    ][error bars/.cd,y dir=both, y explicit]
    coordinates {
    (61,694.73549068)+-(0, 35.52865567)(77,676.43197929)+-(0, 26.84936344)(103,664.67351685)+-(0, 32.89202372)(154,641.60369852)+-(0, 21.91584539)(309,629.4821625)+-(0, 0)
    };
    \addlegendentry{$k^{\tau}$}
    %\caption{(b)}
\addplot+[
    color=red,
    mark=+,
    ][error bars/.cd,y dir=both, y explicit]
    coordinates {
    (61, 705.34847532)+-(0, 49.93668494)
    (77, 674.62784833)+-(0, 33.60500198)
    (103, 666.22093732)+-(0, 33.09974021)
    (154, 649.06815285)+-(0, 27.21999454)
    (309, 629.4821625)+-(0, 0)
    };
    \addlegendentry{$k_{\textnormal{AIC}}$}
\addplot+[
    color=black,
    mark=x,
    ][error bars/.cd,y dir=both, y explicit]
    coordinates {
    (61, 698.73549068)+-(0, 40.52865567)
    (77, 676.43197929)+-(0, 33.84936344)
    (103, 665.67351685)+-(0, 32.89202372)
    (154, 644.60369852)+-(0, 21.91584539)
    (309, 629.4821625)+-(0, 0)
    };
    \addlegendentry{$k_{\textnormal{GCV}}$}
\addplot+[
    color=blue,
    mark=*,
    ][error bars/.cd,y dir=both, y explicit]
    coordinates {
    (61, 725.96038738)+-(0, 68.27957763)
    (77, 688.52927512)+-(0, 41.16554674)
    (103, 667.82796817)+-(0, 27.48797972)
    (154, 652.75239488)+-(0, 27.13117145)
    (309, 633.99680041)+-(0, 5.80998993)
    };
    \addlegendentry{$k_{\textnormal{5FCV}}$}
\end{axis}
\end{tikzpicture}
\caption{}
\label{pred_error_diabetes}
\end{subfigure}
\caption{Runtime (in seconds) and $L_2(\mathbb{P}_n)$ prediction error versus sub-sample size for different model selection methods: MD principle (\ref{k_tau}), AIC (\ref{k_aic}), GCV (\ref{k_1_out}), and $5$--fold cross-validation (\ref{k_vfcv}), tested on the "Boston Housing Prices" and "Diabetes" data sets. In all cases, each point corresponds to the average of $25$ trials. (a), (c) Runtime verus the sub-sample size $n \in \{70, 88, 118, 177, 354 \}$. (b), (d) Least-squares prediction error $\lVert f^k - y_{\textnormal{test}} \rVert$ versus the sub-sample size $n \in \{70, 88, 118, 177, 354 \}$.}
\label{small_datasets_results}
\end{figure}
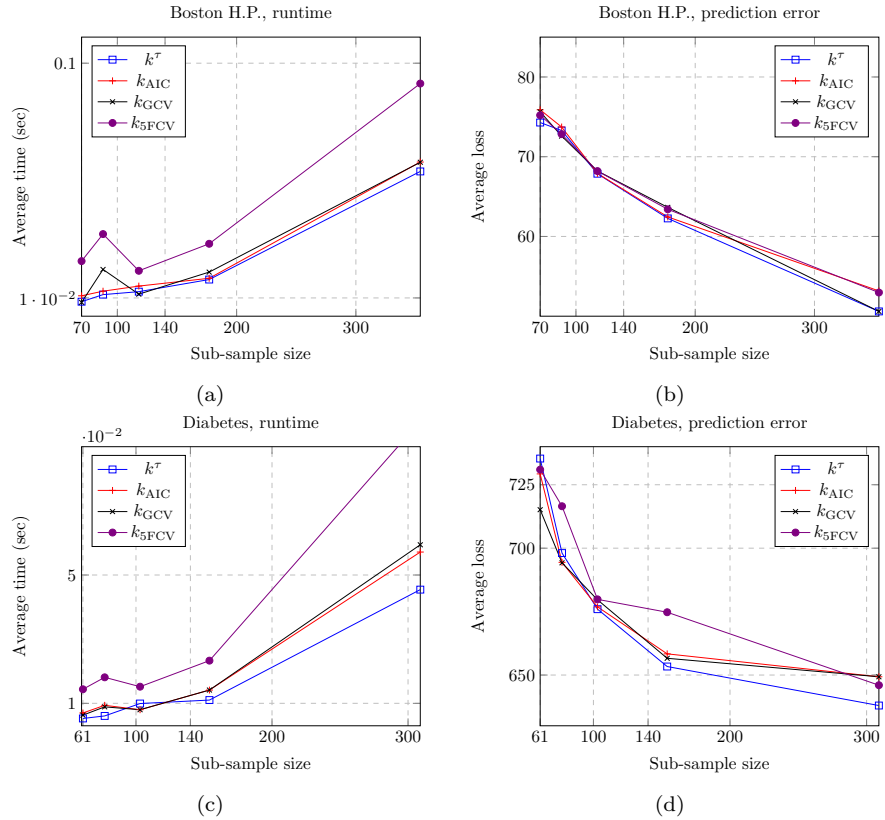

\begin{figure}
\begin{subfigure}[b]{0.45\textwidth}
\begin{tikzpicture}[scale=0.65]
\tikzstyle{every node}=[font=\normalsize]
\begin{axis}[
    title={California H.P., runtime},
    xlabel={Sub-sample size},
    ylabel={Average time (sec)},
    xmin=400, xmax=2120,
    ymin=0.001, ymax=0.7,
    xtick={500,700,1000,2000},
    ytick={0.1,0.5, 1},
    legend pos=north west,
    ymajorgrids=true,
    xmajorgrids=true,
    grid style=dashed,
]

\addplot+[
    color=green,
    mark=square,
    ][error bars/.cd,y dir=both, y explicit]
    coordinates {
    (420,0.0150546)+-(0, 0.0009536)(525,0.02358903)+-(0, 0.00385092)(700,0.04246296)+-(0, 0.00421129)(1050,0.09039716)+-(0, 0.00689664)(2100,0.34701059)+-(0, 0.02989554)
    };
    \addlegendentry{$k^{\tau}$}
    %\caption{(a)}
\addplot+[
    color=red,
    mark=+,
    ][error bars/.cd,y dir=both, y explicit]
    coordinates {
    (420,0.01531987)+-(0, 0.00086377)(525,0.02489831)+-(0, 0.0070419)(700,0.04324284)+-(0, 0.00471381)(1050,0.09195917)+-(0, 0.00139618)(2100,0.37668136)+-(0, 0.02133098)
    };
    \addlegendentry{$k_{\textnormal{AIC}}$}
    %\caption{(a)}
\addplot+[
    color=black,
    mark=x,
    ][error bars/.cd,y dir=both, y explicit]
    coordinates {
    (420,0.01521532)+-(0, 0.00063996)(525,0.02342019)+-(0, 0.00098027)(700,0.04189021)+-(0, 0.00140984)(1050,0.09239346)+-(0, 0.00246034)(2100,0.3743844)+-(0, 0.02224616)
    };
    \addlegendentry{$k_{\textnormal{GCV}}$}
    %\caption{(a)} 
\addplot+[
    color=blue,
    mark=*,
    ][error bars/.cd,y dir=both, y explicit]
    coordinates {
    (420,0.0247823)+-(0, 0.00141235)(525,0.03641508)+-(0, 0.00178887)(700,0.06539243)+-(0, 0.00227723)(1050,0.13690432)+-(0, 0.00297915)(2100,0.57466645)+-(0, 0.03754322)
    };
    \addlegendentry{$k_{\textnormal{5FCV}}$}
\end{axis}
\end{tikzpicture}
\caption{}
\label{runtime_california}
\end{subfigure}
\qquad
\begin{subfigure}[b]{0.45\textwidth}
\begin{tikzpicture}[scale=0.65]
\tikzstyle{every node}=[font=\normalsize]
\begin{axis}[
    title={California H.P., prediction error},
    xlabel={Sub-sample size},
    ylabel={Average loss},
    xmin=400, xmax=2125,
    ymin=11, ymax=14.3,
    xtick={500,700,1000,2000},
    ytick={12 ,13 , 13.5},
    legend pos=north east,
    ymajorgrids=true,
    xmajorgrids=true,
    grid style=dashed,
]

\addplot+[
    color=green,
    mark=square,
    ][error bars/.cd,y dir=both, y explicit]
    coordinates {
    (420,13.26266873)+-(0, 4.08428839e-01)(525,13.04593567)+-(0, 4.52360769e-01)(700,12.65067224)+-(0, 2.93333571e-01)(1050,12.18972854)+-(0, 1.70809686e-01)(2100,11.61902792)+-(0, 1.77635684e-15)
    };
    \addlegendentry{$k^{\tau}$}
    %\caption{(b)}
\addplot+[
    color=red,
    mark=+,
    ][error bars/.cd,y dir=both, y explicit]
    coordinates {
    (420,13.3280656)+-(0, 4.79305005e-01)(525,13.04758682)+-(0,4.84281851e-01)(700,12.73186305)+-(0, 4.09857402e-01)(1050,12.25079929)+-(0,2.40823949e-01)(2100,11.61902792)+-(0,1.77635684e-15)
    };
    \addlegendentry{$k_{\textnormal{AIC}}$}
\addplot+[
    color=black,
    mark=x,
    ][error bars/.cd,y dir=both, y explicit]
    coordinates {
    (420,13.17929351)+-(0,3.52157250e-01)(525,12.98505496)+-(0,3.75196676e-01)(700,12.56349774)+-(0,1.97871822e-01)(1050,12.17333137)+-(0,1.98109715e-01)(2100,11.5611128)+-(0,1.77635684e-15)
    };
    \addlegendentry{$k_{\textnormal{GCV}}$}
\addplot+[
    color=blue,
    mark=*,
    ][error bars/.cd,y dir=both, y explicit]
    coordinates {
    (420,13.32115187)+-(0,0.43444179)(525,13.05816783)+-(0,0.39781389)(700,12.61299154)+-(0,0.29451386)(1050,12.247316)+-(0,0.26074582)(2100,11.64718134)+-(0,0.11529009)
    };
    \addlegendentry{$k_{\textnormal{5FCV}}$}
\end{axis}
\end{tikzpicture}
\caption{}
\label{prediction_error_california}
\end{subfigure}
\\
\begin{subfigure}[b]{0.45\textwidth}
\begin{tikzpicture}[scale=0.65]
\tikzstyle{every node}=[font=\normalsize]
\begin{axis}[
    title={Power Plants, runtime},
    xlabel={Sub-sample size},
    ylabel={Average time (sec)},
    xmin=350, xmax=2125,
    ymin=0.01, ymax=0.4,
    xtick={420,700,1000,2000},
    ytick={0.1, 1, 2},
    legend pos=north west,
    ymajorgrids=true,
    xmajorgrids=true,
    grid style=dashed,
]

\addplot+[
    color=green,
    mark=square,
    ][error bars/.cd,y dir=both, y explicit]
    coordinates {
    (420,0.01470181)+-(0,0.00056176)(525,0.02205186)+-(0,0.00056347)(700,0.03817163)+-(0,0.0006976)(1050,0.08331034)+-(0,0.00220411)(2100,0.34804553)+-(0,0.00859642)
    };
    \addlegendentry{$k^{\tau}$}
    %\caption{(a)}
\addplot+[
    color=red,
    mark=+,
    ][error bars/.cd,y dir=both, y explicit]
    coordinates {
    (420, 0.015403)+-(0,0.00097315)(525, 0.02319606)+-(0,0.00120454)(700, 0.03987282)+-(0,0.0009613)(1050, 0.08946141)+-(0,0.0018596)(2100, 0.36750357)+-(0,0.00202153)
    };
    \addlegendentry{$k_{\textnormal{AIC}}$}
\addplot+[
    color=black,
    mark=x,
    ][error bars/.cd,y dir=both, y explicit]
    coordinates {
    (420, 0.0153015)+-(0,0.00062684)(525, 0.02296844)+-(0,0.00107175)(700, 0.03968534)+-(0,0.00119149)(1050, 0.08918223)+-(0,0.00162629)(2100, 0.36579443)+-(0,0.00204679)
    };
    \addlegendentry{$k_{\textnormal{GCV}}$}
\addplot+[
    color=blue,
    mark=*,
    ][error bars/.cd,y dir=both, y explicit]
    coordinates {
    (420, 0.02395827)+-(0,0.00058146)(525, 0.03542266)+-(0,0.00115087)(700, 0.06134853)+-(0,0.00216597)(1050, 0.13237996)+-(0,0.00174208)(2100, 0.54843725)+-(0,0.00507977)
    };
    \addlegendentry{$k_{\textnormal{5FCV}}$}
\end{axis}
\end{tikzpicture}
\caption{}
\label{runtime_power}
\end{subfigure}
\qquad
\begin{subfigure}[b]{0.45\textwidth}
\begin{tikzpicture}[scale=0.65]
\tikzstyle{every node}=[font=\normalsize]
\begin{axis}[
    title={Power Plants, prediction error},
    xlabel={Sub-sample size},
    ylabel={Average loss},
    xmin=350, xmax=2125,
    ymin=115, ymax=140,
    xtick={420,700,1000,2000},
    ytick={120, 125, 130},
    legend pos=north east,
    ymajorgrids=true,
    xmajorgrids=true,
    grid style=dashed,
]

\addplot+[
    color=green,
    mark=square,
    ][error bars/.cd,y dir=both, y explicit]
    coordinates {
    (420,135.38159152)+-(0,3.18628667)(525,132.82363088)+-(0,3.20593179)(700,129.14202151)+-(0,2.40451842)(1050,124.61522052)+-(0,1.76421490)(2100,115.75029431)+-(0,3.12638804e-14)
    };
    \addlegendentry{$k^{\tau}$}
    %\caption{(b)}
\addplot+[
    color=red,
    mark=+,
    ][error bars/.cd,y dir=both, y explicit]
    coordinates {
    (420, 135.102044)+-(0,3.00087301)(525, 133.51627766)+-(0,3.94627588)(700, 128.98041813)+-(0,2.22865225)(1050, 124.45962331)+-(0,2.39771796)(2100, 115.75029431)+-(0,3.12638804e-14)
    };
    \addlegendentry{$k_{\textnormal{AIC}}$}
\addplot+[
    color=black,
    mark=x,
    ][error bars/.cd,y dir=both, y explicit]
    coordinates {
    (420, 134.21866286)+-(0,2.36391082)(525, 132.25980994)+-(0,2.86302466)(700, 128.47490889)+-(0,1.93164718)(1050, 123.96947848)+-(0,1.51875349)(2100, 117.13269495)+-(0,1.90658623e-14)
    };
    \addlegendentry{$k_{\textnormal{GCV}}$}
\addplot+[
    color=blue,
    mark=*,
    ][error bars/.cd,y dir=both, y explicit]
    coordinates {
    (420, 135.20225617)+-(0,2.79074689)(525, 132.55763574)+-(0,2.62206904)(700, 128.44957247)+-(0,1.81616222)(1050, 124.48593776)+-(0,1.5866712)(2100, 116.53285035)+-(0,0.73161339)
    };
    \addlegendentry{$k_{\textnormal{5FCV}}$}
\end{axis}
\end{tikzpicture}
\caption{}
\label{prediction_error_power}
\end{subfigure}
\caption{Runtime (in seconds) and $L_2(\mathbb{P}_n)$ prediction error versus sub-sample size for different model selection methods: MDP (\ref{k_tau}), AIC (\ref{k_aic}), GCV (\ref{k_1_out}), and $5$--fold cross-validation (\ref{k_vfcv}), tested on the "California Houses Prices" and "Power Plants" data set. In all cases, each point corresponds to the average of $25$ trials. (a), (c) Runtime verus the sub-sample size $n \in \{420, 525, 700, 1050, 2100 \}$. (b), (d) Least-squares prediction error $\lVert f^k - y_{\textnormal{test}} \rVert$ versus the sub-sample size $n \in \{420, 525, 700, 1050, 2100 \}$.}
\label{large_datasets_results}
\end{figure}
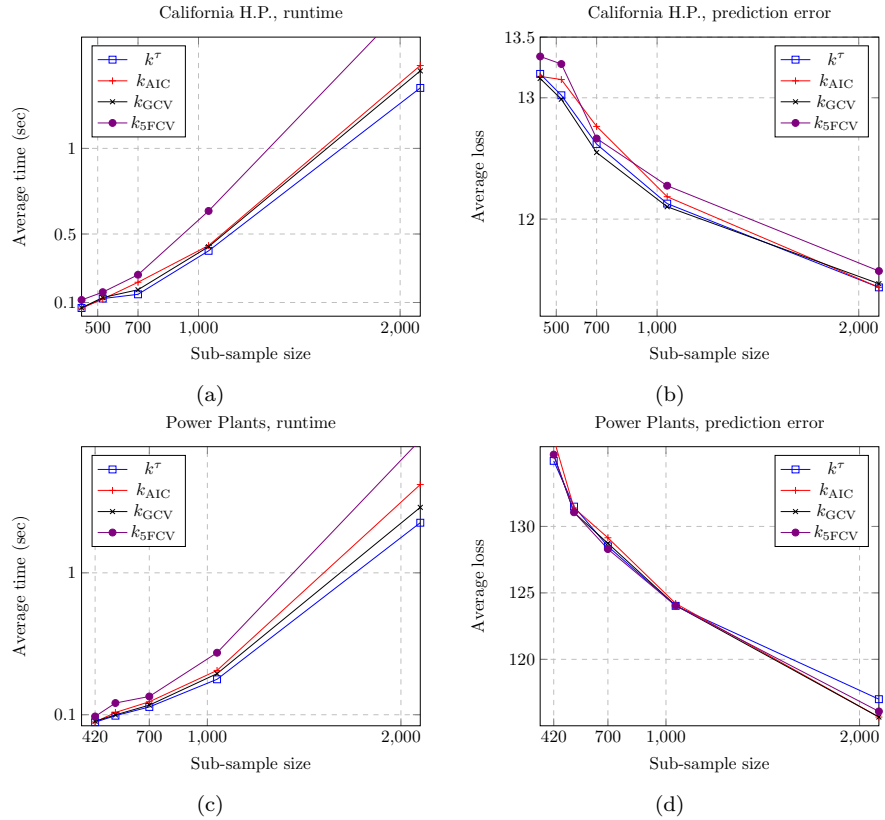

\section{Conclusion} \label{sec:conc}
In the present paper, we tackled the problem of choosing the tuning parameter $k$ in the $k$-NN regression estimator. A strategy based on early stopping and the minimum discrepancy principle was proposed. This strategy did not require any hold-out data and parameters to tune. In Section \ref{sec:optimality_result}, it was shown that the minimum discrepancy stopping rule $k^{\tau}$ (\ref{k_tau}) provides a minimax-optimal estimator, in particular, over the class of Lipschitz functions on a bounded domain. Besides that, this theoretical result was confirmed empirically on artificial and real data sets: the stopping rule has better statistical performance than other model selection rules, such as Hold-out, $5$--fold cross-validation or generalized cross-validation, while reducing the computational time of the model selection procedure. 
%The main inconvenience of the proposed strategy is that one has to estimate the variance $\sigma^2$ of the regression model (as it is the case for the AIC or Mallow's $C_p$ criteria), thus a plug-in estimator is needed. We have constructed such an estimator for simulated experiments with artificial and real data in Section \ref{real_data}.

As for perspectives of this work, we are interested in the theoretical performance of the Nadaraya-Watson regressor (\cite{nadaraya1964estimating, wasserman2006all}). It should be close to the one of the k-NN regression estimator since these two non-parametric estimators are related (see the monographs \cite{gyorfi2006distribution, tsybakov2008introduction}). The main difficulty should come from the fact that, if $h$ is the bandwidth parameter and $A_h$ is the smoothing matrix of the Nadaraya-Watson estimator, then $\textnormal{tr}(A_h^{\top}A_h) \neq \textnormal{tr}(A_h)$. This fact implies that the expectation of the empirical risk minus the noise variance will not be equal to the difference between the bias and variance terms (see Eq. (\ref{exp_empirical_risk})). Therefore, there should be another concentration result that deals with this problem. Besides that, we should emphasize that the early stopping rules in this work were estimating the famous
bias-variance trade-off (\cite[Chapter 7]{hastie2009elements}). However, in (\cite{belkin2019reconciling, belkin2020two}) the bias-variance balancing paradigm was rethought by discovering some settings (exact fit to the data) for which a phenomenon of the "double descent" of the risk curve appeared. It would be interesting to understand if early stopping can work for these settings. The reader can look at (\cite{dwivedi2020revisiting}) and references therein for another reexamination of the paradigm.
Another future direction could be modifying the minimum discrepancy principle strategy such that it can escape the so-called curse of dimensionality (\cite{tsybakov2008introduction, hastie2009elements}) in the minimax-optimal rate. The reader can go through (\cite{zhao2021efficient}) for an example of a method to choose $k$ in the classification setting where it was possible.

\section{Availability of data and material}
The real-world data description is available in Section \ref{real_data}.

\section{Code availability}
Code for the manuscript is provided at
\href{https://github.com/YaroslavAveryanov/Minimum-discrepancy-principle-for-choosing-k}{https://github.com/YaroslavAveryanov/Minimum-discrepancy-principle-for-choosing-k}. 

\section{Competing interests}
The authors declare that they have no competing interests.

\section{Consent for publication}
Not applicable.

\section{Ethics approval and consent to participate}
Not applicable.

\section{Funding}
None.

\printbibliography
%\nocite{*} % to test all bib entrys
%\bibliographystyle{unsrt}
%\bibliography{example}

\newpage

\appendix \label{Appendix}

\begin{center}
\textbf{Supplementary material}
\end{center}

Below, one can find a plan of Appendix. 

\vspace{0.2cm}

In Appendix \ref{auxiliary_lemmas}, we state some already known results that will be used along the other sections of Appendix. 

\vspace{0.2cm}

Appendix \ref{main_quantities} is devoted to the introduction of the main quantities for the derivation of the proofs.

\vspace{0.2cm}

The main goal of Appendix \ref{var_control} is to provide a concentration inequality for the difference of the variance $V(k^{\tau})$ and its stochastic part $\lVert A_{k^{\tau}} \varepsilon \rVert_n^2$ as well as a concentration inequality for $ \mid R_k - \mathbb{E}_{\varepsilon}R_k \mid$.

\vspace{0.2cm}

In Appendix \ref{appendix_variance_deviation}, we derived a concentration inequality for controlling the variance term.

\vspace{0.2cm}

Appendix \ref{appendix_bias_deviation} is devoted to the derivation of a concentration inequality that deals with the deviation of the bias term.

\vspace{0.2cm}

Combining all the results from Appendices \ref{var_control}, \ref{appendix_variance_deviation}, and \ref{appendix_bias_deviation}, we are able to provide a proof of Theorem \ref{main_th}. 

\vspace{0.5cm}

\section{Auxiliary lemmas} \label{auxiliary_lemmas}

The first result is concerned with the derivation of the concentration of a Gaussian linear form around zero. 

\begin{lemma}[Concentration of a linear term] \label{linear_term_concentration}
   Let $\xi$ be a standard Gaussian vector with variance $\sigma^2$ in $\mathbb{R}^n$, $\alpha \in \mathbb{R}^n$ and $Z \coloneqq \langle \xi, \alpha \rangle = \sum_{j=1}^n \alpha_j \xi_j$. Then for every $t > 0$, one has
   \begin{equation*}
       \mathbb{P}_{\xi} \left( |Z| \geq t \right) \leq 2 \exp \left[ - \frac{t^2}{2 \sigma^2 \lVert \alpha \rVert^2}\right].
   \end{equation*}

\end{lemma}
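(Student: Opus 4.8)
The plan is to recognize $Z$ as a one-dimensional Gaussian random variable and then run the classical Chernoff argument. First I would note that, since the relevant noise coordinates are independent with variance $\sigma^2$ and $Z=\sum_{j=1}^n \alpha_j \xi_j$ is a fixed linear combination of them, $Z$ is a centered Gaussian with variance $\operatorname{Var}(Z)=\sigma^2\sum_{j=1}^n\alpha_j^2=\sigma^2\lVert\alpha\rVert^2$. (If $\alpha=0$ the claim is trivial since $Z=0$, so one may assume $\lVert\alpha\rVert>0$.) It therefore suffices to prove the tail bound $\mathbb{P}(|W|\ge x)\le 2\exp\bigl(-x^2/(2v^2)\bigr)$ for $W\sim\mathcal{N}(0,v^2)$ with $v^2=\sigma^2\lVert\alpha\rVert^2$.

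For this I would use the Laplace transform of a Gaussian: for every $\lambda\ge 0$, $\mathbb{E}[e^{\lambda W}]=e^{\lambda^2 v^2/2}$. Applying Markov's inequality to the nonnegative random variable $e^{\lambda W}$ gives $\mathbb{P}(W\ge x)\le e^{-\lambda x}\mathbb{E}[e^{\lambda W}]=\exp\bigl(\lambda^2 v^2/2-\lambda x\bigr)$ for all $\lambda\ge 0$. Optimizing the right-hand side by the choice $\lambda=x/v^2$ yields $\mathbb{P}(W\ge x)\le\exp\bigl(-x^2/(2v^2)\bigr)$. Since $-W$ has the same distribution as $W$, the same bound holds for $\mathbb{P}(W\le -x)$, and a union bound over $\{W\ge x\}\cup\{W\le -x\}$ produces the factor $2$. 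Substituting $v^2=\sigma^2\lVert\alpha\rVert^2$ gives exactly the stated inequality.

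There is essentially no genuine obstacle here; the only point requiring care is the normalization convention, i.e. whether ``standard Gaussian vector'' is meant as $\mathcal{N}(0,I_n)$ or as the noise vector $\varepsilon\sim\mathcal{N}(0,\sigma^2 I_n)$ used elsewhere in the paper, and recording the variance of $Z$ consistently with the $\sigma^2$ appearing in the statement. Once that bookkeeping is fixed, the argument is the standard sub-Gaussian tail estimate and nothing further is needed.
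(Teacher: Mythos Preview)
Your proof is correct; it is the standard Chernoff/sub-Gaussian tail argument for a Gaussian linear form. The paper does not actually give a proof of this lemma: it is listed in the ``Auxiliary lemmas'' appendix as a known result and simply stated without argument, so there is nothing to compare against. Your observation about the normalization convention is also apt: the statement calls $\xi$ a ``standard'' Gaussian vector yet retains the factor $\sigma^2$ in the exponent, which only makes sense if $\xi$ is in fact the noise vector $\varepsilon\sim\mathcal{N}(0,\sigma^2 I_n)$ used throughout the paper; reading it that way, your variance computation $\operatorname{Var}(Z)=\sigma^2\lVert\alpha\rVert^2$ is exactly what is needed.
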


Further, we need to recall a concentration result for a quadratic form of Gaussian random variables.

\begin{lemma}[Hanson-Wright's inequality for Gaussian random variables in \cite{rudelson2013hanson}] \label{quadratic_term_concentration}
If $\varepsilon = (\varepsilon_1, \ldots, \varepsilon_n) \overset{\textnormal{i.i.d.}}{\sim} \mathcal{N}(0, \sigma^2 I_n)$ and $A$ is a $n \times n$ matrix, then for any $t > 0$,
\begin{equation}
    \mathbb{P}_{\varepsilon} \left( |\varepsilon^\top A \varepsilon - \mathbb{E}_{\varepsilon}[\varepsilon^\top A \varepsilon]| \geq t \right) \leq 2 \exp \left[ -c \min \left( \frac{t^2}{\sigma^4 \lVert A \rVert_{F}^2}, \frac{t}{\sigma^2 \lVert A \rVert_{2}} \right) \right].
\end{equation}
\end{lemma}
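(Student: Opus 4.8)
The plan is to exploit that, for a Gaussian vector, the quadratic form diagonalizes explicitly, reducing the statement to a Bernstein-type tail bound for a weighted sum of independent centered $\chi^2_1$ variables; this bypasses the decoupling machinery needed for the general (sub-Gaussian) Hanson--Wright theorem. First I would normalize and symmetrize. Writing $g \coloneqq \varepsilon / \sigma \sim \mathcal{N}(0, I_n)$ gives $\varepsilon^\top A \varepsilon = \sigma^2 g^\top A g$, and since $g^\top A g = g^\top S g$ with $S \coloneqq (A + A^\top)/2$ symmetric, it suffices to treat symmetric matrices. Because $\lVert S \rVert_F \leq \lVert A \rVert_F$ and $\lVert S \rVert_2 \leq \lVert A \rVert_2$, a bound phrased with $S$ only enlarges when $S$ is replaced by $A$, so the final step of swapping the norms back is immediate.

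Next I would diagonalize $S = U \Lambda U^\top$ with $U$ orthogonal and $\Lambda = \textnormal{diag}(\lambda_1, \ldots, \lambda_n)$. By rotational invariance of the standard Gaussian law, $U^\top g$ is again standard Gaussian, so the centered quadratic form becomes $Z \coloneqq g^\top S g - \textnormal{tr}(S) = \sum_{i=1}^n \lambda_i (g_i^2 - 1)$, a weighted sum of i.i.d. centered chi-squares with $\sum_i \lambda_i^2 = \lVert S \rVert_F^2$ and $\max_i |\lambda_i| = \lVert S \rVert_2$. I would then control its moment generating function using the exact Gaussian identity $\mathbb{E}\, e^{u(g_i^2 - 1)} = e^{-u}(1 - 2u)^{-1/2}$, valid for $|u| < 1/2$, together with the elementary inequality $-u - \tfrac{1}{2}\log(1 - 2u) \leq u^2 / (1 - 2|u|)$. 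By independence, for every $|s| < 1/(2\lVert S \rVert_2)$,
\begin{equation*}
    \log \mathbb{E}\, e^{sZ} = \sum_{i=1}^n \Big[ -s\lambda_i - \tfrac{1}{2}\log(1 - 2s\lambda_i) \Big] \leq \frac{s^2 \lVert S \rVert_F^2}{1 - 2|s|\,\lVert S \rVert_2},
\end{equation*}
which is precisely the cumulant bound of a sub-exponential variable with variance proxy $\lVert S \rVert_F^2$ and scale $\lVert S \rVert_2$.

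The conclusion then follows from the Chernoff bound $\mathbb{P}(Z \geq r) \leq \inf_s \exp(-sr + \log \mathbb{E}\,e^{sZ})$ and optimization over $s$: the standard Bernstein computation yields $\mathbb{P}(Z \geq r) \leq \exp(-c \min(r^2/\lVert S \rVert_F^2,\, r/\lVert S \rVert_2))$, and applying the same argument to $-Z$ (equivalently $-S$) supplies the two-sided statement and the prefactor $2$. Taking $r = t/\sigma^2$ and undoing the normalization via $\varepsilon^\top A \varepsilon - \mathbb{E}_\varepsilon[\varepsilon^\top A \varepsilon] = \sigma^2 Z$ produces $\sigma^4 \lVert S \rVert_F^2$ and $\sigma^2 \lVert S \rVert_2$ in the denominators, and finally $\lVert S \rVert_F \leq \lVert A \rVert_F$, $\lVert S \rVert_2 \leq \lVert A \rVert_2$ give the asserted inequality.

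For the Gaussian case there is no deep obstacle, which is precisely the point: explicit diagonalization removes the chaos/decoupling step that the general theorem of \cite{rudelson2013hanson} requires. The only place demanding genuine care is the Bernstein optimization over $s$, where one must verify that the infimum splits into the two regimes --- sub-Gaussian $\exp(-c\, r^2/\lVert S \rVert_F^2)$ for small $r$ and sub-exponential $\exp(-c\, r/\lVert S \rVert_2)$ for large $r$ --- and that a single absolute constant $c$ can be chosen to cover the transition. Tracking that constant cleanly through the $\min$ is the one technical point I would watch.
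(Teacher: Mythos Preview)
The paper does not prove this lemma at all: it is stated in the appendix of auxiliary results and attributed directly to \cite{rudelson2013hanson}, with no argument given. Your proposal is therefore not a reproduction of the paper's proof but a genuine, self-contained derivation.

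Your route is correct and is the standard elementary argument available in the Gaussian case: symmetrize, diagonalize by rotational invariance, reduce to a weighted sum of centered $\chi^2_1$ variables, bound the cumulant generating function, and finish with a Bernstein--Chernoff optimization. Each step you outline is sound; the norm comparisons $\lVert S \rVert_F \leq \lVert A \rVert_F$, $\lVert S \rVert_2 \leq \lVert A \rVert_2$ and the cumulant bound $-u - \tfrac{1}{2}\log(1-2u) \leq u^2/(1-2|u|)$ are both valid, and the two-regime split in the Chernoff optimization is routine. Compared with the general Hanson--Wright theorem of \cite{rudelson2013hanson}, which covers arbitrary sub-Gaussian coordinates and therefore needs decoupling of the off-diagonal chaos, your argument exploits exactly the extra structure (exact rotational invariance and an explicit MGF) that the Gaussian assumption affords, and this is precisely what makes the proof short. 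For the purposes of this paper, where only the Gaussian case is ever used, your approach is entirely adequate and arguably preferable for being self-contained.
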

The lemma below will help us transfer the results from the empirical $L_2(\mathbb{P}_n)$ norm to the $L_2(\mathbb{P}_X)$ norm (cf Appendix \ref{section_proof_corollary}).
\begin{lemma}[Hoeffding's inequality for bounded differences in \cite{wainwright2019high}, p.454] \label{hoeffding_concentration}
If the function $f$ is uniformly bounded, that is, if $\lVert f \rVert_{\infty} \coloneqq \underset{x \in \mathcal{X}}{\sup}\mid f(x) \mid \leq b$ for some $b < \infty$, then for every $t > 0$,
\begin{equation}
    \mid \lVert f \rVert_n^2 - \lVert f \rVert_2^2 \mid < t 
\end{equation}
with probability at least $1 - 2 \exp \left( - \frac{n t^2}{2 b^4} \right)$.
\end{lemma}
%
%The next lemma provides us with a result that shows that the number of points among $\{ x_1, \ldots, x_n \}$, such that $x_i$ is one of their $k$ nearest neighbors, is not more than a constant times $k$.
%\begin{lemma}[Corollary 6.1 in \cite{gyorfi2006distribution}] \label{lemma_from_gyorfi}
%Assume that $(X_1, \ldots, X_n) \sim \mathbb{P}_X$ for some probability measure $\mathbb{P}_X$ on $\mathcal{X}$ and $X$ is an independent copy of $X_i, \ i = 1, \ldots, n$, then, if there are no ties, a.s.
%\begin{equation*}
%    \sum_{i=1}^n \mathbb{I}\left\{ X \textnormal{ is among the kNNs of } X_i \textnormal{ in the set } \left\{ X_1, \ldots, X_{i-1}, X, X_{i+1}, \ldots, X_n\right\} \right\} \leq k c_d,
%\end{equation*}
%where constant $c_d$ depends only on $d$.
%\end{lemma}
%
The next lemma provides us with an upper bound on the spectral norm of the matrix $I_n - A_k$.  
\begin{lemma} \label{operator_norm_lemma}
Recall that $\mathcal{N}_k(x_i)$ denotes the set of the $k$ nearest neighbors of $x_i, \ i = 1, \ldots, n$. For any $k \in \{1, \ldots, n\}$, define the matrix $M_k \in \mathbb{R}^{n \times n}$ as 
\begin{equation*}
    \left(M_k\right)_{ij}=
    \begin{cases}
    1 - 1/k, \textnormal{ if } i = j,\\
    0, \textnormal{ if } j \notin \mathcal{N}_k(x_i),\\
    -1/k, \textnormal{ if } j \in \mathcal{N}_k(x_i).
    \end{cases}    
\end{equation*}
Then $\lVert M_k \rVert_2^2 \leq 4$.
\end{lemma}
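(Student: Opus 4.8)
The plan is to reduce the spectral-norm bound to two elementary one-sided operator-norm bounds via the interpolation inequality $\|M_k\|_2 \le \sqrt{\|M_k\|_1\,\|M_k\|_\infty}$, in which $\|M_k\|_\infty$ denotes the maximum absolute row sum and $\|M_k\|_1$ the maximum absolute column sum. The row-sum bound is immediate: since $x_i$ is its own nearest neighbor, row $i$ of $M_k$ consists of the diagonal entry $1-1/k$, the $k-1$ entries equal to $-1/k$ indexed by $\mathcal{N}_k(i)\setminus\{i\}$, and zeros elsewhere, so its absolute sum equals $(1-1/k)+(k-1)/k = 2(1-1/k)\le 2$, giving $\|M_k\|_\infty \le 2$ for every $k$.

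The substantive step will be the column-sum bound. Fixing a column index $j$, the diagonal entry contributes $1-1/k$, while an off-diagonal entry $(M_k)_{ij}$ (with $i\ne j$) equals $-1/k$ precisely when $x_j$ is one of the $k$ nearest neighbors of $x_i$. To count how many indices $i$ can produce such a nonzero entry, I would invoke Lemma~\ref{lemma_from_gyorfi} (Corollary~6.1 in \cite{gyorfi2006distribution}): with the fixed design points playing the role of the sample and the tie-breaking convention ensuring the ``no ties'' hypothesis, the number of $x_i$ having $x_j$ among their $k$ nearest neighbors is at most $k c_d$, where $c_d$ depends only on the ambient dimension $d$. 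Consequently the absolute column sum is bounded by $(1-1/k)+(k c_d)(1/k) = 1-1/k+c_d \le 1+c_d$, hence $\|M_k\|_1 \le 1+c_d$.

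Putting the two pieces together yields $\|M_k\|_2 \le \sqrt{2(1+c_d)}$, a constant depending only on $d$, which after relabelling is the first assertion; the bound on $\|A_k\|_2$ then follows from the triangle inequality applied to $A_k = I_n - M_k$, namely $\|A_k\|_2 \le \|I_n\|_2 + \|M_k\|_2 \le 1 + c_d$. The only genuine obstacle is the column-sum estimate, and it is entirely supplied by the geometric combinatorial fact that, in $\mathbb{R}^d$, a single point can be a $k$-nearest neighbor of at most $O(k)$ of the other points -- a statement whose proof rests on a covering argument (partitioning $\mathbb{R}^d$ into a bounded number of cones with small angular aperture), and which the paper has already recorded as Lemma~\ref{lemma_from_gyorfi}; everything else is routine.
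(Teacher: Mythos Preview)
Your proof is correct and uses essentially the same geometric input as the paper, namely Lemma~\ref{lemma_from_gyorfi}, but the reduction to that lemma is organized differently. The paper bounds $\lVert M_k x\rVert^2$ directly: it splits the $i$-th inner product into the diagonal and off-diagonal pieces, applies Jensen's inequality to obtain $\bigl(\tfrac{1}{k}\sum_{j\in\mathcal{N}_k(i)} x_j\bigr)^2 \le \tfrac{1}{k}\sum_{j\in\mathcal{N}_k(i)} x_j^2$, then swaps the order of summation and invokes Lemma~\ref{lemma_from_gyorfi} to bound $\sum_{i:\,j\in\mathcal{N}_k(i)} 1 \le k c_d$, arriving at $\lVert M_k x\rVert^2 \le (2+2c_d)\lVert x\rVert^2$. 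You instead invoke the Schur/interpolation bound $\lVert M_k\rVert_2 \le \sqrt{\lVert M_k\rVert_1\,\lVert M_k\rVert_\infty}$, bound the row sums trivially by $2$, and bound the column sums by $1+c_d$ via the same lemma. Amusingly, both routes yield the identical constant $\sqrt{2(1+c_d)}$. Your argument is slightly more modular and avoids the Jensen step; the paper's argument is more self-contained in that it does not appeal to the interpolation inequality. Either way, the substantive content is the column-count estimate, and you have identified and handled it correctly.
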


\begin{proof}
%We will adapt the proof of \cite[Lemma 3.3]{azadkia2019optimal}. 

%Take $x \in \mathcal{X}$ such that $\lVert x \rVert = 1$ and denote $(M_k)_{i \cdot}$ as the $i^{\textnormal{th}}$ row of the matrix $M_k$. Then, the following holds.
%\begin{align*}
%    \lVert M_k x \rVert^2 &= \sum_{i=1}^n \langle \left( M_k \right)_{i \cdot}, x \rangle^2 \\
%    &\leq 2\sum_{i=1}^n (1 - 1/k)^2 x_i^2 + 2 \sum_{i=1}^n \left( \frac{1}{k}\sum_{j \in \mathcal{N}_k(i)}x_j \right)^2\\
%    &\overset{(i)}{\leq} 2 \lVert x \rVert^2 + \frac{2}{k}\sum_{i=1}^n \sum_{j \in \mathcal{N}_k(i)} x_j^2\\
%    &= 2 \lVert x \rVert^2 + \frac{2}{k}\sum_{j=1}^n \sum_{i:j \in \mathcal{N}_k(i)}x_j^2\\
%    &\overset{(ii)}{\leq} c_d \lVert x \rVert^2.
%\end{align*}
%$(i)$ holds due to Jensen's inequality and $(ii)$ is due to Lemma \ref{lemma_from_gyorfi}. Hence, $\lVert M_k \rVert_2 \leq c_d$. 
Due to the Gershgorin's circle theorem, the eigenvalues of $A_{k}$ satisfies 
\begin{equation*}
    \mid \lambda_l - \frac{1}{k} \mid \leq \frac{k-1}{k}, \ l = 1, \ldots, n,
\end{equation*}
which implies for any $l \in \{1, \ldots, n \}$, $1 - \lambda_l \leq 2$. It implies that for the eigenvalues of $(I_n - A_k)^\top (I_n - A_k)$:
\begin{equation*}
    \mu_l \leq 4, \ l = 1, \ldots n.
\end{equation*}
Therefore $\lVert I_n - A_{k} \rVert_2^2 = \max [\mu_l] \leq 4$.
\end{proof}

\begin{lemma} \label{variance_aux_lemma}
    For any $k \in \{2, \ldots, n\}$,
    \begin{equation*}
        \frac{1}{2}V(k - 1) \leq V(k) \leq V(k - 1).
    \end{equation*}
\end{lemma}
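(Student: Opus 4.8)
The plan is to reduce everything to the closed-form expression for the variance term. Recall from the discussion following Eq.~(\ref{mean-squared-error}) that, for every $k\in\{1,\ldots,n\}$, the matrix $A_k$ in Eq.~(\ref{a_k_matrix}) satisfies $\textnormal{tr}(A_k^\top A_k)=n/k$, so that $V(k)=\sigma^2\,\textnormal{tr}(A_k^\top A_k)/n=\sigma^2/k$. Hence the claimed chain of inequalities is equivalent to the purely arithmetic statement
\begin{equation*}
    \frac{\sigma^2}{2(k-1)}\;\leq\;\frac{\sigma^2}{k}\;\leq\;\frac{\sigma^2}{k-1},
\end{equation*}
which is well posed precisely because the lemma assumes $k\geq 2$, so that $k-1\geq 1$ and all denominators are positive.

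First I would dispatch the upper bound: since $k\geq 2$ implies $0<k-1<k$, we have $1/k\leq 1/(k-1)$, hence $V(k)\leq V(k-1)$. Then for the lower bound I would note that $\tfrac12 V(k-1)\leq V(k)$ is equivalent, after clearing the (positive) factor $\sigma^2$ and cross-multiplying, to $k\leq 2(k-1)$, i.e.\ to $k\geq 2$, which again holds by hypothesis. Combining the two gives the claim.

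There is no genuine obstacle here; the only point requiring a line of justification is the identity $\textnormal{tr}(A_k^\top A_k)=n/k$ used to obtain $V(k)=\sigma^2/k$, and this already appears in the main text. The mild care needed is simply to keep track of the range of $k$: the estimate fails for $k=1$ (where $V(k-1)$ is not even defined in our setting), which is why the statement is restricted to $k\in\{2,\ldots,n\}$.
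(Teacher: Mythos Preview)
Your proof is correct and essentially the same as the paper's: both reduce to the explicit formula $V(k)=\sigma^2/k$ and then check the elementary arithmetic inequality. The paper compresses the lower-bound step slightly by writing $V(k-1)-V(k)=\sigma^2/(k(k-1))\leq \sigma^2/k=V(k)$, but this is the same computation you carry out.
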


\begin{proof}
It is sufficient to notice that
\begin{equation*}
    V(k - 1) - V(k) = \frac{\sigma^2}{k (k - 1)} \leq \frac{\sigma^2}{k} = V(k).
\end{equation*}
\end{proof}

\begin{lemma}[Lower bound on risk in the empirical norm for Lipschitz functions] \label{minimax_bound_emp_norm}
    If the covariates $\{x_1, \ldots, x_n \}$ lie in the regularly spaced lattice in $[0, 1]^d$, then for any measurable of the input data $\widehat{f}$,
    \begin{equation}
        \underset{f^* \in \mathcal{F}_{\textnormal{Lip}}(L)}{\sup}\left[\mathbb{E}_{\varepsilon}\lVert \widehat{f} - f^* \rVert_n^2 \right] \geq c_l L^{\frac{2d}{d+2}}\left( \frac{\sigma^2}{n} \right)^{\frac{2}{2+d}}. 
    \end{equation}
\end{lemma}

\begin{proof}
    It is a standard result, and the proof is based on the Fano inequality (\cite{yu1997assouad, tsybakov2008introduction}) by reducing the estimation problem to multiple testing.  
\end{proof}

\section{Main quantities and notations} \label{main_quantities}

For more theoretical convenience (the variance term will be an increasing function, and the empirical risk will be approximately a decreasing function), define the following notation and stopping rules:
\begin{equation} \label{map_k_tilde_k}
   \lambda[k] \coloneqq \textnormal{tr}(A_k) = n/k \ \in \ \{ 1, n/(n-1), n/(n-2), \ldots, n \}
\end{equation}
and
\begin{align} \label{stopping_times_tilde}
    \begin{split}
    \lambda_1^* &\coloneqq \inf \left\{ \lambda \in \left\{ 1, \ldots, n  \right\} \mid B^2(\lambda) + \sigma^2 - 2\mathbb{E}_{\varepsilon}R_2 \leq V(\lambda) \right\}, \enspace \lambda_1^\tau \coloneqq \inf \left\{ \lambda \in \left\{ 1, \ldots, n  \right\} \mid R_{\lambda} \leq 2 R_2 \right\}\\
    \lambda_2^* &\coloneqq \sup \left\{ \lambda \in \left\{ 1, \ldots, n  \right\} \mid B^2(\lambda) + \sigma^2 - 2\mathbb{E}_{\varepsilon}R_2 \geq V(\lambda) \right\}, \enspace \lambda_2^\tau \coloneqq \sup \left\{ \lambda \in \left\{ 1, \ldots, n \right\} \mid R_{\lambda} \geq 2 R_2 \right\}.
    \end{split}
\end{align}

Notice that there is a one-to-one map between $k$ and $\lambda[k]$ as it is suggested in Eq. (\ref{map_k_tilde_k}). 

In Eq. (\ref{stopping_times_tilde}), we omit for simplicity the notation $\lambda[k]$. Moreover, in Eq. (\ref{stopping_times_tilde}) we used the notation $A_{\lambda[k]}$ (inside the definitions of $B^2(\lambda), V(\lambda)$, and $R_\lambda$) to denote the matrix $A_{k}$, for $k = n / \lambda$  corresponding to $\lambda$, i.e., $A_{\lambda[k]} \equiv A_k$.  

If $\lambda_2^*$ does not exists, set $\lambda_2^* = 1$. If $\lambda_2^{\tau}$ does not exist, set $\lambda_2^{\tau} = 1$. 

%\vspace{0.15cm}

%Note that $\lambda_1^{\tau}$ is equal to $k^{\tau}$ defined in Eq. (\ref{k_tau}) and $\lambda_1^*$ is equal to $k^*$ defined in Eq. (\ref{k_star}). In addition to that, $\lambda_1^* \leq \lambda_2^*$ and $\lambda_1^{\tau} \leq \lambda_2^{\tau}$. 
Note the bias, variance, and (expected) empirical risk at $\lambda_1^{\tau}$ are equal to the bias, variance, (expected) empirical risk at $k^{\tau}$ defined in Eq. (\ref{k_tau}), respectively. 
%The bias, variance, (expected) empirical risk at $\lambda_2^*$ are equal to the bias, variance, (expected) empirical risk at $k^*$ defined in Eq. (\ref{k_star}), respectively.

%\vspace{0.1cm}

The behavior of the bias term, variance, risk error, and (expected) empirical risk w.r.t. the new notation $\lambda$ is presented in Fig. \ref{fig:bvr_tilde}. One can conclude that only the variance term is monotonic w.r.t. $\lambda$ (it is an increasing function).

\begin{figure}
\begin{subfigure}[b]{0.45\textwidth}
\begin{tikzpicture}[scale=0.6]
\begin{axis}[
    title={},
    xlabel={$\lambda$},
    ylabel={Value},
    xmin=4, xmax=50,
    ymin=0, ymax=0.016,
    xtick={4,10,25,50},
    ytick={0.001, 0.003, 0.01},
    legend pos=north east,
    ymajorgrids=true,
    xmajorgrids=true,
    grid style=dashed,
]

\addplot[
    color=blue,
    mark=square,
    ]
    coordinates {
    (4.16666667,1.48684339e-02)(4.54545455,1.35524382e-02)(5,1.39197664e-02)(5.55555556,1.32405466e-02)(6.25,1.35185734e-02)(7.14285714,1.28441987e-02)(8.33333333,1.07711870e-02)(10,9.94070408e-03)(12.5,9.45641655e-03)(16.66666667,8.45967619e-03)(25,6.61878667e-03)(50,0)
    };
    \addlegendentry{Empirical risk}
    %\caption{(a)}
\addplot[
    color=red,
    mark=+,
    ]
    coordinates {
    (4.16666667,4.63324504e-03)(4.54545455,4.02944709e-03)(5,3.72855344e-03)(5.55555556,3.21561643e-03)(6.25,3.23737977e-03)(7.14285714,2.95873043e-03)(8.33333333,3.16401010e-03)(10,2.72880054e-03)(12.5,2.59415471e-03)(16.66666667,1.84044337e-03)(25,1.16351911e-03)(50, 0)
    };
    \addlegendentry{Bias}
\addplot[
    color=violet,
    mark=star,
    ]
    coordinates {
    (4.16666667,0.00083333)(4.54545455,0.00083333)(5,0.001)(5.55555556,0.00111111)(6.25,0.00125)(7.14285714,0.00142857)(8.33333333,0.00166667)(10,0.002)(12.5,0.0025)(16.66666667,0.00333333)(25,0.005)(50,0.01)
    };
    \addlegendentry{Variance}
\addplot[
    color=orange,
    mark=x,
    ]
    coordinates {
    (4.16666667,0.00546658)(4.54545455,0.00493854)(5,0.00472855)(5.55555556,0.00432673)(6.25,0.00448738)(7.14285714,0.0043873)(8.33333333,0.00483068)(10,0.0047288)(12.5,0.00509415)(16.66666667,0.00517378)(25,0.00616352)(50,0.01)
    };
    \addlegendentry{Risk}
\addplot[
    color=black,
    mark=*,
    ]
    coordinates {
    (4.16666667,1.37999117e-02)(4.54545455,1.31203562e-02)(5,1.27285534e-02)(5.55555556,1.21045053e-02)(6.25,1.19873798e-02)(7.14285714,1.15301590e-02)(8.33333333,1.14973434e-02)(10,1.07288005e-02)(12.5,1.00941547e-02)(16.66666667,8.50711004e-03)(25,6.16351911e-03)(50,0)
    };
    \addlegendentry{Exp. empirical risk}
\end{axis}
\end{tikzpicture}
\caption{}
\label{fig:bvr_tilde}
\end{subfigure}
\qquad
\begin{subfigure}[b]{0.45\textwidth}
\begin{tikzpicture}[scale=0.6]
\begin{axis}[
    title={},
    xlabel={$\lambda$},
    ylabel={Value},
    xmin=4, xmax=15,
    ymin=0.15, ymax=0.23,
    xtick={4,6,10,12},
    ytick={0.18, 0.20, 0.23},
    legend pos=north east,
    ymajorgrids=true,
    xmajorgrids=true,
    grid style=dashed,
]

\addplot[
    color=blue,
    mark=square,
    ]
    coordinates {
    (4.16666667,2.23983784e-01)(4.54545455,2.21447731e-01)(5,2.21077612e-01)(5.55555556,2.10669620e-01)(6.25,1.92570342e-01)(7.14285714,1.90400513e-01)(8.33333333,2.02416038e-01)(10,1.90175687e-01)(12.5,1.81992372e-01)(16.66666667,1.50037991e-01)
    };
    \addlegendentry{Empirical risk}
    %\caption{(b)}
\addplot[
    color=red,
    mark=+,
    ]
    coordinates {
    (4.16666667, 2.23983784e-01)(4.54545455, 2.21447731e-01)(5, 2.21077612e-01)(5.55555556, 2.10669620e-01)(6.25, 2.02416038e-01)(7.14285714,2.02416038e-01)(8.33333333,2.02416038e-01)(10,1.90175687e-01)(12.5,1.81992372e-01)(16.66666667,1.50037991e-01)
    };
    \addlegendentry{Upper bound}
\addplot[
    color=violet,
    mark=star,
    ]
    coordinates {
    (4.16666667, 2.23983784e-01)(4.54545455, 2.21447731e-01)(5, 2.21077612e-01)(5.55555556, 2.10669620e-01)(6.25, 1.92570342e-01)(7.14285714,1.90400513e-01)(8.33333333,1.90400513e-01)(10,1.90175687e-01)(12.5,1.81992372e-01)(16.66666667,1.50037991e-01)
    };
    \addlegendentry{Lower bound}
\end{axis}
\end{tikzpicture}
\caption{}
\label{fig:er_lower_upper}
\end{subfigure}
\caption{a) Sq. bias, variance, risk and (expected) empirical risk behavior in $\lambda$ notation; b) lower $\widetilde{R}_{\lambda}$ and upper $\overline{R}_{\lambda}$ bounds on the empirical risk.}
\end{figure}
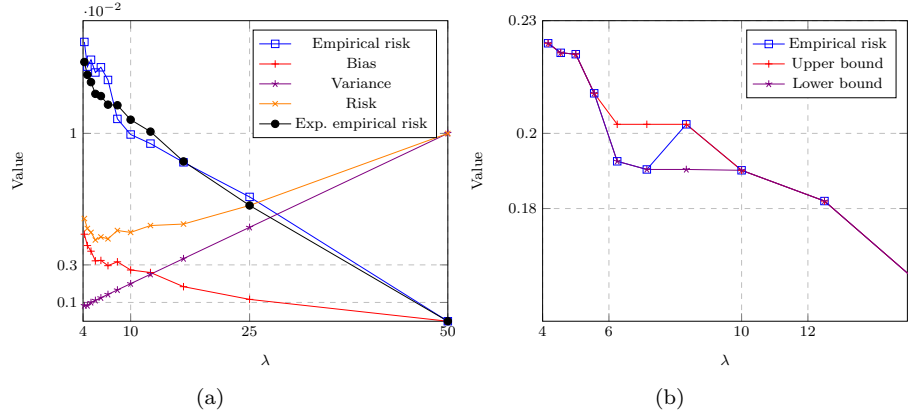

Denote $\widetilde{R}_{\lambda}$ as the tightest non-increasing lower bound on $R_{\lambda}$ and $\overline{R}_{\lambda}$ as the tightest non-increasing upper bound on $R_{\lambda}$. We precise the definitions of the latter quantities below.

\begin{definition} \label{definition_lower_bound_empirical_risk}

Assume that one has the grid of values $\mathfrak{L} = \{1, n/(n-1), \ldots, n \}$, and the empirical risk curve is observed successively, meaning that one starts from $\lambda = 1$ (corresponds to $k = n$) and increases $\lambda$ until the value $n$ (corresponds to $k = 1$). Then, consider the value of $R_{\lambda}$ and its next increment  
$R_{\lambda + \Delta}$ such that $\lambda + \Delta \in \mathfrak{L}$. Define $\widetilde{R}_{1} \coloneqq R_{1}$ and
\begin{equation} \label{lower_bound_empirical_risk}
    \widetilde{R}_{\lambda + \Delta} \coloneqq
    \begin{cases}
    R_{\lambda + \Delta} \textnormal{ if } R_{\lambda + \Delta} - R_{\lambda} \leq 0,\\
    R_{\lambda}, \textnormal{ otherwise; this way, one should wait until }R_{\widetilde{\lambda}} \leq \widetilde{R}_{\lambda + \Delta} \textnormal{ for } \widetilde{\lambda} > \lambda + \Delta.
    \end{cases}    
\end{equation}
\end{definition}

\begin{definition} \label{definition_upper_bound_empirical_risk}

Assume that one has the grid of values $\mathfrak{L} = \{1, n/(n-1), \ldots, n \}$, and the empirical risk curve is observed successively, meaning that one starts from $\lambda = n$ (corresponds to $k = 1$) and decreases $\lambda$ until the value $1$ (corresponds to $k = n$). Then, consider the value of $R_{\lambda}$ and its next increment $R_{\lambda - \Delta}$ such that $\lambda - \Delta \in \mathfrak{L}$. Define $\overline{R}_{n} \coloneqq R_n$ and
\begin{equation} \label{upper_bound_empirical_risk}
    \overline{R}_{\lambda - \Delta} \coloneqq
    \begin{cases}
    R_{\lambda - \Delta} \textnormal{ if } R_{\lambda - \Delta} - R_{\lambda} \geq 0,\\
    R_{\lambda}, \textnormal{ otherwise; this way, one should wait until }R_{\widetilde{\lambda}} \geq \overline{R}_{\lambda - \Delta} \textnormal{ for } \widetilde{\lambda} < \lambda - \Delta.
    \end{cases} 
\end{equation}
\end{definition}
Typical behavior of the defined lower and upper bound $\widetilde{R}_{\lambda}, \overline{R}_{\lambda}$ is illustrated in Fig. \ref{fig:er_lower_upper}. Note that with these definitions:
\begin{align*}
    \lambda_1^{\tau} &= \inf \{ \lambda \in \{ 1, \ldots, n  \} \mid \widetilde{R}_{\lambda} \leq 2 R_2 \},\\
    \lambda_2^{\tau} &= \sup \{ \lambda \in \{ 1, \ldots, n \} \mid \overline{R}_{\lambda} \geq  2 R_2 \}.
\end{align*}
Define an additional stopping rule $\lambda^{\star \star}$ that will be helpful in the analysis.
\begin{equation} \label{k_two_star}
    \lambda^{\star \star} \coloneqq \sup \left\{ \lambda \in \{ 1, \ldots, n \} \mid V(\lambda) + 2 \mathbb{E}_{\varepsilon} R_2 - \sigma^2 + \sigma^2 t_2 \geq B^2(\lambda) \geq V(\lambda) + 2 \mathbb{E}_{\varepsilon} R_2 - \sigma^2 + t_1 \right\}, 
\end{equation}
for some $t_1, t_2 \geq 0$ that will be precised later (see Lemma \ref{deviation_bias_lemma}). If no such $\lambda^{\star \star}$ exists, then $\lambda^{\star \star} = 1$. 
\section{Control of the stochastic part of the variance / the empirical risk} \label{var_control}

\subsection{Control of the stochastic part of the variance}

Consider $v(\lambda_1^{\tau}) = \lVert A_{\lambda_1^{\tau}[k]} \varepsilon \rVert_n^2$ and $V(\lambda_1^{\tau}) = \frac{\sigma^2}{n}\textnormal{tr}\left( A_{\lambda_1^{\tau}[k]} \right)$ for $\lambda_1^\tau[k]$ from Section \ref{main_quantities}. Then for any $\widetilde{t} > 0$,
\begin{equation} \label{series_for_stochastic_term}
    \begin{split}
    \mathbb{P}_{\varepsilon} \left( v(\lambda_1^{\tau}) \geq V(\lambda_1^{\tau}) + \widetilde{t} \right) \leq \mathbb{P}_{\varepsilon} \left( \underset{k \in \{1, \ldots, n \}}{\sup} \left| \lVert A_k \varepsilon \rVert_n^2 - V(k) \right| \geq \widetilde{t}  \right).
    \end{split}
\end{equation}
In what follows, we will bound $\mathbb{P}_{\varepsilon} \left( \underset{k \in \{ 1, \ldots, n \}}{\sup} | \lVert A_k \varepsilon \rVert_n^2 - V(k) | \geq \widetilde{t} \right)$.

\vspace{0.3cm}

%Let us define the set of matrices $\mathcal{A} \coloneqq \left\{ A_k, \ k = 1, \ldots, n \right\}$, then, by using the union bound,
%\begin{align*}
%    \mathbb{P}_{\varepsilon} \left( \right)
%\end{align*}

Let us define the set of matrices $\overline{\mathcal{A}} \coloneqq \left\{ A_k, \ k = 1, \ldots, n \right\}$, then (\cite[Theorem 3.1]{krahmer2014suprema})
\begin{equation}
    \mathbb{P}_{\varepsilon} \left( \underset{\mathbf{A} \in \overline{\mathcal{A}}}{\sup} \left| \lVert \mathbf{A} \varepsilon \rVert^2 - \mathbb{E}_{\varepsilon} \lVert \mathbf{A}\varepsilon \rVert^2 \right| \geq c_1 E + t \right) \leq 2 \exp \left( -c_2 \min \left( \frac{t^2}{V^2}, \frac{t}{U} \right) \right),
\end{equation}
where 
\begin{align*}
    E &= \gamma_2\left(\overline{\mathcal{A}}, \lVert \cdot \rVert_2\right)\left(\gamma_2(\overline{\mathcal{A}}, \lVert \cdot \rVert_2) + \underset{\mathbf{A} \in \overline{\mathcal{A}}}{\sup}\lVert \mathbf{A} \rVert_{F}\right) + \underset{\mathbf{A} \in \overline{\mathcal{A}}}{\sup}\lVert \mathbf{A} \rVert_{F} \underset{\mathbf{A} \in \overline{\mathcal{A}}}{\sup}\lVert \mathbf{A} \rVert_{2},\\
    U &= \left[ \underset{\mathbf{A} \in \overline{\mathcal{A}}}{\sup}\lVert \mathbf{A} \rVert_{2} \right]^2,\\
    V &= \underset{\mathbf{A} \in \overline{\mathcal{A}}}{\sup}\lVert \mathbf{A} \rVert_{2} \left( \gamma_2(\overline{\mathcal{A}}, \lVert \cdot \rVert_2) + \underset{\mathbf{A} \in \overline{\mathcal{A}}}{\sup}\lVert \mathbf{A} \rVert_{F} \right),
\end{align*}
and $\gamma_2(\overline{\mathcal{A}}, \lVert \cdot \rVert_2)$ can be bounded via the metric entropy of $(\overline{\mathcal{A}}, \lVert \cdot \rVert_2)$ as
\begin{equation*}
    \gamma_2(\overline{\mathcal{A}}, \lVert \cdot \rVert_2) \leq c \int_{0}^{\underset{\mathbf{A} \in \overline{\mathcal{A}}}{\sup}\lVert \mathbf{A} \rVert_{2}} \sqrt{\log N(\overline{\mathcal{A}}; \ \lVert \cdot \rVert_2; \ u)}du.    
\end{equation*}
 
First, notice that, due to Lemma \ref{operator_norm_lemma}, for any $\mathbf{A} \in \overline{\mathcal{A}}$, one has $\lVert \mathbf{A} \rVert_2 \leq c$. Moreover, $\log N(\overline{\mathcal{A}}; \ \lVert \cdot \rVert_2; \ u) \leq \log n$ due to the definition of the metric entropy (see, e.g., \cite[Chapter 5]{wainwright2019high}). These arguments imply 
\begin{align*}
    U &\leq c, \qquad \textnormal{ and }\\
    \gamma_2(\overline{\mathcal{A}}, \lVert \cdot \rVert_2) &\leq c_{\gamma} \sqrt{\log n}.
\end{align*}
Second, as for the Frobenius norm, $\underset{\mathbf{A} \in \overline{\mathcal{A}}}{\sup}\lVert \mathbf{A} \rVert_{F} \leq \sqrt{n}$ due to the definition (\ref{a_k_matrix}). Combining all the pieces together, for any $t > 0$,
\begin{equation*}
    \mathbb{P}_{\varepsilon} \left( \underset{\mathbf{A} \in \overline{\mathcal{A}}}{\sup} \left| \lVert \mathbf{A} \varepsilon \rVert_{n}^2 - \mathbb{E}_{\varepsilon} \lVert \mathbf{A} \varepsilon \rVert_n^2 \right| \geq c_1 \sqrt{\frac{\log n}{n}} + t \right) \leq 2 \exp \left( - c_2 \min \left( n t^2, n t   \right) \right),
\end{equation*}
where $c_1$ and $c_2$ may depend on $\sigma^2$. Take $\widetilde{t} = c_1 \sqrt{\frac{\log n}{n}} + t$ in (\ref{series_for_stochastic_term}), then for any $t > 0$,
\begin{equation*}
    \mathbb{P}_{\varepsilon} \left( v(\lambda_1^{\tau}) \geq V(\lambda_1^{\tau}) + c_1 \sqrt{\frac{\log n}{n}} + t \right) \leq 2 \exp \left( - c n \min \left( t^2, t \right) \right).
\end{equation*}
%
%
%
%Recall (\ref{series_for_stochastic_term}) and Lemma \ref{left_deviation_ineq} for $\widetilde{k} = \widetilde{k}^{\star \star}$ defined in Eq. (\ref{k_two_star}), then
%\begin{equation}
%    \mathbb{P}_{\varepsilon} \left( v(\widetilde{k}_1^{\tau}) > V(\widetilde{k}_1^{\tau}) + c_1 \sqrt{\frac{\log n}{n}} + y \right) \leq 4 \exp \left( - c n \min \left( \frac{y^2}{\sigma^4}, \frac{y^2}{\sigma^2 \mathcal{M}^2}, \frac{y}{\sigma^2} \right) \right),
%\end{equation}
%for arbitrary $y > 0$.
%
\subsection{Control of the empirical risk around its expectation} \label{control_emp_risk_subsect}

Define the quadratic form $Q_k(Y) = \frac{1}{n}Y^{\top} (I_n - A_{k})^{\top} (I_n - A_k)Y$ where $Y \overset{\textnormal{i.i.d.}}{\sim} \mathcal{N}(F^*, \Sigma)$ with $\Sigma = \sigma^2 I_n$.

Define $\mathcal{L} = \Sigma^{-1/2}Y$ and $\mathcal{J} = \mathcal{L} - \Sigma^{-1/2}F^*$, then
\begin{align*}
    Q_k(Y) &= \frac{1}{n}(\mathcal{J} + \Sigma^{-1/2}F^*)^{\top}\Sigma^{1/2}(I_n - A_k)^{\top}(I_n - A_k)\Sigma^{1/2}(\mathcal{J} + \Sigma^{-1/2}F^*) \\
    &= \frac{\sigma^2}{n}(\mathcal{J} + \Sigma^{-1/2}F^*)^{\top}P^{\top}\Lambda P (\mathcal{J} + \Sigma^{-1/2}F^*)\\
    &= \frac{\sigma^2}{n}(P\mathcal{J} + P\Sigma^{-1/2}F^*)^{\top}\Lambda(P \mathcal{J} + P\Sigma^{-1/2}F^*)\\
    =& \frac{\sigma^2}{n} (u + b)^{\top}\Lambda (u + b) = \frac{\sigma^2}{n} \sum_{i=1}^n \Lambda_i (u_i + b_i)^2.
\end{align*}
where $b = \frac{1}{\sigma}P F^*$.
It implies that
\begin{equation}
    R_k - \mathbb{E}_{\varepsilon}R_k = \underbrace{\frac{\sigma^2}{n}\sum_{i=1}^n \Lambda_i \left( u_i^2 - 1 \right)}_{\mathcal{A}} + \underbrace{\frac{2 \sigma^2}{n}\sum_{i=1}^n \Lambda_i b_i u_i}_{\mathcal{B}}
\end{equation}
where $u_i \overset{\textnormal{i.i.d.}}{\sim} \mathcal{N}(0, 1)$.

Define $\phi_i = \frac{\sigma^2}{n}\Lambda_i$. Further, we will use (\cite[Lemma 1]{laurent2000adaptive}), i.e. for any $t > 0$,
\begin{align*}
    \mathbb{P}_{\varepsilon}\left( \mathcal{A} \geq 2 \lVert \phi \rVert_2 \sqrt{t} + 2 \lVert \phi \rVert_{\infty} t  \right) &\leq \exp(-t),\\
    \mathbb{P}_{\varepsilon}\left( \mathcal{A} \leq -2 \lVert \phi \rVert_2\sqrt{t} \right) &\leq \exp(-t).
\end{align*}
First, since for the eigenvalues of $(I_n - A_k)^{\top}(I_n - A_k)$: $\frac{\Lambda_i}{4} \leq 1$ for $i = 1, \ldots, n$, then
\begin{align*}
    \lVert \phi \rVert_{\infty} &\leq \frac{4 \sigma^2}{n},\\
    \lVert \phi \rVert_2^2 &\leq \frac{16 \sigma^4}{n}.
\end{align*}
Thus, on the one hand, for any $t > 0$,
\begin{align*}
    \mathbb{P}_{\varepsilon}\left( \mathcal{A} \geq 8 \sigma^2 \sqrt{t} + 8 \sigma^2 t \right) &\leq \exp(-nt),\\
    \mathbb{P}_{\varepsilon}\left( \mathcal{A} \leq - 8 \sigma^2 \sqrt{t} \right) &\leq \exp(-nt).
\end{align*}
And on the other hand, for any $t > 0$,
\begin{equation*}
    \mathbb{P}_{\varepsilon}\left( \mid \mathcal{B} \mid \geq t \right) \leq 2 \exp \left( - \frac{n t^2 \sigma^2}{128 \mathcal{M}^2} \right).
\end{equation*}
It implies that for any $t > 0$,
\begin{equation} \label{final_sup_for_empirical_risk}
    R_k - \mathbb{E}_{\varepsilon}R_k \in \left( -9 \sigma^2 t, 8 \sigma^2 t^2 + 9\sigma^2 t \right)
\end{equation}
with probability at least $1 - 4 \exp \left( - \min \left(1, \frac{\sigma^2}{128 \mathcal{M}^2}  \right) n t^2 \right)$.

\section{Deviation inequality for the variance term} \label{appendix_variance_deviation}

This is the first deviation inequality for $\lambda_1^{\tau}$ that will be used to control the variance term.

\begin{lemma} \label{lemma_for_variance}
Under Assumption \ref{assumption_boundness}, define $\mathcal{K}_{V} \subseteq \{1, \ldots, n \}$ such that, for any $\lambda \in \mathcal{K}_{V}$, one has $V(\lambda) \geq B^2(\lambda) + \sigma^2 - 2 \mathbb{E}_{\varepsilon} R_2 + t$ for some $t \geq 0$. Recall the definition of $\lambda_1^{\tau}$ from Eq. (\ref{stopping_times_tilde}), then for any $\lambda \in \mathcal{K}_V$,
\begin{equation}
    \mathbb{P}_{\varepsilon} \left( \lambda_1^{\tau} > \lambda \right) \leq 4 \exp \left( -c n \min \left( \frac{t^2}{\sigma^4}, \frac{t^2}{\sigma^2}, \frac{t}{\sigma^2} \right) \right),
\end{equation}
where constant $c$ depends on $\mathcal{M}$.
\end{lemma}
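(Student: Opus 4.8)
The plan is to reduce the event $\{\lambda_1^{\tau} > \lambda\}$ to a one-sided deviation of the empirical risk at the single index $\lambda$, and then to control that deviation by Gaussian concentration. First, by the definition $\lambda_1^{\tau} = \inf\{\mu \in \Lambda \mid R_{\mu} \le \sigma^2\}$ from Eq. (\ref{stopping_times_tilde}), the event $\{\lambda_1^{\tau} > \lambda\}$ forces $R_{\mu} > \sigma^2$ for every grid point $\mu \le \lambda$, in particular $R_{\lambda} > \sigma^2$. (Equivalently one may phrase this through the running minimum $\widetilde{R}_{\lambda}$ of Definition \ref{definition_lower_bound_empirical_risk}, since $\{\lambda_1^{\tau} > \lambda\} = \{\widetilde{R}_{\lambda} > \sigma^2\} \subseteq \{R_{\lambda} > \sigma^2\}$.) Hence $\mathbb{P}_{\varepsilon}(\lambda_1^{\tau} > \lambda) \le \mathbb{P}_{\varepsilon}(R_{\lambda} > \sigma^2)$, and it remains only to bound the latter for a fixed $\lambda \in \mathcal{K}_V$.

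Next I would invoke the expected-empirical-risk identity (\ref{exp_empirical_risk}) written in the $\lambda$-notation, $\mathbb{E}_{\varepsilon} R_{\lambda} = \sigma^2 + B^2(\lambda) - V(\lambda)$, which on $\mathcal{K}_V$ gives $\mathbb{E}_{\varepsilon} R_{\lambda} \le \sigma^2 - y$. I would also record the elementary but essential remark that $0 \le y \le V(\lambda) = \sigma^2/k \le \sigma^2$, so that $\sqrt{\sigma^2 - y}$ is well defined and the target rate $n\min(y^2/\sigma^4, y/\sigma^2)$ is simply $n y^2/\sigma^4$.

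The core step is to view $h(\varepsilon) \coloneqq \sqrt{R_{\lambda}} = \lVert (I_n - A_{\lambda})(F^* + \varepsilon) \rVert_n$ as a Lipschitz function of the Gaussian vector $\varepsilon \sim \mathcal{N}(0, \sigma^2 I_n)$. By the reverse triangle inequality and Lemma \ref{operator_norm_lemma} (which gives $\lVert I_n - A_{\lambda} \rVert_2 \le c_d$), $h$ is $(c_d/\sqrt{n})$-Lipschitz for the Euclidean norm on $\mathbb{R}^n$. Jensen's inequality yields $\mathbb{E}_{\varepsilon} h \le \sqrt{\mathbb{E}_{\varepsilon} R_{\lambda}} \le \sqrt{\sigma^2 - y}$, so on the event $\{R_{\lambda} > \sigma^2\} = \{h(\varepsilon) > \sigma\}$ one has $h(\varepsilon) - \mathbb{E}_{\varepsilon} h > \sigma - \sqrt{\sigma^2 - y} = y/(\sigma + \sqrt{\sigma^2 - y}) \ge y/(2\sigma)$. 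Applying the Gaussian concentration inequality for Lipschitz functions (Tsirelson--Ibragimov--Sudakov), $\mathbb{P}_{\varepsilon}(h - \mathbb{E}_{\varepsilon} h > t) \le \exp(-n t^2/(2 c_d^2 \sigma^2))$, with $t = y/(2\sigma)$ gives $\mathbb{P}_{\varepsilon}(R_{\lambda} > \sigma^2) \le \exp(-n y^2/(8 c_d^2 \sigma^4)) = \exp(-c_d' n \min(y^2/\sigma^4, y/\sigma^2))$, which is the claim (with room to spare for the constant $2$).

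The delicate point is obtaining the tail of $R_{\lambda}$ — a quadratic functional of the noise — with a constant depending only on $d$ (and explicit $\sigma$), with no appearance of $\mathcal{M}$. One could instead split $R_{\lambda} - \mathbb{E}_{\varepsilon} R_{\lambda} = (\lVert M_{\lambda}\varepsilon \rVert_n^2 - \mathbb{E}_{\varepsilon}\lVert M_{\lambda}\varepsilon \rVert_n^2) + 2\langle M_{\lambda} F^*, M_{\lambda}\varepsilon \rangle_n$, and combine Lemma \ref{quadratic_term_concentration} (Hanson--Wright, with $\lVert M_{\lambda}^{\top} M_{\lambda} \rVert_2 \le c_d^2$ and $\lVert M_{\lambda}^{\top} M_{\lambda} \rVert_F^2 \le c_d^2 n$ from Lemma \ref{operator_norm_lemma}) with Lemma \ref{linear_term_concentration} for the cross-term; this does reproduce the $\min(y^2/\sigma^4, y/\sigma^2)$ shape, but the linear term unavoidably drags in $\lVert f^* \rVert_n \le \mathcal{M}$. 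The Lipschitz-concentration route is what keeps the constant clean, because the Lipschitz constant of $\sqrt{R_{\lambda}}$ only sees $\lVert I_n - A_{\lambda} \rVert_2 \le c_d$ and is blind to $F^*$. The remaining verifications (the bound $y \le \sigma^2$, the Lipschitz estimate, and Jensen) are routine.
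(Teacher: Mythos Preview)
Your proof is correct and takes a genuinely different route from the paper. The paper proceeds exactly through the decomposition you sketch at the end: it writes $R_{\lambda}-\mathbb{E}_{\varepsilon}R_{\lambda}$ as a quadratic-plus-linear Gaussian functional, applies Hanson--Wright (Lemma~\ref{quadratic_term_concentration}) to the quadratic piece, and Lemma~\ref{linear_term_concentration} to the cross-term $2\langle M_{\lambda}F^*, M_{\lambda}\varepsilon\rangle_n$. Your Lipschitz-concentration argument on $\sqrt{R_{\lambda}}$ bypasses this entirely and is shorter; it also makes transparent that Assumption~\ref{assumption_boundness} is not actually needed here, since the Lipschitz constant of $\varepsilon\mapsto\lVert M_{\lambda}(F^*+\varepsilon)\rVert_n$ is $c_d/\sqrt{n}$ regardless of $F^*$.

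One small correction to your closing commentary: the decomposition route does \emph{not} unavoidably drag in $\mathcal{M}$. The paper's trick for the linear term is to bound
\[
\lVert M_{\lambda}^{\top}M_{\lambda}F^*\rVert^2 \;\le\; \lVert M_{\lambda}^{\top}\rVert_2^2\,\lVert M_{\lambda}F^*\rVert^2 \;=\; c_d\, n\, B^2(\lambda),
\]
and then use the defining inequality of $\mathcal{K}_V$ to write $B^2(\lambda)\le V(\lambda)\le\sigma^2$. This yields a tail $\exp\bigl(-c\,n y^2/\sigma^4\bigr)$ with a constant depending only on $d$, exactly as in your approach. So both routes keep the constant free of $\mathcal{M}$; the paper's does so by exploiting the membership $\lambda\in\mathcal{K}_V$ a second time, while yours never needs to because the Lipschitz constant is blind to $F^*$ from the start. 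Your observation that $y\le V(\lambda)\le\sigma^2$ collapses the $\min$ to $y^2/\sigma^4$ is a nice simplification that the paper does not make explicit.
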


\begin{proof}
We start with the following series of inequalities that can be derived from the definition of $\lambda_1^{\tau}$ and the lower bound on the empirical risk $\widetilde{R}_{\lambda}$ (see Section \ref{main_quantities}). Since for any $\widetilde{t} \geq 0$, $\mathbb{P}_{\varepsilon}\left( Q_1 + Q_2 > \widetilde{t} \right) \leq \mathbb{P}_{\varepsilon}\left( Q_1 > \widetilde{t}/2 \right) + \mathbb{P}_{\varepsilon}\left( Q_2 > \widetilde{t}/2 \right)$, with $Q_1 = R_{\lambda} - \mathbb{E}_{\varepsilon}R_{\lambda}$ and $Q_2 = 2\mathbb{E}_{\varepsilon}R_2 - 2R_2 $, one has
\begin{align*}
    \mathbb{P}_{\varepsilon} \left( \lambda_1^{\tau} > \lambda \right) &= \mathbb{P}_{\varepsilon}\left( \widetilde{R}_{\lambda} > 2 R_2 \right)\\
    &\leq \mathbb{P}_{\varepsilon} \left( R_{\lambda} - \mathbb{E}_{\varepsilon}R_{\lambda} > 2 \mathbb{E}_{\varepsilon}  R_2 - \mathbb{E}_{\varepsilon}R_{\lambda} + 2R_2 - 2\mathbb{E}_{\varepsilon}R_2 \right) \\
    &\leq \mathbb{P}_{\varepsilon}\left( R_{\lambda} - \mathbb{E}_{\varepsilon}R_{\lambda} > \frac{1}{2}[2\mathbb{E}_{\varepsilon}R_2 - \mathbb{E}_{\varepsilon}R_{\lambda}] \right) + \mathbb{P}_{\varepsilon}\left( R_2 - \mathbb{E}_{\varepsilon}R_2 < - \frac{1}{4}[2\mathbb{E}_{\varepsilon}R_2 - \mathbb{E}_{\varepsilon}R_{\lambda}] \right).
\end{align*}
Due to Eq. (\ref{exp_empirical_risk}), one has 
\begin{equation} \label{deviation_to_extend}
    2 \mathbb{E}_{\varepsilon}R_2 - \mathbb{E}_{\varepsilon}R_{\lambda} = V(\lambda) - B^2(\lambda) + 2 \mathbb{E}_{\varepsilon} R_2 - \sigma^2 \geq t.
\end{equation}
%
%Since $\sigma^2 - \mathbb{E}_{\varepsilon}R_{\widetilde{k}} = V(\widetilde{k}) - B^2(\widetilde{k})$, one has
%\begin{equation*}
%     \sigma^2 - \mathbb{E}_{\varepsilon}R_{\widetilde{k}} \geq V(\widetilde{k}) - V(\widetilde{k}_2^*) \coloneqq y.
%\end{equation*}
Moreover,
\begin{align*}
        R_{\lambda} - \mathbb{E}_{\varepsilon}R_{\lambda} &= \lVert (I_n - A_{\lambda[k]})\varepsilon \rVert_n^2 - \frac{\sigma^2}{n}\left( n - \textnormal{tr}(A_{\lambda[k]}) \right) + 2 \langle (I_n - A_{\lambda[k]}) F^*, (I_n - A_{\lambda[k]})\varepsilon \rangle_n.
\end{align*}
Define for simplicity $M_{\lambda[k]} \coloneqq I_n - A_{\lambda[k]}$, then %
\begin{align*}
    \mathbb{P}_{\varepsilon}\left( \lambda_1^{\tau} > \lambda \right) &\leq \mathbb{P}_{\varepsilon} \left( \lVert M_{\lambda[k]} \varepsilon \rVert_n^2 - \frac{\sigma^2}{n}\left( n - \textnormal{tr}(A_{\lambda[k]}) \right) > \frac{t}{4} \right) + \mathbb{P}_{\varepsilon} \left( 2 \langle M_{\lambda[k]} F^*, M_{\lambda[k]} \varepsilon \rangle_n > \frac{t}{4} \right)\\
    &+ \mathbb{P}_{\varepsilon}\left(\lVert M_2 \varepsilon \rVert_n^2 - \frac{\sigma^2}{n}\left( n - \textnormal{tr}(A_2) \right) < -\frac{t}{8} \right) + \mathbb{P}_{\varepsilon} \left( 2 \langle M_2 F^*, M_2 \varepsilon \rangle_n < -\frac{t}{8} \right).
\end{align*}
Further, we will concentrate the quadratic and linear terms as follows.

\textbf{First term.} The linear term $2 \langle M_{\lambda[k]} F^*, M_{\lambda[k]} \varepsilon \rangle_n$: using Lemma 
\ref{linear_term_concentration} and Lemma \ref{operator_norm_lemma} gives us
\begin{align*}
    \mathbb{P}_{\varepsilon}\left( 2 \langle M_{\lambda[k]} F^*, M_{\lambda[k]} \varepsilon \rangle_n > \frac{t}{4} \right) &= \mathbb{P}_{\varepsilon} \left( \langle M_{\lambda[k]}^\top M_{\lambda[k]} F^*, \varepsilon \rangle > \frac{n t}{8} \right)\\
    &\leq \exp \left[ - \frac{n^2 t^2}{128 \sigma^2 \lVert M_{\lambda[k]}^\top M_{\lambda[k]} F^* \rVert^2} \right]\\
    &\leq \exp \left[ - \frac{n t^2}{128 \sigma^2 \lVert M_{\lambda[k]}^\top M_{\lambda[k]} \rVert_2^2 \lVert f^* \rVert_n^2 } \right]\\
    &\leq \exp \left[ - \frac{n t^2}{512 \mathcal{M}^2 \sigma^2} \right].
\end{align*}
\textbf{Second term.} Consider the quadratic term $\lVert M_{\lambda[k]} \varepsilon \rVert_n^2 - \frac{\sigma^2}{n} \left( n - \textnormal{tr}A_{\lambda[k]} \right)$: combining Lemma \ref{quadratic_term_concentration} and Lemma \ref{operator_norm_lemma} gives
\begin{align*}
    \mathbb{P}_{\varepsilon} \left( \lVert M_{\lambda[k]} \varepsilon \rVert_n^2 - \frac{\sigma^2}{n}\left( n - \textnormal{tr}A_{\lambda[k]} \right) > \frac{t}{4} \right) &\leq \exp \bigg[-c \min \bigg( \frac{n^2 t^2}{16\sigma^4 \lVert M_{\lambda[k]}^\top M_{\lambda[k]} \rVert_{F}^2},\\ & \frac{n t}{4 \sigma^2 \lVert M_{\lambda[k]}^\top M_{\lambda[k]} \rVert_{2}} \bigg) \bigg] \\
    &\leq \exp \left[ - c \min \left( \frac{n t^2}{16 \sigma^4}, \frac{nt}{4 \sigma^2} \right) \right],
\end{align*}
where constant $c$ is numeric.  
Finally, 
\begin{align*}
    \mathbb{P}_{\varepsilon}\left( 2 \langle M_2F^*, M_2\varepsilon \rangle_n < -\frac{t}{8} \right) \leq \text{exp}\left[ -\frac{n^2 t^2}{512 \sigma^2 \lVert M_{\lambda[k]}^\top M_{\lambda[k]}F^* \rVert^2}  \right] &\leq \text{exp}\left[ - \frac{n t^2}{8192 \sigma^2 \mathcal{M}^2} \right], \\
    \mathbb{P}_{\varepsilon}\left( \lVert M_{2} \varepsilon \rVert_n^2 - \frac{\sigma^2}{n} (n - \text{tr}A_{2}) < -\frac{t}{8} \right) &\leq \text{exp}\left[ -c \text{min}\left( \frac{nt^2}{64\sigma^4}, \frac{n t}{8 \sigma^2} \right) \right],
\end{align*}
where constant $c$ is numeric. 

\end{proof}

Based on Lemma \ref{lemma_for_variance}, due to the fact that the variance $V(\lambda)$ is increasing  w.r.t. $\lambda \in \{1, \ldots, n\}$, the following corollary holds.
\begin{corollary} \label{corollary_for_variance}
Define $[\widetilde{\lambda} + 1]$ as the next value of $\lambda > \widetilde{\lambda}$ on the grid $\{ 1, \ldots, n \}$. Then for any $t > 0$, define $0 \leq \Delta t \leq t$ as the distance between $V([\lambda_2^* + 1]) + t$ and $V(\lambda_0)$, where $V(\lambda_0)$ is the closest to $V([\lambda_2^* + 1]) + t$ value of $V(\lambda)$ such that 
\begin{itemize}
    \item either $t - \Delta t$ is lower than or equal to $V(\lambda) - B^2(\lambda) + 2\mathbb{E}_{\varepsilon}R_2 - \sigma^2$ while $V(\lambda) + 2\mathbb{E}_{\varepsilon}R_2 - \sigma^2 \geq B^2(\lambda)$,
    \item or $V(\lambda)$ is lower than or equal to $V([\lambda_2^* + 1]) + t$ while $B^2(n) = 0$,
\end{itemize}
over the grid of $\lambda \in \{[\lambda_2^* + 1], [\lambda_2^* + 2], \ldots, n \}$. Then due to the monotonicity of the variance term,
\begin{equation}
    \mathbb{P}_{\varepsilon}\left( V(\lambda_1^{\tau}) > V([\lambda_2^* + 1]) + t - \Delta t \right) \leq 4 \exp \left( -c n \min \left( \frac{(t - \Delta t)^2}{\sigma^4}, \frac{(t - \Delta t)^2}{\sigma^2}, \frac{t - \Delta t}{\sigma^2} \right) \right)
\end{equation}
for constant $c$ that depends only on $\mathcal{M}$. Moreover, due to Lemma \ref{variance_aux_lemma}, $\frac{1}{2}V([\lambda_2^* + 1]) \leq V(\lambda_2^*) \leq V([\lambda_2^* + 1])$, which implies that
\begin{equation*}
    \mathbb{P}_{\varepsilon} \left( V(\lambda_1^{\tau}) > 2 V(\lambda_2^*) + t - \Delta t \right) \leq 4 \exp \left( -c n \min \left(  \frac{(t - \Delta t)^2}{\sigma^4}, \frac{(t - \Delta t)^2}{\sigma^2}, \frac{t - \Delta t}{\sigma^2} \right) \right) \ \forall t > 0.
\end{equation*}
\end{corollary}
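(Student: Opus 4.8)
The plan is to reduce the corollary to a single application of Lemma~\ref{lemma_for_variance}, exploiting only that $\lambda\mapsto V(\lambda)=\sigma^2\lambda/n$ is strictly increasing along the grid $\{1,n/(n-1),\dots,n\}$. By this monotonicity, for any \emph{deterministic} grid point $\lambda_0$ one has the identity of events $\{V(\lambda_1^{\tau})>V(\lambda_0)\}=\{\lambda_1^{\tau}>\lambda_0\}$. So the first step is to identify a deterministic $\lambda_0$ in the sub-grid $\{\lambda[k^*-1],\lambda[k^*-2],\dots,n\}$ with $V(\lambda_0)\le V(\lambda[k^*-1])+y$, and to set $\Delta y:=V(\lambda[k^*-1])+y-V(\lambda_0)\in[0,y]$, so that $V(\lambda_0)=V(\lambda[k^*-1])+y-\Delta y$. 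The slack $\Delta y$ reflects that the continuous target $V(\lambda[k^*-1])+y$ need not be attained exactly by any $V(\lambda)$; this is precisely the $\Delta y$ of the statement.

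The second step is to check that such a $\lambda_0$ is admissible in Lemma~\ref{lemma_for_variance} with the parameter $y-\Delta y$, i.e. that $\lambda_0\in\mathcal{K}_V$ in the sense $V(\lambda_0)\ge B^2(\lambda_0)+(y-\Delta y)$. Since $\lambda_0\ge\lambda[k^*-1]$ and $V$ is increasing, $y-\Delta y=V(\lambda_0)-V(\lambda[k^*-1])\ge 0$, and the admissibility condition simplifies to $B^2(\lambda_0)\le V(\lambda[k^*-1])$ --- exactly the first alternative in the definition of $\lambda_0$ (the second alternative being the degenerate fallback that cannot occur here, since every index in this sub-grid is $<k^*$ and hence has $B^2<V$). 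An admissible $\lambda_0$ always exists, because $\lambda_0=\lambda[k^*-1]$ already satisfies $B^2(\lambda[k^*-1])<V(\lambda[k^*-1])$ by the very definition of $k^*$ (giving the trivial $\Delta y=y$); among the admissible candidates one keeps the one whose variance is closest to $V(\lambda[k^*-1])+y$, i.e. $\Delta y$ smallest.

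With $\lambda_0$ fixed, Lemma~\ref{lemma_for_variance} at $\lambda=\lambda_0$ with parameter $y-\Delta y$ gives
\[
\mathbb{P}_{\varepsilon}\bigl(\lambda_1^{\tau}>\lambda_0\bigr)\le 2\exp\Bigl[-c_d\,n\min\Bigl(\tfrac{(y-\Delta y)^2}{\sigma^4},\tfrac{y-\Delta y}{\sigma^2}\Bigr)\Bigr],
\]
and combining with the event identity of the first step yields the first displayed bound. For the ``moreover'' part, Lemma~\ref{variance_aux_lemma} applied to $k=k^*$ gives $V(\lambda[k^*-1])=V(k^*-1)\le 2V(k^*)=2V(\lambda_2^{*})$ (the variance at $\lambda_2^{*}$ being that at $k^*$), so the threshold $2V(\lambda_2^{*})+y-\Delta y$ is no smaller than $V(\lambda[k^*-1])+y-\Delta y$; hence $\{V(\lambda_1^{\tau})>2V(\lambda_2^{*})+y-\Delta y\}\subseteq\{V(\lambda_1^{\tau})>V(\lambda[k^*-1])+y-\Delta y\}$ and the same tail bound transfers.

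The only genuinely delicate point is the discrete-grid bookkeeping in the first two steps: one must choose $\lambda_0$ so that it \emph{simultaneously} realizes the rounded-down threshold $V(\lambda[k^*-1])+y-\Delta y$ and lies in $\mathcal{K}_V$ for the matching parameter $y-\Delta y$, and one must verify $0\le\Delta y\le y$. This is exactly what the intricate definition of $\lambda_0$ in the statement encodes; granting it, everything else follows immediately from the monotonicity of $V(\cdot)$ together with Lemmas~\ref{lemma_for_variance} and~\ref{variance_aux_lemma}.
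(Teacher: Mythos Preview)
Your proposal is correct and follows exactly the route the paper intends: the paper presents Corollary~\ref{corollary_for_variance} as an immediate consequence of Lemma~\ref{lemma_for_variance} together with the monotonicity of $V(\lambda)$, and of Lemma~\ref{variance_aux_lemma} for the ``moreover'' part, without spelling out the discrete-grid bookkeeping. Your write-up supplies precisely those details---the event identity $\{V(\lambda_1^{\tau})>V(\lambda_0)\}=\{\lambda_1^{\tau}>\lambda_0\}$, the verification that the chosen $\lambda_0$ lies in $\mathcal{K}_V$ with parameter $y-\Delta y$, and the observation that the second alternative in the statement is vacuous on the sub-grid since $k<k^*$ forces $B^2<V$---and there is nothing to add.
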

Thus, one is able to control $V(\lambda_1^{\tau})$ via $V(\lambda_2^*)$.

\section{Deviation inequality for the bias term} \label{appendix_bias_deviation}

What follows is the second deviation inequality for $\lambda_1^{\tau}$ that will be further used to control the bias term.
\begin{lemma} \label{left_deviation_ineq}
Under Assumption \ref{assumption_boundness}, define $\mathcal{K}_{B} \subseteq \{ 1, \ldots, n \}$ such that, for any $\lambda \in \mathcal{K}_{B}$, one has $V(\lambda) + 2\mathbb{E}_{\varepsilon}R_2 - \sigma^2 + \sigma^2 t_2 \geq B^2(\lambda) \geq V(\lambda) + 2\mathbb{E}_{\varepsilon}R_2 - \sigma^2 + t_1$ for some $t_1, t_2 \geq 0$. Then if $\mathcal{K}_B$ is not empty, $\lambda_1^{\tau}$ from Eq. (\ref{stopping_times_tilde}) satisfies 
\begin{align*}
    \mathbb{P}_{\varepsilon} \left( \lambda_1^{\tau} \leq \lambda \right) &\leq 8 \exp \left( - c n \min \left( \frac{t_1^2}{\sigma^4}, \frac{t_1^2}{\sigma^2}, \frac{t_1}{\sigma^2} \right) \right) + 5 \exp \left( - \min \left(1, \frac{\sigma^2}{128 \mathcal{M}^2} \right) n t_2^2 \left( 1 - \frac{1}{n^{1.5}2^{n/2}} \right)^2 \right) \\
    &+ \frac{5 (t_2 + t_2^2)}{\sigma^2 \sqrt{ n \left( \frac{n}{2} - 1 \right)}},
\end{align*}
where constant $c$ depends on $\mathcal{M}$ only, and $n \geq 3$.
\end{lemma}

\begin{proof}
Consider Ineq. (\ref{final_sup_for_empirical_risk}) and the event
\begin{equation*}
     \left\{ -9 \sigma^2 t < R_{\lambda} - \mathbb{E}_{\varepsilon}R_{\lambda} < 8 \sigma^2 t^2 + 9 \sigma^2 t  \right\}
\end{equation*}
for any $t > 0$.
Further, $\lambda_1^{\tau} \leq \lambda_2^{\tau}$ and recall that $\overline{R}_{\lambda}$ is the upper bound on $R_{\lambda}$ from Section \ref{main_quantities}, which implies that
\begin{align} \label{first_split}
    \begin{split}
    \mathbb{P}_{\varepsilon} \left( \lambda_1^{\tau} \leq \lambda \right) &= \underbrace{\mathbb{P}_{\varepsilon} \left( \left\{ \lambda_1^{\tau} \leq \lambda \right\} \bigcap \left\{ \lambda > \lambda_2^{\tau} \right\}  \right)}_{\mathcal{A}} + \underbrace{\mathbb{P}_{\varepsilon} \left( \left\{ \lambda_1^{\tau} \leq \lambda \right\} \bigcap \left\{ \lambda \leq \lambda_2^{\tau} \right\} \right)}_{\mathcal{B}},\\
    \mathcal{A} &= \mathbb{P}_{\varepsilon}\left(\overline{R}_{\lambda} < 2 R_2 \right) \leq \mathbb{P}_{\varepsilon} \left( R_{\lambda} < 2R_2 \right),\\
    \mathcal{B} &= \mathbb{P}_{\varepsilon} \left( \lambda \in \left[\lambda_1^{\tau}, \lambda_2^{\tau}\right] \right).
    \end{split}
\end{align}
Consider the probability $\mathcal{B}$ from (\ref{first_split}). With probability at least $1 - 4 \text{exp}\left( - n t_2^2 \min \left( 1, \sigma^2 / 128 \mathcal{M}^2 \right) \right)$, 
\begin{equation*}
    8 \sigma^2 t_2^2 + 9 \sigma^2 t_2 > R_{\lambda} - \mathbb{E}_{\varepsilon}R_{\lambda} > - 9 \sigma^2 t_2
\end{equation*}
for any $t_2 > 0$. Denote this high probability event as $\mathcal{E}$. Then 
\begin{align*}
    \mathcal{B} &= \mathbb{P}_{\varepsilon} \left( \{ \lambda \in [ \lambda_1^{\tau}, \lambda_2^{\tau} ] \} \cap \{ R_{\lambda} < 2R_2 \} \right) + \mathbb{P}_{\varepsilon}\left( \{ \lambda \in [\lambda_1^{\tau}, \lambda_2^{\tau}] \} \cap \{ R_{\lambda} \geq 2 R_2 \}  \right)\\
    &\leq \mathbb{P}_{\varepsilon} \left( R_{\lambda} < 2R_2 \right) + \mathbb{P}_{\varepsilon} \left( \{R_{\lambda} \geq 2R_2\} \cap \mathcal{E} \right) + \mathbb{P}_{\varepsilon} \left( \{R_{\lambda} \geq 2R_2\} \cap \mathcal{E}^c \right).
\end{align*}
%Take $t = V(\lambda) - B^2(\lambda) + 2R_2 - \sigma^2 + \widetilde{c}_1 \sqrt{\frac{\log n}{n}}$ for $\widetilde{c_1} > c_1 > 0$.
%\begin{equation*}
%    \mathcal{B} = \underbrace{\mathbb{P}_{\varepsilon} \left( \left\{ \lambda \in \left( \lambda_1^{\tau}, \lambda_2^{\tau} \right]  \right\} \bigcap \left\{ R_{\lambda} > \sigma^2 \right\} \right)}_{\mathcal{C}} + \underbrace{\mathbb{P}_{\varepsilon} \left( \left\{ \lambda \in \left( \lambda_1^{\tau}, \lambda_2^{\tau} \right] \right\} \bigcap \left\{ R_{\lambda} \leq \sigma^2 \right\} \right)}_{\mathcal{D}}.
%\end{equation*}
%
On the one hand, since for any $\widetilde{t} \geq 0$, $ \ \mathbb{P}_{\varepsilon}\left( Q_1 + Q_2 < -\widetilde{t} \right) \leq \mathbb{P}_{\varepsilon}\left(Q_1 < -\widetilde{t}/2 \right) + \mathbb{P}_{\varepsilon}\left(Q_2 < -\widetilde{t}/2 \right)$, with $Q_1 = R_{\lambda} - \mathbb{E}_{\varepsilon}R_{\lambda}$ and $Q_2 = 2\mathbb{E}_{\varepsilon}R_2 - 2R_2$, one gets
\begin{align*}
    \mathbb{P}_{\varepsilon} \left( R_{\lambda} < 2 R_2  \right) &= \mathbb{P}_{\varepsilon}\left( R_{\lambda} - \mathbb{E}_{\varepsilon}R_{\lambda} < - [B^2(\lambda) - V(\lambda) + \sigma^2 - 2 \mathbb{E}_{\varepsilon} R_2] + 2R_2 - 2 \mathbb{E}_{\varepsilon}R_2 \right)\\
    &\leq \mathbb{P}_{\varepsilon}\left( R_{\lambda} - \mathbb{E}_{\varepsilon}R_{\lambda} < - \frac{B^2(\lambda) - V(\lambda) + \sigma^2 - 2\mathbb{E}_{\varepsilon}R_2}{2} \right)\\ &+ \mathbb{P}_{\varepsilon}\left( 2 \left( \mathbb{E}_{\varepsilon}R_2 - R_2 \right) < -\frac{B^2(\lambda) - V(\lambda) + \sigma^2 - 2 \mathbb{E}_{\varepsilon}R_2}{2} \right)\\
    &\leq \mathbb{P}_{\varepsilon}\left( R_{\lambda} - \mathbb{E}_{\varepsilon}R_{\lambda} < - \frac{t_1}{2} \right) + \mathbb{P}_{\varepsilon}\left( R_2 - \mathbb{E}_{\varepsilon}R_2 > \frac{t_1}{4} \right).
\end{align*}
On the other hand, for $0 < z < 1$, using the SVD decomposition from Subsection \ref{control_emp_risk_subsect},
\begin{align*}
    \mathbb{P}_{\varepsilon} \left( \{R_{\lambda} \geq 2R_2 \} \cap \mathcal{E} \right) &\leq \mathbb{P}_{\varepsilon} \left( R_{\lambda} - \mathbb{E}_{\varepsilon}R_{\lambda} \in \left[ \underbrace{- \sigma^2 t_2}_{a_1}, \underbrace{9 \sigma^2 t_2 + 8 \sigma^2 t_2^2}_{a_2}  \right] \right)\\
    &\leq \mathbb{P}_u\left( \frac{4\sigma^2}{n}\sum_{i=1}^n \frac{\Lambda_i}{4} (u_i^2 - 1) \in \left[a_1 (1 - z), a_2 (1 - z)\right] \right) \\
    &+\mathbb{P}_u \left(\frac{2\sigma^2}{n}\sum_{i=1}^n \Lambda_i b_i u_i \in \left[a_1 z, a_2 z \right] \right).
\end{align*}
First, by using Markov's inequality: for any $q > 0$
\begin{equation*}
    \mathbb{E}X = \int_{0}^{+\infty}w f(w) dw = \int_{0}^{e^{q a_1 z}}w f(w) dw + \int_{e^{q a_1 z}}^{e^{q a_2 z}}w f(w)dw + \int_{e^{q a_2 z}}^{+\infty} w f(w) dw \geq \int_{e^{q a_1 z}}^{e^{q a_2 z}} e^{q a_1 z} f(w)dw.
\end{equation*}
Define $X = \exp \left( q \frac{2 \sigma^2}{n} \sum_{i=1}^n \Lambda_i b_i u_i \right)$. Thus, 
\begin{align*}
    p_1 = \mathbb{P}_u \left( \frac{2 \sigma^2}{n}\sum_{i=1}^n \Lambda_i b_i u_i \in [a_1 z, a_2 z ] \right) &\leq \exp \left( - q a_1 z \right) \mathbb{E}_u \left[ \exp \left( q \frac{2 \sigma^2}{n}\sum_{i=1}^n \Lambda_i b_i u_i  \right) \right] \\
    &\leq \exp \left( -q a_1 z + \frac{q^2}{2}\sum_{i=1}^n \frac{4 \sigma^4}{n^2} \Lambda_i^2 b_i^2 \right).
\end{align*}
Minimizing the above expression over $q > 0$, gives us $q = \frac{a_1 z}{4 \sigma^4 / n^2 \sum_{i=1}^n \Lambda_i^2 b_i^2}$. It implies
\begin{align*}
p_1 &\leq \exp \left( - \frac{a_1^2 z^2}{\frac{8\sigma^4}{n^2}\sum_{i=1}^n \Lambda_i^2 b_i^2} \right) \\
&\leq \exp \left( - \frac{t_2^2 z^2 n \sigma^2}{128 \mathcal{M}^2} \right).
\end{align*}
%\begin{align*}
%    \mathbb{P}_{\varepsilon}\left( \frac{2 \sigma^2}{n}\sum_{i=1}^n \Lambda_i b_i u_i \in \left[ a_1 x, a_2 x \right]
%    \right) &\leq \frac{1}{\sqrt{2\pi}\sqrt{\frac{4 \sigma^4}{n^2}\sum_{i=1}^n \Lambda_i^2 b_i^2}} \exp\left(-\frac{2 (\sqrt{y_2} - y_2)^2 x^2}{\frac{4 \sigma^4}{n^2}\sum_{i=1}^n \Lambda_i^2 b_i^2}\right) (a_2 - a_1)x \\
%    &\leq \frac{1}{2\sqrt{2 \pi}\sigma^2 \sqrt{\Lambda_n}\sqrt{\frac{1}{n} \sum_{i=1}^n b_i^2}} \exp\left( - \frac{n (\sqrt{y_2} - y_2)^2}{4\sigma^4 \mathcal{M}^2} \right) 5y_2 x.
%\end{align*}
%If $x = \frac{1}{\log n}$, then 
%\begin{equation*}
%    \mathbb{P}_{\varepsilon}\left( \frac{2 \sigma^2}{n}\sum_{i=1}^n \Lambda_i b_i u_i \right) \leq \frac{5ny_2}{2 \log n \sqrt{2 \pi}\sigma^2 \lVert F^* \rVert_n} \exp \left( - \frac{n (\sqrt{y_2} - y_2)^2}{4 \sigma^4 \mathcal{M}^2} \right).
%\end{equation*}
Second, let us consider
\begin{equation*}
    p_2 = \mathbb{P}_{\varepsilon} \left( \frac{4\sigma^2}{n}\sum_{i=1}^n \frac{\Lambda_i}{4} (u_i^2 - 1) \in \left[ a_1(1 - z), a_2(1 - z) \right] \right).
\end{equation*}
Notice that $\sum_{i=1}^n \frac{\Lambda_i}{4} u_i^2$ is distributed as generalized chi square. Let us denote its probability density function as $f$, then by using (\cite[Eq. (68)]{kotz1967series})
\iffalse
\begin{align*}
    p &= \int_{a_1 n(1 - x)/\sigma^2 + \text{tr}\Lambda}^{a_2 n(1 - x)/\sigma^2 + \text{tr}\Lambda} f(t) dt \leq \underset{t \in [a_1 n (1 - x)/ \sigma^2 + \text{tr}\Lambda, a_2 n (1 - x) / \sigma^2 + \text{tr}\Lambda]}{\max} \left[ f(t) \right](a_2-a_1)(1 - x)\frac{n}{\sigma^2}\\
    &\leq \underset{t \in [a_1 n (1 - x) / \sigma^2 + \text{tr}\Lambda, a_2 n (1 - x) / \sigma^2 + \text{tr}\Lambda]}{\max}\left[ \frac{\prod_{j=1}^n \Lambda_j^{-1/2}}{\left( n/2 - 1 \right)!}\left( \frac{t}{n} \right)^{n/2 - 1} \exp\left( \frac{t}{2 \Lambda_n} \right) \right] 5 y_2 (1 - x) \frac{n}{\sigma^2}\\
    &\leq \frac{\left( \frac{1}{\Lambda_n} \right)^{\frac{n}{2}}}{\left( \frac{n}{2} - 1 \right)!}\left( a_2 (1 - x)/\sigma^2 + \frac{\text{tr}\Lambda}{n} \right)^{\frac{n}{2} - 1} \exp \left( \frac{a_2 n (1-x)}{2 \sigma^2 \Lambda_n} + \frac{\text{tr}\Lambda}{2 \Lambda_n} \right)5y_2 (1 - x) \frac{n}{\sigma^2}\\
    &\leq \frac{\left( \frac{n}{2} \right)^n}{\left( \frac{n}{2} - 1 \right)!} \left[ \frac{3y_2 + 2 \sqrt{y_2}}{\sigma^2}\left( 1-x \right) + 1 - \lambda/n \right]^{\frac{n}{2} - 1} \exp \left(\frac{n^3 a_2 (1 - x)}{8 \sigma^2} + n^2 \text{tr}\Lambda  \right) 5 y_2 (1 - x)\frac{n}{\sigma^2} \\
    &\leq \frac{\left( n/2 \right)^n}{\left( n/2 - 1 \right)!}\left[ \frac{3 \max (y_2, \sqrt{y_2})}{\sigma^2}(1 - x) + 1 - \lambda / n \right]^{\frac{n}{2} - 1}\exp\left( \frac{3 n^3 \max(y_2, \sqrt{y_2})(1 - x)}{8 \sigma^2} + n^2 \text{tr}\Lambda \right)5 \sqrt{\max \left( y_2, \sqrt{y_2} \right)}\lambda.
\end{align*}
\fi
\begin{align*}
    f(t) &= \sum_{j=0}^{\infty} c_j^{(L)} f_{\chi_n^2}(t/\beta) j! \Gamma\left(\frac{n}{2}\right)\left[ \beta \Gamma \left( \frac{n}{2} + j \right) \right]^{-1} L_j^{(n/2 - 1)}\left( \frac{t}{2 \beta} \right)\\
    &\leq \frac{1}{\beta}f_{\chi_n^2}(t/\beta) (1 - R)^{-n/2}e^{\frac{t}{4\beta}}\left( 1 - \frac{\epsilon}{R} \right)^{-1}
\end{align*}
for any $\beta > \frac{\Lambda_1}{8}$ and $0 < \epsilon < R < 1$, where $L_j^{(n/2 - 1)}\left( \frac{t}{2\beta} \right)$ is a Laguerre polynomial and $f_{\chi_n^2}\left( t/\beta \right)$ is the p.d.f. of the $\chi_n^2$ random variable.

Recall that $f_{\chi_n^2}(t/\beta) = \frac{1}{2^{n/2} \Gamma \left( \frac{n}{2} \right)} \left( \frac{t}{\beta} \right)^{\frac{n}{2} - 1}e^{-\frac{t}{2 \beta}}$, therefore
\begin{equation*}
    f(t) \leq \frac{1}{\beta}\frac{1}{2^{n/2}\Gamma (\frac{n}{2})} \left( \frac{t}{\beta} \right)^{\frac{n}{2} - 1} e^{-\frac{t}{4\beta}}\frac{1}{(1 - R)^{n/2}(1 - \frac{\epsilon}{R})}.
\end{equation*}
One can conclude that
\begin{equation*}
    p_2 \leq \frac{1}{\beta}\frac{1}{2^{n/2}\Gamma (\frac{n}{2})}\frac{10 (t_2 + t_2^2) (1 - z)n}{(1 - R)^{n/2}(1 - \frac{\epsilon}{R})} \underset{t \in [a_1 n (1 - z) / \sigma^2 + \text{tr}(\Lambda/4), a_2 n (1 - z) / \sigma^2 + \text{tr}(\Lambda/4)]}{\max}\left[ \left( \frac{t}{\beta} \right)^{\frac{n}{2} - 1} \exp \left( - \frac{t}{4 \beta} \right)  \right].
\end{equation*}
The function $\left( \frac{t}{\beta}\right)^{\frac{n}{2} - 1} \exp \left( - \frac{t}{4 \beta} \right)$ is maximized at $t = 4 \beta \left( \frac{n}{2} - 1 \right)$, so by using $\Gamma \left( \frac{n}{2} \right) = \left( \frac{n}{2} - 1 \right)!$ and the Stirling's formula, with $\frac{\Lambda_1}{8} < \beta < 1$, one gets
\begin{align*}
    p_2 &\leq \frac{1}{\beta 2^{n/2}}\frac{10 \cdot 2^{n - 2}  e^{-\left(\frac{n}{2}  - 1\right)} \left( \frac{n}{2} - 1 \right)^{\frac{n}{2} - 1} (t_2 + t_2^2) (1 - z) n}{\sqrt{n/2 - 1} \left( \frac{n/2 - 1}{e} \right)^{n/2 - 1} \exp\left( \frac{1}{12 (n/2 - 1)} - \frac{1}{360 (n/2 - 1)^3} \right)(1 - R)^{n/2}(1 - \frac{\epsilon}{R})}\\
    &\leq \frac{10(t_2 + t_2^2)(1 - z)n2^{n-2}}{\beta 2^{n/2} \sqrt{\frac{n}{2} - 1}(1 - R)^{n/2}(1 - \frac{\epsilon}{R})}\\
    &= \frac{10(t_2 + t_2^2)(1 - z)n 2^{n/2}}{4 \beta \sqrt{\frac{n}{2} - 1} \left( 2 \left(\frac{1-R}{2}\right) \right)^{n/2}\left( 1 - \frac{\epsilon}{R} \right)}.
\end{align*}
Let us take $\epsilon = \frac{R}{2}$ and $z = 1 - \frac{\beta}{n^{1.5}}\left( \frac{1 - R}{2} \right)^{\frac{n}{2}}$. Combining the upper bounds for $p_1$ and $p_2$, we have 
\begin{align*}
    \mathbb{P}_{\varepsilon}\left( \{ R_{\lambda} \geq 2R_2\} \cap \mathcal{E} \right) &\leq \exp \left( - \frac{n \sigma^2 t_2^2 \left[1 - \frac{\beta}{n^{1.5}}\left( \frac{1 - R}{2} \right)^{n/2} \right]^2}{128 \mathcal{M}^2} \right) + \frac{5 (t_2 + t_2^2)}{ \sqrt{n \left( \frac{n}{2} - 1 \right)} }.
\end{align*}
After that, 
\begin{align*}
    \mathbb{P}_{\varepsilon} \left( \{ R_{\lambda} \geq 2R_2 \} \cap \mathcal{E}^c \right) \leq 4 \exp \left( - n t_2^2 \min \left( 1, \frac{\sigma^2}{128 \mathcal{M}^2} \right) \right).
\end{align*}
Thus, applying $\frac{\Lambda_1}{8} < \beta < 1$ and combining all together, 
\begin{align*}
    \mathbb{P}_{\varepsilon}\left( \lambda_1^{\tau} \leq \lambda \right) &\leq 2 \underbrace{\left[\mathbb{P}_{\varepsilon}\left( R_{\lambda} - \mathbb{E}_{\varepsilon}R_{\lambda} < -\frac{t_1}{2} \right) + \mathbb{P}_{\varepsilon}\left( R_2 - \mathbb{E}_{\varepsilon}R_2 > \frac{t_1}{4} \right) \right] }_{\mathcal{C}} \\ &+ 5 \exp \left( - \min \left( 1, \frac{\sigma^2}{128 \mathcal{M}^2} \right) n t_2^2 \left[1 - \frac{\beta}{n^{1.5}}\left( \frac{1 - R}{2} \right)^{n/2} \right]^2 \right) + \frac{5 (t_2 + t_2^2)}{\sqrt{n \left( \frac{n}{2} - 1 \right)}} \\
    &\leq 2\mathcal{C} + 5\exp \left( - \min \left(1, \frac{\sigma^2}{128\mathcal{M}^2} \right) n t_2^2 \left[ 1 - \frac{1}{n^{1.5} 2^{n/2}} \right]^2 \right) + \frac{5 (t_2 + t_2^2)}{\sqrt{n \left( \frac{n}{2} - 1 \right)}},
\end{align*}
where $t_1 = B^2(\lambda) - V(\lambda) + \sigma^2 - 2\mathbb{E}_{\varepsilon}R_2$ and $\sigma^2 t_2 =  V(\lambda) - B^2(\lambda) - \sigma^2 + 2\mathbb{E}_{\varepsilon}R_2$.
Since 
\begin{equation*}
R_{\lambda} - \mathbb{E}_{\varepsilon}R_{\lambda} = \lVert (I_n - A_{\lambda[k]}) \varepsilon \rVert_n^2 - \frac{\sigma^2}{n}(n - \textnormal{tr}(A_{\lambda[k]})) + 2 \langle (I_n - A_{\lambda[k]})F^*, (I_n - A_{\lambda[k]})\varepsilon \rangle_n,    
\end{equation*}
we have 
\begin{align*}
    \mathcal{C} &\leq \mathbb{P}_{\varepsilon}\left( \lVert M_{\lambda[k]}\varepsilon \rVert_n^2 - \frac{\sigma^2}{n}(n - \text{tr}(A_{\lambda[k]})) < - \frac{t_1}{4} \right) + \mathbb{P}_{\varepsilon}\left( 2 \langle M_{\lambda[k]}F^*, M_{\lambda[k]}\varepsilon \rangle_n < - \frac{t_1}{4} \right)\\
    &+ \mathbb{P}_{\varepsilon} \left( \lVert M_2 \varepsilon \rVert_n^2 - \frac{\sigma^2}{n}(n - \text{tr}(A_2)) > \frac{t_1}{8} \right) + \mathbb{P}_{\varepsilon} \left(2 \langle M_2 F^*, M_2 \varepsilon \rangle_n > \frac{t_1}{8} \right),
    %\mathcal{D} &\leq \mathbb{P}_{\varepsilon} \left( \lVert M_{\lambda[k]}\varepsilon \rVert_n^2 - \frac{\sigma^2}{n}(n - \text{tr}(A_{\lambda[k]})) \geq \frac{y_2}{2} \right) + \mathbb{P}_{\varepsilon}\left( 2 \langle M_{\lambda[k]}F^*, M_{\lambda[k]}\varepsilon \rangle_n \geq \frac{y_2}{2}  \right),
\end{align*}
where the matrix $M_{\lambda[k]} = I_n - A_{\lambda[k]}$.
Finally, 
\begin{align*}
    \mathbb{P}_{\varepsilon}\left( 2 \langle M_{\lambda[k]}F^*, M_{\lambda[k]}\varepsilon \rangle_n < -\frac{t_1}{4} \right) \leq \text{exp}\left[ -\frac{n^2 t_1^2}{128 \sigma^2 \lVert M_{\lambda[k]}^\top M_{\lambda[k]}F^* \rVert^2}  \right] &\leq \text{exp}\left[ - \frac{n t_1^2}{128 c \sigma^2 \lVert f^* \rVert_n^2} \right], \\
    \mathbb{P}_{\varepsilon}\left( \lVert M_{\lambda[k]} \varepsilon \rVert_n^2 - \frac{\sigma^2}{n} (n - \text{tr}A_{\lambda[k]}) < -\frac{t_1}{4} \right) &\leq \text{exp}\left[ -c \text{min}\left( \frac{nt_1^2}{16\sigma^4}, \frac{n t_1}{4 \sigma^2} \right) \right],
\end{align*}
where constant $c$ is numeric. The same inequalities (with different numeric constants) hold true for the matrix $M_2$.

\end{proof}

\begin{lemma} \label{deviation_bias_lemma}
Under Assumption \ref{assumption_boundness}, recall the definitions of $\lambda_1^{\tau}$ and $\lambda_2^*$ from Eq. (\ref{stopping_times_tilde}) and Lemma \ref{left_deviation_ineq}. Then for any $t \geq 0$ and $t_2 \geq \frac{4 \mathcal{M}^2}{\sigma^2}$,
\begin{equation}
    B^2(\lambda_1^{\tau}) < 2  V(\lambda_2^*) + 2\mathbb{E}_{\varepsilon}R_2 - \sigma^2 + 2t  
\end{equation}
with probability at least $1 - 12 \exp \left(- c n \min \left( \frac{t^2}{\sigma^2}, \frac{t^2}{\sigma^4}, \frac{t}{\sigma^2} \right) \right) - 5 \exp \left( - \min \left(1, \frac{\sigma^2}{128 \mathcal{M}^2} \right) n t_2^2 \left( 1 - \frac{1}{n^{1.5}2^{n/2}} \right)^2 \right) - \frac{5 (t_2 + t_2^2)}{ \sqrt{n \left( \frac{n}{2} - 1 \right)} }$, where  constant $c$ depends on  $\mathcal{M}$ and $n \geq 3$.
\end{lemma}

\begin{proof}
Consider the event $\mathcal{E}(\lambda)$ from Lemma \ref{left_deviation_ineq} for each $\lambda \in \mathcal{K}_{B}$. Then,
\begin{align*}
    \mathbb{P}_{\varepsilon} \left( \mathcal{E}(\lambda) \right) &\leq 8 \exp \left( - c n \min \left( \frac{v_1^2}{\sigma^4}, \frac{v_1^2}{\sigma^2}, \frac{v_1}{\sigma^2} \right) \right) + 5 \exp \left( - \min \left(1, \frac{\sigma^2}{128 \mathcal{M}^2} \right) n v_2^2 \left( 1 - \frac{1}{n^{1.5}2^{n/2}} \right)^2 \right)  \\ &+ \frac{5 (v_2 + v_2^2)}{\sqrt{n \left( \frac{n}{2} - 1 \right)}},
\end{align*}
for $v_1 = B^2(\lambda) - V(\lambda) + \sigma^2 - 2\mathbb{E}_{\varepsilon}R_2$ and $\sigma^2 v_2 = V(\lambda) - B^2(\lambda) + 2\mathbb{E}_{\varepsilon}R_2 - \sigma^2$ .

In what follows, two cases are distinguished.

\textbf{Case 1:} \quad If $\lambda_1^{\tau} > \lambda_2^*$, then, by definition of $\lambda_2^*$, Corollary \ref{corollary_for_variance} and monotonicity of the variance term, 
\begin{equation} \label{case1}
    B^2(\lambda_1^{\tau}) < V(\lambda_1^{\tau}) + 2\mathbb{E}_{\varepsilon}R_2 - \sigma^2 \leq 2 V(\lambda_2^*) + 2\mathbb{E}_{\varepsilon}R_2 - \sigma^2 + (t - \Delta t)
\end{equation}
with probability at least $1 - 4 \exp \left( - c n \min \left( \frac{t - \Delta t}{\sigma^2}, \frac{(t - \Delta t)^2}{\sigma^4}, \frac{(t - \Delta t)^2}{\sigma^2} \right) \right), \ \forall t > 0$.

\textbf{Case 2:} \quad If $\lambda_1^{\tau} \leq \lambda_2^*$, then take $t - \Delta t$ from Ineq. (\ref{case1}) and define $\lambda^{\star \star}$ as in Eq. (\ref{k_two_star}).
%If no such point $\lambda^{\star \star}$ exists, then for any $\lambda \leq \lambda_2^*$ one has $B^2(\lambda) < V(\lambda) + 2R_2 - \sigma^2 + \widetilde{c}_2 \sqrt{\frac{\log n}{n}}$. In particular, it holds true for $\lambda_1^{\tau}$, which implies that
%\begin{equation*}
%    B^2(\lambda_1^{\tau}) < V(\lambda_1^{\tau}) + 2R_2 - \sigma^2 + \widetilde{c}_2 \sqrt{\frac{\log n}{n}} \leq 2 V(\lambda_2^*) + 2R_2 - \sigma^2 + \widetilde{c}_2 \sqrt{\frac{\log n}{n}} + y - \Delta y
%\end{equation*}
%with probability at least $1 - 2 \exp \left( -c_d n \min \left( \frac{y - \Delta y}{\sigma^2}, \frac{(y - \Delta y)^2}{\sigma^4}, \frac{(y - \Delta y)^2}{\sigma^2 R_2} \right) \right)$, due to Corollary \ref{corollary_for_variance}.

%\vspace{0.3cm}

Notice that $\lambda^{\star \star} \in \mathcal{K}_{B}$ by its definition. Therefore, due to Lemma \ref{left_deviation_ineq}, under the event $\mathcal{E}^{c}(\lambda^{\star \star})$, $\lambda_1^{\tau} > \lambda^{\star \star}$, and since $B^2(\lambda) \leq \lVert I_n - A_{\lambda} \rVert_2^2 \lVert f^* \rVert_n^2 \leq 4 \mathcal{M}^2$ for any $\lambda \in \{1, \ldots, n \}$,
\begin{equation*}
    B^2(\lambda_1^{\tau}) < V(\lambda_1^{\tau}) + 2\mathbb{E}_{\varepsilon}R_2 - \sigma^2 + t_1 \leq 2 V(\lambda_2^*) + 2\mathbb{E}_{\varepsilon}R_2 - \sigma^2  + (t - \Delta t) + t_1
\end{equation*}
with probability at least $1 - 4 \exp \left(- c n \min \left( \frac{(t - \Delta t)^2}{\sigma^2}, \frac{(t - \Delta t)^2}{\sigma^4}, \frac{t - \Delta t}{\sigma^2} \right) \right) \\ - 8 \exp \left( - c n \min \left( \frac{t_1^2}{\sigma^4}, \frac{t_1^2}{\sigma^2}, \frac{t_1}{\sigma^2} \right) \right)  - 5 \exp \left( - \min \left( 1, \sigma^2 / 128 \mathcal{M}^2 \right)  n t_2^2 \left( 1 - \frac{1}{n^{1.5}2^{n/2}} \right)^2 \right) - \frac{5 (t_2 + t_2^2)}{ \sqrt{n \left( \frac{n}{2} - 1 \right)} }$.

Combining \textbf{Case 1} and \textbf{Case 2} together,
\begin{equation}
    B^2(\lambda_1^{\tau}) < 2 V(\lambda_2^*) + 2\mathbb{E}_{\varepsilon}R_2 - \sigma^2 + (t - \Delta t) + t_1
\end{equation}
with probability at least $1 - 4 \exp \left(- c n \min \left( \frac{(t - \Delta t)^2}{\sigma^2}, \frac{(t - \Delta t)^2}{\sigma^4}, \frac{t - \Delta t}{\sigma^2} \right) \right) \\ - 8 \exp \left( - c n \min \left( \frac{t_1^2}{\sigma^4}, \frac{t_1^2}{\sigma^2}, \frac{t_1}{\sigma^2} \right) \right) - 5 \exp \left( - \min \left(1, \sigma^2 / 128 \mathcal{M}^2 \right) n t_2^2 \left( 1 - \frac{1}{n^{1.5}2^{n/2}} \right)^2 \right) - \frac{5 (t_2 + t_2^2)}{ \sqrt{n \left( \frac{n}{2} - 1 \right)} }$. The claim is proved.

\end{proof}

\section{Proof of Theorem \ref{main_th}} \label{proof_of_main_th}

Define $v(\lambda) \coloneqq \lVert A_{\lambda[k]} \varepsilon \rVert_n^2$, where $\lambda[k] = \textnormal{tr}(A_k) = n/k$ (see Section \ref{main_quantities} for the definitions related to the notation $\lambda$). Then, due to the inequality $(a + b)^2 \leq 2 a^2 + 2 b^2$ for any $a, b \geq 0$, Lemma \ref{deviation_bias_lemma}, Corollary \ref{corollary_for_variance}, and the control of the stochastic term in Appendix \ref{var_control}, for $\lambda_1
^{\tau}[k]$ and $\lambda_2^*[k]$ from Section \ref{main_quantities}, one obtains
\begin{align*}
    \lVert f^{\lambda_1^{\tau}[k]} - f^* \rVert_n^2 &= \lVert (I_n - A_{\lambda_1^{\tau}[k]})F^* \rVert_n^2 + \lVert A_{\lambda_1^{\tau}[k]}\varepsilon \rVert_n^2 + 2 \langle A_{\lambda_1^{\tau}[k]}\varepsilon, (I_n - A_{\lambda_1^{\tau}[k]})F^* \rangle_n \\
    &\leq 2 B^2(\lambda_1^{\tau}[k]) + 2 v(\lambda_1^{\tau}[k]) \\
    &\leq 4 V(\lambda_2^*[k]) + 2(2\mathbb{E}_{\varepsilon}R_2 - \sigma^2) + 2 V(\lambda_1^{\tau}[k]) + \widetilde{c}_1 \sqrt{\frac{\log n}{n}} + 6 t\\
    &\leq 8V(\lambda_2^*[k]) + 8t + 2(2\mathbb{E}_{\varepsilon} R_2 - \sigma^2) + \widetilde{c}_1 \sqrt{\frac{\log n}{n}} 
\end{align*}
with probability at least $1 - 18 \exp \left( -c_1 n \min \left( \frac{t^2}{\sigma^2}, \frac{t^2}{\sigma^4}, \frac{t}{\sigma^2} \right) \right) - 5 \exp \left( - \min \left( 1, \sigma^2 / 128 \mathcal{M}^2 \right)  n t_2^2 (1 - \frac{1}{n^{1.5}2^{n/2}})^2 \right) - \frac{5 (t_2 + t_2^2)}{ \sqrt{n \left( \frac{n}{2} - 1 \right)}}$, where $t \geq 0$ is arbitrary and $t_2 \geq \frac{4 \mathcal{M}^2}{\sigma^2}$.

In addition to that, if $\lambda_2^*$ from Eq. (\ref{stopping_times_tilde}) exists, then $V(\lambda_2^*[k]) \leq 1/2 \left[ \textnormal{MSE}(\lambda_2^*[k]) + \sigma^2 - 2\mathbb{E}_{\varepsilon}R_2 \right]$, and
\begin{equation}\label{ineq_for_corollary1}
    \lVert f^{\lambda_1^{\tau}[k]} - f^* \rVert_n^2 \leq 4 \textnormal{MSE}(\lambda_2^*[k]) - 2(2\mathbb{E}_{\varepsilon}R_2 - \sigma^2) + 8t + \widetilde{c}_1 \sqrt{\frac{\log n}{n}}
\end{equation}
with the same probability.

Define $u_1 \coloneqq c_1 n \min \left( \frac{t^2}{\sigma^2}, \frac{t^2}{\sigma^4}, \frac{t}{\sigma^2} \right)$, then one concludes that
\begin{equation} \label{ref_mse_high_prob_bound}
    \lVert f^{\lambda_1^{\tau}[k]} - f^* \rVert_n^2 \leq 4\textnormal{MSE}(\lambda_2^*[k]) + C \left( \sqrt{\frac{u_1 \sigma^2 }{n}} + \frac{u_1 \sigma^2}{n} + \sqrt{\frac{u_1 \sigma^4}{n}} \right) + \widetilde{c}_1 \sqrt{\frac{\log n}{n}} + 2 (\sigma^2 - 2\mathbb{E}_{\varepsilon}R_2)
\end{equation}
with probability at least $1 - 18 \exp (-u_1) - 5 \exp \left(- \min \left( 1, \frac{\sigma^2}{128 \mathcal{M}^2} \right) n t_2^2 \left(1 - \frac{1}{n^{1.5}2^{n/2}}\right)^2\right) - \frac{5 (t_2 + t_2^2)}{ \sqrt{n \left( \frac{n}{2} - 1 \right)}}$, where $u_1 \geq 0$ and $t_2 \geq \frac{4 \mathcal{M}^2}{\sigma^2}$, constant $\widetilde{c}_1$ may depend on $\sigma^2$, constant $C$ may depend on $\mathcal{M}, \sigma^2$.

\section{Proof of Corollary \ref{main_corollary}} \label{section_proof_corollary}

\subsection{Bounding the squared bias in the empirical norm and some definitions}

We start the proof with the argument that will help us transfer the result of Theorem \ref{main_th} in the empirical $\lVert \cdot \rVert_n^2$ norm to the $L_2(\mathbb{P}_X)$ norm $\lVert \cdot \rVert_2^2$. In order to do that, since there was no restriction on the fixed design of covariates $\{ x_1, \ldots, x_n \}$, we assume that the covariates lie in the regularly spaced lattice in $[0, 1]^d$. Then for any $f^* \in \mathcal{F}_{\textnormal{Lip}}(L)$, 
\begin{align*}
    B^2(k) = \lVert (I_n - A_k)F^* \rVert_n^2 &= \frac{1}{n}\sum_{i=1}^n \left[ f^*(x_i) - \frac{1}{k}\sum_{j \in \mathcal{N}_k(x_i)}f^*(x_j) \right]^2 \\
    &\leq \frac{1}{n}\sum_{i=1}^n \frac{1}{k^2} \left[ \sum_{j \in \mathcal{N}_k (x_i)} L \lVert x_i - x_j \rVert \right]^2\\
    &\leq (\sqrt{d}L)^2\left( \frac{k}{n} \right)^{2/d},
\end{align*}
Further, define 
\begin{equation} \label{bias_variance_df}
  k^{\textnormal{b/v}} \coloneqq \ceil*{\left( \frac{\sigma^2}{d L^2} \right)^{d/(2+d)} n^{\frac{2}{2 + d}}}  
\end{equation}
where the variance crosses the upper bound on squared bias.
%First of all, without loss of generality, we will assume that $2R_2 - \sigma^2 \geq 0$. If this is not true (i.e. $\sigma^2 - 2 R_2 > 0$), the logic of proof below will be similar.
%
Define 
\begin{equation} \label{delta_k_epsilon}
    \delta_{k, \epsilon} \coloneqq \left( 1 - \epsilon \right) \left[ \underset{f^* \in \mathcal{F}_{\textnormal{Lip}}(L)}{\sup} \left[ V(k) \right] + \underset{f^* \in \mathcal{F}_{\textnormal{Lip}}(L)}{\sup}[2B^2(2)] \right], \quad 0 < \epsilon < 1,
\end{equation}
for each $k = 1, \ldots, n$. After that, consider Lemma \ref{lemma_for_variance} and  define 
\begin{equation} \label{k_tilde_epsilon}
    \widetilde{k}_{\epsilon} \coloneqq \inf \left\{ k \in \{1, \ldots, n \} \mid B^2(k) \geq \epsilon \left[ \underset{f^* \in \mathcal{F}_{\textnormal{Lip}}(L)}{\sup} \left[ V(k) \right] + \underset{f^* \in \mathcal{F}_{\textnormal{Lip}}(L)}{\sup}[2B^2(2)] \right] \right\}.
\end{equation}
Define 
\begin{equation}
    \widetilde{k} \coloneqq \{ k \in \{1, \ldots, n \} \mid B^2(k) \geq V(k) \}.
\end{equation}
Notice that if one takes $\epsilon = \frac{V(k)}{\underset{f^* \in \mathcal{F}_{\textnormal{Lip}}(L)}{\sup}\left[ V(k) \right] + \underset{f^* \in \mathcal{F}_{\textnormal{Lip}}(L)}{\sup}[2B^2(2)]}$, then $\widetilde{k}_{\varepsilon}=\widetilde{k}$; if one takes $\epsilon \to 0, \ \widetilde{k}_{\epsilon} \to 1$. Moreover, $k^{\textnormal{b/v}}_{\epsilon} \leq \widetilde{k}_{\epsilon}$, where
\begin{equation}
    k^{\textnormal{b/v}}_{\epsilon} \coloneqq \ceil*{\left( \frac{\epsilon \sigma^2}{dL^2} \right)^{\frac{d}{2+d}} n^{\frac{2}{2+d}}}.
\end{equation}
\subsection{Change-of-norm argument}
%In this subsection, we will assume that $V(\widetilde{k}-1) \geq 2B^2(2)$. If it's not the case, the proof will be similar, with some changes in the definition of $\delta_{k, \epsilon}$ in Eq. (\ref{delta_k_epsilon}).

For any $x \in \mathcal{X} \subseteq \mathbb{R}^d$, we will consider $\mid f^k (x) - f^* (x) \mid \leq \mid f^k(x) \mid + \mid f^*(x) \mid$. Recall that 
\begin{equation*}
    f^k(x) = \frac{1}{k}\sum_{j \in \mathcal{N}_k (x)}y_j,
\end{equation*}
where $\mathcal{N}_k(x)$ is the set of $k$ nearest neighbors of $x$ among $\{ x_1, \ldots, x_n \}$. 
We have $y_j \mid x_j \overset{\textnormal{i.i.d.}}{\sim} \mathcal{N}(f^*(x_j), \sigma^2)$, by using Lemma \ref{lemma_for_variance}, one has $k^{\tau} \geq \widetilde{k}_{\epsilon} - 1$ with high probability. For any $\widetilde{t} > 0$, 
\begin{align*}
    \mathbb{P}_{\varepsilon} \left( \underset{\widetilde{k}_{\epsilon} - 1 \leq k \leq n}{\max} \mid \frac{1}{k}\sum_{j \in \mathcal{N}_k(x)}\left( y_j - f^*(x_j)\right) \mid \geq \widetilde{t} \right) &\leq \underbrace{\mathbb{P}_{\varepsilon}\left( \underset{\widetilde{k}_{\epsilon} \leq k \leq n}{\max} \mid \frac{1}{k}\sum_{j \in \mathcal{N}_k(x)}\left( y_j - f^*(x_j)\right) \mid \geq \widetilde{\frac{t}{2}} \right)}_{p_1} \\
    &+ \underbrace{\mathbb{P}_{\varepsilon}\left( \mid \frac{1}{\widetilde{k}_{\epsilon}-1}\sum_{j \in \mathcal{N}_{\widetilde{k}_{\epsilon} - 1}(x)} \left( y_j - f^*(x_j) \right) \mid \geq \frac{\widetilde{t}}{2} \right)}_{p_2}.
\end{align*}
As for the first probability, since $n \geq \widetilde{k}_{\epsilon} \geq 2$,
\begin{align*}
    p_1 = \mathbb{P}_{\varepsilon} \left( \underset{\widetilde{k}_{\epsilon} \leq k \leq n}{\max} \mid \frac{1}{k}\sum_{j \in \mathcal{N}_k(x)}\left( y_j - f^*(x_j)\right) \mid \geq \frac{\widetilde{t}}{2} \right) &\leq 2 \cdot \sum_{k = \widetilde{k}_{\epsilon}}^n  \exp \left( - \frac{k \widetilde{t}^2}{8 \sigma^2} \right)\\
    &= \frac{2 e^{-\frac{\widetilde{t}^2}{8\sigma^2}}}{1 - e^{- \frac{\widetilde{t}^2}{8\sigma^2}}} \exp \left( - \frac{(\widetilde{k}_{\epsilon} - 1) \widetilde{t}^2}{8 \sigma^2} \right) \left[ 1  - e^{\frac{-\widetilde{t}^2}{8\sigma^2}(n + 1 - \widetilde{k}_{\epsilon})} \right]\\
    &\leq \frac{2 e^{-\frac{\widetilde{t}^2}{8\sigma^2}}}{1 - e^{-\frac{\widetilde{t}^2}{8\sigma^2}}}\exp \left(-\frac{(\widetilde{k}_\epsilon - 1)}{8\sigma^2} \right).
\end{align*}
Besides that,
\begin{equation*}
    p_2 \leq 2 \exp \left( - \frac{(\widetilde{k}_{\epsilon}-1)\widetilde{t}^2}{8\sigma^2}\right).
\end{equation*}
It implies that for $\widetilde{t} = 4\sigma \left( \frac{dL^2}{\sigma^2} \right)^{\frac{d}{2(2+d)}}$ and by using Lemma \ref{variance_aux_lemma},
\begin{equation*}
    \frac{1}{k^{\tau}}\sum_{j \in \mathcal{N}_{k^\tau}(x)}y_j \leq \frac{1}{k^\tau}\sum_{j \in \mathcal{N}_{k^\tau}(x)}f^*(x_j) + 4\sigma \left( \frac{dL^2}{\sigma^2} \right)^{\frac{d}{2(2+d)}}
\end{equation*}
with probability at least $1 - \left[1 + \left(1 - e^{-2\left(\frac{dL^2}{\sigma^2} \right)^{\frac{d}{2+d}}}\right)^{-1}\right]e^{-\frac{ \sigma^2\left( \frac{dL^2}{\sigma^2} \right)^{\frac{d}{2 + d}}}{V(\widetilde{k}_{\epsilon})}} \\- 4\exp\left(-c n \min \left( \frac{\delta_{\widetilde{k}_{\epsilon} - 1, \epsilon}^2}{\sigma^4}, \frac{\delta_{\widetilde{k}_{\epsilon} - 1, \epsilon}^2}{\sigma^2}, \frac{\delta_{\widetilde{k}_{\epsilon} - 1, \epsilon}}{\sigma^2} \right) \right)$.
Let us bound these probabilities. First, 
\begin{equation*}
    \exp \left( - \frac{\sigma^2 \left( \frac{d L^2}{\sigma^2} \right)}{V\left( \widetilde{k}_{\epsilon} \right)} \right) \leq \exp \left( - \left( \frac{d L^2}{\sigma^2} \right)^{\frac{d}{d+2}} \ceil*{\left( \frac{\epsilon \sigma^2}{d L^2} \right)^{\frac{d}{d+2}} n^{\frac{2}{d+2}} } \right) \leq \exp \left( - \epsilon^{\frac{d}{d+2}} n^{\frac{2}{d+2}} \right).
\end{equation*}
Second, set $\epsilon = \min \left[ 0.7, \max \left(0.5, \frac{0.5 \underset{f^* \in \mathcal{F}_{\textnormal{Lip}}(L)}{\sup}\left[ \textnormal{MSE}(k - 1) \right]}{\underset{f^* \in \mathcal{F}_{\textnormal{Lip}}(L)}{\sup}[2B^2(2)]} \right) \right]$ and assume w.l.o.g. that $\underset{f^* \in \mathcal{F}_{\textnormal{Lip}}(L)}{\sup} \left[ \textnormal{MSE} \left( \widetilde{k}_{\epsilon} - 1 \right)\right] \geq \underset{f^* \in \mathcal{F}_{\textnormal{Lip}}(L)}{\sup}[2B^2(2)]$. If the latter does not hold, the proof will be similar by using $\delta_{\widetilde{k}_{\epsilon} - 1, \epsilon} \geq \left( 1-\epsilon \right)\underset{f^* \in \mathcal{F}_{\textnormal{Lip}}(L)}{\sup}[2B^2(2)] > 0.5\cdot \underset{f^* \in \mathcal{F}_{\textnormal{Lip}}(L)}{\sup}\left[ \textnormal{MSE}\left(\widetilde{k}_{\epsilon}-1\right)\right]$. From the definition of $\widetilde{k}_\epsilon$ in Eq. (\ref{k_tilde_epsilon}), notice that for any $f^* \in \mathcal{F}_{\textnormal{Lip}}(L)$, 
\begin{equation*}
    \left( 1 - \epsilon \right)V\left( \widetilde{k}_{\varepsilon} - 1 \right) > \frac{1 - \epsilon}{1 + \epsilon}\left[ \textnormal{MSE}(\widetilde{k}_{\epsilon} - 1) - 2\epsilon B^2(2) \right].
\end{equation*}
It implies that
\begin{align}
    \underset{f^* \in \mathcal{F}_{\textnormal{Lip}}(L)}{\sup}\left[ V(\widetilde{k}_{\epsilon} - 1) \right] &> \underset{f^* \in \mathcal{F}_{\textnormal{Lip}}(L)}{\sup}\left[ \frac{1-\epsilon}{1+\epsilon}\left( \textnormal{MSE}(\widetilde{k}_\epsilon - 1) - 2\epsilon B^2(2) \right) \right]\\ 
    &\geq \frac{9}{170}\cdot \underset{f^* \in \mathcal{F}_{\textnormal{Lip}}(L)}{\sup}\left[ \textnormal{MSE}(\widetilde{k}_\epsilon - 1) \right].
\end{align}
Therefore, $\delta_{\widetilde{k}_{\epsilon} - 1, \epsilon} \geq (1 - \epsilon)\underset{f^* \in \mathcal{F}_{\textnormal{Lip}}(L)}{\sup}\left[V(\widetilde{k}_{\epsilon} - 1) \right] > \frac{9}{170}\underset{f^* \in \mathcal{F}_{\textnormal{Lip}}(L)}{\sup}\left[ \textnormal{MSE}\left( \widetilde{k}_{\epsilon} - 1 \right) \right] \geq c L^{\frac{2d}{d+2}}\left( \frac{\sigma^2}{n} \right)^{\frac{2}{d+2}}$ due to the fact that $\underset{f^* \in \mathcal{F}_{\textnormal{Lip}}(L)}{\sup}\left[ \textnormal{MSE}(k) \right] \geq c_l L^{\frac{2d}{d+2}}\left( \frac{\sigma^2}{n} \right)^{\frac{2}{d+2}}$ for any $k = 1, \ldots, n$ (cf. Lemma \ref{minimax_bound_emp_norm}), thus
\begin{align*}
    c n \frac{\delta_{\widetilde{k}_{\epsilon}-1, \epsilon}}{\sigma^2} &> \widetilde{c} n^{\frac{d}{d+2}},\\
    c n \frac{\delta_{\widetilde{k}_{\epsilon} - 1, \epsilon}^2}{\sigma^4} &> \widetilde{c}n^{\frac{d-2}{d+2}},\\
    c n \frac{\delta_{\widetilde{k}_{\epsilon}-1, \epsilon}^2}{\sigma^2} &> \widetilde{c}n^{\frac{d-2}{d+2}},
\end{align*}
where positive constant $\widetilde{c}$ depends on $d, L, \sigma^2$. Therefore, for any $x \in \mathcal{X}$,
\begin{equation}
    \mid f^{k^\tau}(x) - f^* (x) \mid \leq 6 \max \left(L, \sigma \left( \frac{dL^2}{\sigma^2} \right)^{\frac{d}{2(2 + d)}} \right)
\end{equation}
with high probability, and we can apply Lemma \ref{hoeffding_concentration} that gives us
\begin{equation} \label{ineq_for_the_norms}
    \lVert f^{k^\tau} - f^* \rVert_2^2 \leq \lVert f^{k^\tau} - f^* \rVert_n^2 + \epsilon 
\end{equation}
with probability at least $1 - \exp \left( -  \frac{n\epsilon^2}{2592 \max^4 \left(L, \sigma \left( \frac{dL^2}{\sigma^2} \right)^{\frac{d}{2(2+d)}} \right)}\right) -  \left[ 1 + \left(1 - e^{-4\left( \frac{dL^2}{\sigma^2} \right)^{\frac{d}{2+d}}}\right)^{-1}\right] \exp\left(-c_1 n^{\frac{2}{d+2}}\right) - 4\exp \left(- \widetilde{c}  n^{\frac{d-2}{d+2}} \right)$.

\subsection{Bounding the risk error in the population norm}

Recall that $\widetilde{k} = \inf \left\{ k \in \{ 1, \dots, n \} \mid B^2(k) \geq V(k) \right\}$, then $V(k^*) \leq V(\widetilde{k}) \leq V(k^{\textnormal{b/v}})$. 

Let us denote $u_2 = \min \left(1, \frac{\sigma^2}{128 \mathcal{M}^2} \right) n t_2^2 \left(1 - \frac{1}{n^{1.5}2^{n/2}}\right)^2$ from Ineq. (\ref{ineq_for_corollary1}), then 
\begin{equation*}
    t_2^2 = \frac{ u_2}{\min \left(1, \sigma^2 / 128 \mathcal{M}^2 \right) n \left(1 - \frac{1}{n^{1.5}2^{n/2}}\right)^2}.
\end{equation*}
Combining it with $2 B^2(2)  \leq \frac{c}{n^{2/d}} \leq \frac{c}{n^{2/(2+d)}}$ where positive $c$ depends on $d$ and $L$, Ineq. (\ref{ineq_for_the_norms}) for $\epsilon = \max^2 \left(L, \sigma \left( \frac{dL^2}{\sigma^2} \right)^{\frac{d}{2(2+d)}} \right) \sqrt{2592\frac{\log n}{n}}$ and Ineq. (\ref{ref_mse_high_prob_bound}) such that $u_1 = \log n$ and $u_2 = n \left(1 - \frac{1}{n^{1.5}2^{n/2}}\right)^2 \frac{\mathcal{M}^2}{\sigma^2}$, gives $t_2^2 \geq \frac{16 \mathcal{M}^4}{\sigma^4}$, and
\tiny
\begin{align} \label{to_exp_from_proba}
    \begin{split} 
     \mathbb{E} \lVert f^{k^{\tau}} - f^* \rVert_2^2 &= \mathbb{E} \left[ \lVert f^{k^{\tau}} - f^* \rVert_2^2 \mathbb{I}\left\{ \lVert f^{k^{\tau}} - f^* \rVert_2^2 \leq 8 V(k^{\textnormal{b/v}}) + C_1 \left(\frac{\sqrt{\log(n) \sigma^4}}{\sqrt{n}} + \frac{\sqrt{\log(n) \sigma^2  }}{\sqrt{n}} + \frac{\log(n) \sigma^2}{n} \right) + \frac{c}{n^{2/(d+2)}} \right\} \right] \\
    &+ \mathbb{E} \left[ \lVert f^{k^{\tau}} - f^* \rVert_2^2 \mathbb{I}\left\{ \lVert f^{k^{\tau}} - f^* \rVert_2^2 > 8 V(k^{\textnormal{b/v}}) + C_1 \left(\frac{\sqrt{\log(n) \sigma^4}}{\sqrt{n}} + \frac{\sqrt{\log(n) \sigma^2 }}{\sqrt{n}} + \frac{\log(n) \sigma^2}{n} \right) + \frac{c}{n^{2/(d+2)}} \right\}  \right].
    \end{split}
\end{align}
\small
After that, due to Lemma \ref{operator_norm_lemma} from the supplementary material, $\lVert I_n - A_k \rVert_2 \leq 2$ for any $k \in \{1, \ldots, n\}$,  and $| f^*(x_i) | \leq L$ for $i \in \{1, \ldots, n\}$ due to the Lipschitz condition (\ref{lipschitz}), which implies that,
\begin{align*}
    \lVert f^{k^{\tau}} - f^* \rVert_2^2 &= \mathbb{E}_{X}\left( \frac{1}{k^{\tau}}\sum_{j \in \mathcal{N}_{k^{\tau}}(X)}y_j - f^*(X) \right)^2 \\
    &= \int_{\mathcal{X}}\left( \frac{1}{k^{\tau}}\sum_{j \in \mathcal{N}_{k^{\tau}}(x)}y_j - f^*(x) \right)^2 d \mathbb{P}_X(x)\\
    &\leq \int_{\mathcal{X}}\frac{1}{(k^{\tau})^2}\sum_{j \in \mathcal{N}_{k^{\tau}}(x)}\left[ 2 \mid f^*(x_j) - f^*(x) \mid^2 + 2 \varepsilon_j^2 \right] d \mathbb{P}_X(x)\\
    &\leq \int_{\mathcal{X}}\sum_{j \in \mathcal{N}_{k^{\tau}}(x)}\frac{1}{(k^{\tau})^2}\left[ 8L^2 + 2 \varepsilon_j^2 \right]d \mathbb{P}_X (x)\\
    &= I_1 + I_2,
\end{align*}
where 
\begin{align*}
    I_1 &\leq 8L^2,\\
    I_2 &= \int_{\mathcal{X}}\sum_{j \in \mathcal{N}_{k^{\tau}}(x)}\frac{2\varepsilon_j^2}{(k^{\tau})^2}d \mathbb{P}_X (x).
\end{align*}
First, consider the random variable $S = \sum_{j \in \mathcal{N}_{k^{\tau}}(x)}\frac{\varepsilon_j^2}{(k^{\tau})^2}$, then
\begin{align*} 
    \mathbb{E}S &= \sum_{k=1}^n \mathbb{E}\left[ S \mid k^{\tau} = k \right] \mathbb{P}\left[ k^{\tau} = k \right] \\
    &= \sum_{k=1}^n \frac{\sigma^2}{k}\mathbb{P}\left[ k^{\tau} = k \right]\\
    &\leq \sigma^2.
\end{align*}
Thus, $\mathbb{E}\lVert f^{k^{\tau}} - f^* \rVert_2^2 \leq 2 \max \left( \sigma^2, 8L^2 \right)$. Applying the Cauchy-Schwarz inequality, gives 
\begin{equation} \label{aux_inequal}
    \mathbb{E}\lVert f^{k^{\tau}} - f^* \rVert_2^4 \leq \sqrt{\mathbb{E}\lVert f^{k^{\tau}} - f^* \rVert_2^2 \cdot \mathbb{E}\lVert f^{k^{\tau}} - f^* \rVert_2^2} \leq 2\max \left( \sigma^2, 8L^2 \right).
\end{equation}
From Ineq. (\ref{to_exp_from_proba}) and the Cauchy-Schwarz inequality again, it comes
%\if
%\begin{align*}
%    &\mathbb{E}_{\varepsilon} \lVert f^{k^{\tau}} - f^* \rVert_n^2 \leq 8V(k^{\text{b/v}}) + C_1 \frac{\sqrt{\log n}}{\sqrt{n}} + C_2 \frac{\log n}{n} \\
%    &+ \sqrt{\mathbb{E}_{\varepsilon} \lVert f^{k^{\tau}} - f^* \rVert_n^4} \sqrt{\mathbb{P}_{\varepsilon} \left( \lVert f^{k^{\tau}} - f^* \rVert_n^2 > 8 V(k^*) + C_1 \frac{\sqrt{\log n}}{\sqrt{n}} + C_2 \frac{\log n}{n} \right) }.
%\end{align*}
%
%Applying Ineq. (\ref{main_ineq}) and Ineq. (\ref{aux_inequal}), we obtain
\tiny
\begin{align*}
    &\mathbb{E} \lVert f^{k^{\tau}} - f^* \rVert_2^2 \leq  \frac{8\sigma^2}{k^{\textnormal{b/v}}} + C_1 \left( \frac{\sqrt{\log n \sigma^4}}{\sqrt{n}} + \frac{ \sqrt{\log n \sigma^2}}{\sqrt{n}} + \frac{\log n \sigma^2}{n} \right) + \frac{c}{n^{2/(2+d)}} \\
    &+ \sqrt{\mathbb{E}\lVert f^{k^{\tau}} - f^* \rVert_2^4}\sqrt{\mathbb{P}\left(\lVert f^{k^{\tau}} - f^* \rVert_2^2 > 8 V(k^{\textnormal{b/v}}) + C_1 \left( \frac{\sqrt{\log n \sigma^4}}{\sqrt{n}} + \frac{\sqrt{\log n \sigma^2  }}{\sqrt{n}} + \frac{\log n \sigma^2}{n} \right) + \frac{c}{n^{2/(2+d)}} \right)}. 
\end{align*}
\small
Combining Ineq. (\ref{to_exp_from_proba}), Ineq. (\ref{ineq_for_the_norms}), and Ineq. (\ref{aux_inequal}), we obtain
\begin{align*}
    &\mathbb{E}\lVert f^{k^{\tau}} - f^* \rVert_2^2 \leq  \frac{8\sigma^2}{k^{\textnormal{b/v}}}  + C_1 
  \left( \frac{\sqrt{\log n \sigma^4}}{\sqrt{n}} + \frac{\sqrt{\log n \sigma^2}}{\sqrt{n}} + \frac{\log n \sigma^2}{n} \right) + \frac{c}{n^{2/(d+2)}} \\&+ \sqrt{2\max \left( \sigma^2, 8L^2 \right)} \sqrt{\frac{19}{n} + 5 \exp\left(-\frac{L^2}{\sigma^2} n\left(1 - \frac{1}{n^{1.5}2^{n/2}}\right)^2\right) + \frac{5 c_1}{\sqrt{n \left( n/2 - 1 \right)}} + C_1 \exp \left( -c_1 n^{\frac{2}{d+2}} \right) + 4\exp \left( - \widetilde{c} n^{\frac{d-2}{d+2}} \right)}.
\end{align*}
The claim follows from the facts that $\frac{1}{\sqrt{n \left( n/2 - 1 \right)}} \leq \frac{5}{n}$ for $n \geq 3$, $\exp \left( -n \left( 1 - \frac{1}{n^{1.5}2^{n/2}} \right)^2 \right) \leq \frac{1}{n}$, and the definition (\ref{bias_variance_df}) of $k^{\textnormal{b/v}}$.

\end{document}